\newtheorem{theorem}{Theorem}
\newtheorem{proposition}{Proposition}
\newtheorem{assumption}{Assumption}
\newtheorem{definition}{Definition}
\begin{document}
\title{Bayesian Active Learning by Disagreements: \\ A Geometric Perspective}


%

\author{Xiaofeng Cao and 
       Ivor W.  Tsang
\IEEEcompsocitemizethanks{\IEEEcompsocthanksitem \emph{X. Cao and I. W. Tsang are with the  
Australian Artificial Intelligence Institute,  University of Technology Sydney, NSW 2008, Australia. 
E-mail: ivor.tsang@uts.edu.au. This work was supported  
 by Australian Research Council  under Grant   DP180100106 and DP200101328.}}\protect
\thanks{Manuscript received xx xx, xxxx; revised xx xx, xxxx. }}

%
%

\markboth{Journal of \LaTeX\ Class Files,~Vol.~14, No.~8, August~2015}%
{Shell \MakeLowercase{\textit{et al.}}: Bare Demo of IEEEtran.cls for Computer Society Journals}
%

\IEEEtitleabstractindextext{%
\begin{abstract}\justifying
We present  geometric Bayesian active learning by disagreements (GBALD), a framework that performs   BALD on its core-set construction interacting with model uncertainty estimation.
Technically, GBALD   constructs  core-set   on ellipsoid, not  typical sphere, preventing low-representative elements from spherical boundaries. The improvements  are twofold:
1) relieve   uninformative  prior  and 2) reduce  redundant  estimations. 
Theoretically,    geodesic search with ellipsoid  can   derive   tighter  lower  bound  on error and  easier to achieve zero error than with sphere. 
Experiments show that GBALD has slight perturbations to noisy and repeated samples, and  outperforms  BALD, BatchBALD and  other existing deep active learning  approaches. 
 \end{abstract}
\begin{IEEEkeywords}
Geometric Bayesian, deep active learning,  core-set, model uncertainty, ellipsoid.
\end{IEEEkeywords}}
 
  \maketitle

\section{Introduction}

Deep  neural networks (DNNs)   lack the ability of learning from limited (insufficient) labels, which degenerates  its    generalizations to    new tasks. 
Recently, leveraging the abundance of unlabeled data has become a potential solution to relieve this bottleneck whereby the expert knowledge is involved to perform annotations.
In such setting, the deep learning researchers introduced the \textbf{active learning (AL)} \cite{gal2017deep}, which solicit experts' annotations from  those informative or representative unlabeled data, by maximizing the model uncertainty  \cite{ashukha2019pitfalls,lakshminarayanan2017simple} of the current learning model. During this AL process, the learning model tries to achieve a desired accuracy performance using the  minimal data labeling.  Recent shift of    model uncertainty in many  fields  shows that   deep Bayesian AL \cite{pinsler2019bayesian,kirsch2019batchbald} 
  contributes the   Bayesian neural networks training \cite{blundell2015weight}, 
 Monte-Carlo (MC) dropout  \cite{gal2016dropout}, and Bayesian core-set construction \cite{DBLP:conf/iclr/SenerS18}, etc.

\textbf{Bayesian AL}   \cite{golovin2010near,jedoui2019deep} presents  an expressive probabilistic  interpretation on the model uncertainty estimation \cite{gal2016dropout}.  Theoretically, for a simple regression model  such as linear, logistic, and probit, AL can derive their closed-forms on updating one sparse subset, which   maximally
reduces  the uncertainty  of   posteriors over   regression  parameters \cite{pinsler2019bayesian}. However, for a DNN model, optimizing massive  training parameters is not easily tractable. It is thus that the Bayesian approximation provides  alternatives, including  the importance sampling \cite{doucet2000sequential} and the Frank-Wolfe optimization \cite{vavasis1992approximation}.  With  importance sampling, a typical Bayesian AL approach   can be  expressed as to maximizing  the information gain in terms of  predictive entropy over the model, and it is called  Bayesian active learning by disagreements (BALD) \cite{houlsby2011bayesian}.

 \textbf{BALD} has two  interpretations:  model uncertainty estimation and  core-set construction. To estimate the uncertainty of a model, a greedy strategy is usually applied. The criterion is to select those data that maximize  the parameter disagreements between the current training model and its following updates as  \cite{gal2017deep}.   However, naively interacting with BALD using an uninformative prior \cite{strachan2003bayesian,price2002uninformative} leads to unstable biased acquisitions \cite{gao2019consistency}, for example,  insufficient or unbalanced prior labels. Under this setting,   an uninformative prior can be constructed to reflect a balanced state among   different Bayesian outcomes, if there is  no  available information. Moreover, the similarity or consistency of those acquisitions to their previous,  brings some redundant information  to the model, and  may decelerate the subsequent training. 

  \begin{figure} 
\centering
\includegraphics[scale=0.25]{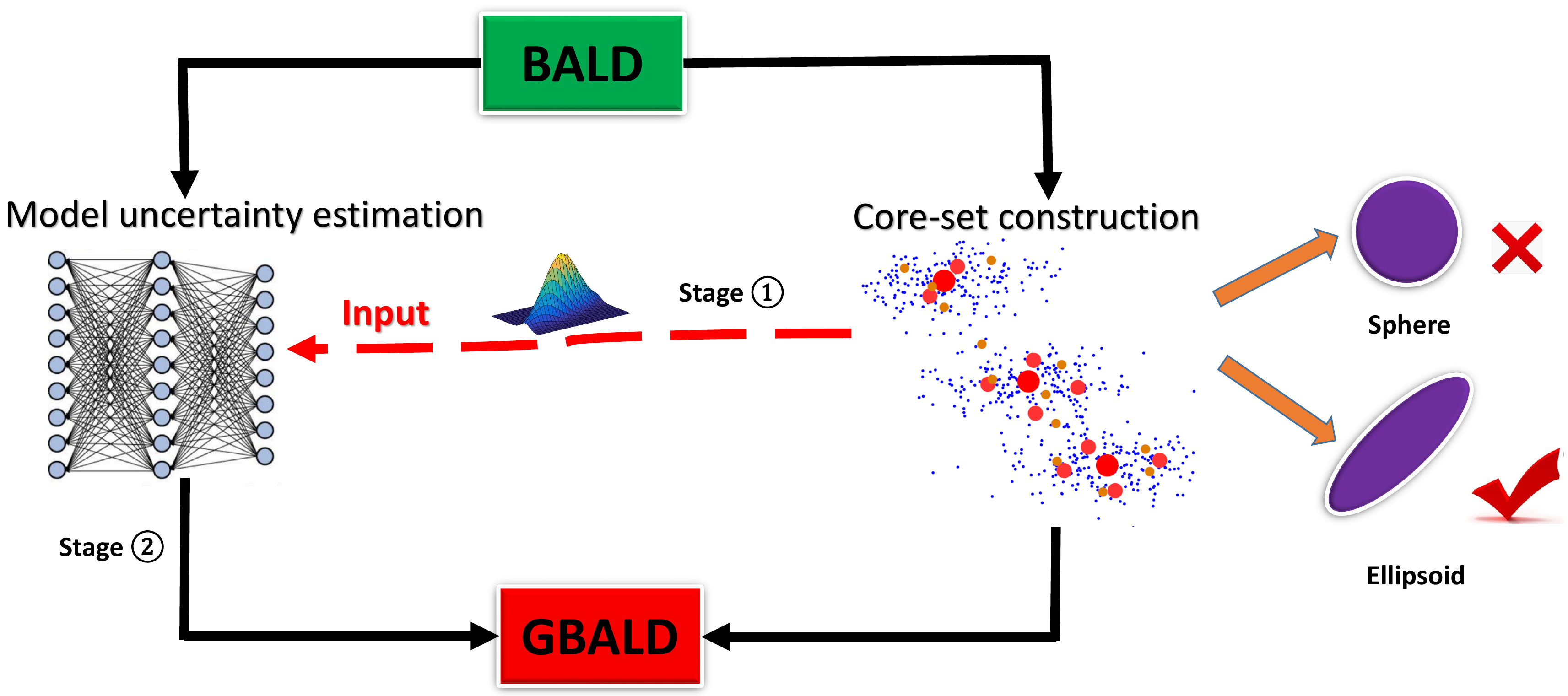}
\caption{{Illustration of  two-stage GBALD framework, which integrates model uncertainty estimation and  core-set construction  into a uniform framework.  Stage $\textcircled{1}$:  core-set construction is with  ellipsoid, not typical sphere, representing the original distribution to initialize  the input features of DNNs. Stage $\textcircled{2}$: model uncertainty estimation   with those initial acquisitions  then  explores   highly  informative and  representative samples for DNN.}
 } 
\label{Illustration}
\end{figure}

\textbf{Core-set construction}  \cite{campbell2018bayesian} avoids the greedy interaction to  learning model via  capturing   the characteristics of   data distributions. 
By approximating   complete data posterior over   model parameters, optimization of BALD can be deemed as a  core-set construction process on a sphere \cite{kirsch2019batchbald}, which seamlessly solicits a compact subset to draw  the input  data distribution, and efficiently  mitigates the sensitivity to   uninformative prior and   redundant information.


From a  geometric perspective,    updates of  core-set construction are usually optimized with   spherical geodesic    as \cite{nie2013early,wang2019incorporating}. Once the core-set is obtained,   deep AL algorithm immediately seeks  annotations from  knowledgeable experts  and   starts the training. However,  
the   data located at    the  boundary  regions of  distributions, usually with a uniform  manner,
 could not be  highly-representative   elements of core-set. Therefore,  constructing   core-set on   sphere may not be the optimal choice for deep AL.

This paper presents a novel AL framework, namely \textbf{Geometric BALD (GBALD)}, over the geometric interpretation of BALD that,  interpreting  BALD with the core-set construction on  ellipsoid,   initializes   effective   representations  to estimate the model uncertainty.
The goal is to seek for 
 significant accuracy improvements  against an uninformative  prior and the redundant information.   Figure~\ref{Illustration} describes   this two-stage framework. In the first stage, 
  geometric core-set construction on  ellipsoid \cite{perrone2019learning}   initializes a set of effective acquisitions to start  a DNN model regardless of an uninformative prior. Taking  the core-set as the inputs,  the next stage ranks the batch acquisitions of model uncertainty according to their geometric representativeness,  and then solicits some highly-representative examples from the batch. With    representation constraints,
  the ranked acquisitions  reduce the probability of sampling those nearby samples of the previous  acquisitions,  preventing  redundant information.  
In order to explore   these improvements, following the typical approximately linear perceptron analysis \cite{sugiyama2006active},  
 our generalization analysis   shows that,  the lower bound of generalization errors  of geodesic search  with  ellipsoid,  is proven to be tighter than that of  geodesic search   with   sphere.  Achieving a nearly zero   error  by geodesic search  with  ellipsoid, is also  proven to have a higher probability than that of sphere. 

\par Contributions of this paper can be summarized from the geometric, algorithmic, and theoretical perspectives. 
\begin{itemize}
\item Geometrically, our key technical innovation is to construct  core-set on ellipsoid, not typical sphere, preventing low-representative elements from boundary distribution.

\item In term of the algorithm design, our work proposes a two-stage framework from a Bayesian perspective that sequentially introduces the core-set representation and  model uncertainty estimation, strengthening their performance “independently”. Moreover, different to the typical BALD optimizations, we present geometric solvers to construct a core-set and estimate model uncertainty using it, which result in a different perspective for Bayesian AL. 

\item Theoretically, to guarantee those improvements, our generalization analysis proves that, compared to the typical Bayesian spherical interpretation, the geodesic search with ellipsoid can derive a tighter lower error bound and achieve a higher probability to obtain a nearly zero error.  

\end{itemize}

The rest of this paper is organized as follows. In Section~2,
we first review the related work. Secondly, we elaborate  BALD   and   GBALD   in Sections~3 and 4, respectively.   
 Experimental results are presented in Section~5. Finally, we conclude
this paper in Section~6.

\section{Related work}
 \textbf{AL.} The early probability  support vector machine (SVM) proposed the concept of  AL \cite{cohn1994improving,schohn2000less} that    acquires the  data with   minimum  margin to effectively update the support vectors. Many AL acquisition algorithms   were then proposed to  relieve
 the training bottleneck of SVM, which results in  unsatisfied predictions, for example,  uncertainty sampling \cite{lewis1994sequential},  margin sampling \cite{scheffer2001active}, MC estimation of error reduction \cite{roy2001toward}, transductive experimental design \cite{yu2006active}, etc.  Given a learning   model without sufficient training labels,  those effective acquisitions reduce  the expensive  cost of   human annotations in many scenarios, e.g.   multiple correct outputs \cite{jedoui2019deep},  cost-sensitive classification \cite{krishnamurthy2019active}, adversarial training \cite{sinha2019variational}, etc. In theory,  the researchers studied the 
label complexity bound \cite{hanneke2007bound} (label demand before achieving a desired error threshold) and its noisy performance \cite{javdani2014near} of an AL   algorithm.
 
 \textbf{Model uncertainty.} In deep learning setting, AL was introduced to improve the training of   DNNs   by annotating a batch  of unlabeled data, where the data which  maximize the model uncertainty  \cite{lakshminarayanan2017simple} are the primary acquisitions. For example,  in   ensemble deep learning \cite{ashukha2019pitfalls}, 
out-of-domain uncertainty estimation selects those data, which do not follow the same distribution as the input training data;   in-domain uncertainty draws the data from the original input distribution, producing reliable probability estimates.  Gal \emph{et al.} \cite{gal2016dropout}   use MC dropout to estimate the predictive uncertainty via approximating a Bayesian convolutional neural network.   Lakshminarayanan \emph{et  al.}
\cite{lakshminarayanan2017simple} evaluate  the   uncertainty of   unlabeled data using a proper scoring rule, deriving  the sampling criteria of AL to fed the  DNNs.  

  \textbf{Bayesian AL.} Taking a Bayesian perspective \cite{golovin2010near}, AL can be deemed as  minimizing the Bayesian posterior risk with multiple label acquisitions over the input unlabeled data.  A potential informative approach is  to reduce the uncertainty about  model parameters using Shannon’s entropy \cite{tang2002active}.
 This can be interpreted as seeking the acquisitions for which the Bayesian parameters under the
posterior disagree about the outcome the most, so  this  acquisition algorithm is referred to as 
Bayesian active learning by disagreement (BALD)  \cite{houlsby2011bayesian}.    In applications, BALD was introduced into 
  natural language processing 
 \cite{siddhant2018deep},   text classification \cite{burkhardt2018semisupervised},   decision making \cite{javdani2014near}, data augmentation \cite{tran2019bayesian}, etc.

  \textbf{Deep AL.} Recently,    Gal \emph{et al.} \cite{gal2017deep}   proposed to cooperate  BALD with   DNNs to improve its   acquisition performance.   The unlabeled data which maximize  the model uncertainty of DNNs provide positive feedback. However, it needs to repeatedly update the model until the acquisition budget is exhausted. To improve the acquisition efficiency,  batch sampling with BALD  is applied as    \cite{kirsch2019batchbald, pinsler2019bayesian}. In BatchBALD,  Kirsch \emph{et al.} \cite{kirsch2019batchbald}   developed a tractable approximation to the mutual information of one   batch of unlabeled data and the  current  model parameters. However, those uncertainty evaluations of Bayesian AL whether in single or batch acquisitions all take a  greedy strategy, which leads to 
 computationally infeasible,  or excursive  parameter estimations.  Pinsler \emph{et al.}   \cite{pinsler2019bayesian}  thus approximated the   posterior  over the model parameters by a sparse subset, i.e. a core-set. Applying the Frank-Wolfe optimization \cite{vavasis1992approximation}, the batch acquisitions of a large-scale dataset can be efficiently derived, thereby interpreting closed-form solutions for the  core-set construction on  a linear or probit regression function.  As a consequence, the non-deep models obtained the theoretical guarantees from this optimization solver due to their tractable parameters.
However, for deep  AL, 
being short of interactions to DNNs is not able to maximally drive their model performance.  

\section{BALD}

BALD has two different interpretations:   model uncertainty estimation and   core-set construction. We simply introduce them in this section.

\subsection{Model uncertainty estimation}

We consider a discriminative model $p(y|x,\theta)$  parameterized by   $\theta$ that maps $x\in\mathcal{X}$  into an output distribution over a  set  of   $y\in \mathcal{Y}$. Given an  initial labeled (training) set $\mathcal{D}_0 \in \mathcal{X}\times \mathcal{Y}$, the Bayesian inference over this parameterized model is to estimate the posterior  $p(\theta|\mathcal{D}_0)$, i.e. estimate $\theta$ by repeatedly updating $\mathcal{D}_0$. AL  adopts this setting from a Bayesian perspective. 

\par With AL,  the learner can choose  a set of   unlabeled data from $\mathcal{D}_u=\{x_i\}_{j=1}^{N} \in \mathcal{X}$ via maximizing  the uncertainty  of the model parameters.   Houlsby \emph{et al.} \cite{houlsby2011bayesian} proposed  a greedy strategy termed BALD to  update  $\mathcal{D}_0$ by estimating  a desired data $x^*$ that maximizes the decrease in expected posterior entropy:
\begin{equation}
\begin{split}
x^*=\operatorname*{arg\ max}_{x\in  \mathcal{D}_u }  {\rm\bm{H}} [\theta|\mathcal{D}_0]-\mathbb{E}_{y\sim p(y|x,\mathcal{D}_0)}   \Big[{\rm\bm{H}}[\theta|x,y,\mathcal{D}_0]\Big],
\end{split}
\end{equation}
where the labeled and unlabeled sets are updated by $\mathcal{D}_0 =\mathcal{D}_0 \cup \{x^*, y^* \},  \mathcal{D}_u =  \mathcal{D}_u  \backslash  x^*$, and $y^*$ denotes the output    of $x^*$. In deep AL, $y^*$ can be annotated as a label from experts and $\theta$  yields a DNN model.

\subsection{Core-set construction}
 Let $p(\theta|\mathcal{D}_0)$ be updated by its log posterior  ${\rm log} p(\theta|\mathcal{D}_0, x^*)$,    $y^*\in \{y_i\}_{i=1}^N$, { assume the outputs  are conditional independent of the inputs, i.e.  $p(y^*|x^*,D_0)=\int_\theta p(y^*|x^*, \theta)  p(\theta|D_0)  {\rm d} \theta $},  then we have the \emph{complete data log posterior following} \cite{pinsler2019bayesian}:

 \begin{equation}
\begin{split}
&\mathbb{E}_{y^*}[{\rm log} p( \theta|\mathcal{D}_0, x^*, y^*) ]\\
&=\mathbb{E}_{y^*}[ {\rm log} p(\theta|\mathcal{D}_0) + {\rm log}  p(y^*|x^*, \theta)- {\rm log}p(y^*|x^*, \mathcal{D}_0) ] \\
              &=        {\rm log} p(\theta|\mathcal{D}_0) + \mathbb{E}_{y^*}\Big[{\rm log}  p(y^*|x^*, \theta)+{\rm\bm{H}}[y^*|x^*,\mathcal{D}_0]\Big]\\
               &= {\rm log} p(\theta|\mathcal{D}_0) +\sum_{i=1}^{N} \Bigg(\mathbb{E}_{y_i} \Big[{\rm log}  p(y_i|x_i, \theta)+{\rm\bm{H}}[y_i|x_i,\mathcal{D}_0] \Big]   \Bigg). 
\end{split}
\end{equation}

{The key idea of  the core-set construction is to approximate the log posterior of Eq.~(2) by a subset of $D_u' \subseteq  D_u$ such that:
$\mathbb{E}_{\mathcal{Y}_u}[{\rm log} p( \theta|\mathcal{D}_0, \mathcal{D}_u, \mathcal{Y}_u) ]\approx \mathbb{E}_{\mathcal{Y}'_u}[{\rm log} p( \theta|\mathcal{D}_0, \mathcal{D}_u', \mathcal{Y}_u') ], $
where $\mathcal{Y}_u$ and $\mathcal{Y}'_u$  denote  the predictive labels of  $\mathcal{D}_u$ and $\mathcal{D}'_u$ respectively by the Bayesian discriminative model,  that is, $p(\mathcal{Y}_u|\mathcal{D}_u,D_0)=\int_\theta p(\mathcal{Y}_u|\mathcal{D}_u, \theta)  p(\theta|D_0)  {\rm d}  \theta $,  and $p(\mathcal{Y}_u'|\mathcal{D}_u',D_0)=\int_\theta p(\mathcal{Y}_u'|\mathcal{D}_u', \theta)  p(\theta|D_0)  {\rm d}  \theta $.
Here $D_u'$ can be indicated by a core-set  \cite{pinsler2019bayesian} that highly represents $\mathcal{D}_u$. The optimization tricks such as the Frank-Wolfe optimization \cite{vavasis1992approximation} then can be adopted to solve this problem.}

\textbf{Motivations.} Eqs.~(1) and (2) provide the Bayesian rules of BALD over   model uncertainty and   core-set construction  respectively,    which further attract the attention of   deep learning researchers.  However, the two interpretations of  BALD are  limited by: 1) the redundant information  and 2) an uninformative   prior, where one major reason which causes these two issues is the poor initialization on the prior, i.e. $p(\mathcal{D}_0|\theta)$. For example, an unbalanced label initialization on $\mathcal{D}_0$ usually leads to an uninformative prior, which further conducts the acquisitions  of AL to select those unlabeled data from one or some fixed classes;   highly-biased results \cite{gao2019consistency} with  redundant information  are inevitable.  Therefore, these two limitations affect each other. 

  \section{GBALD }

\begin{figure}
\subfloat[Sphere geodesic]{
\label{fig:improved_subfig_b}
\begin{minipage}[t]{0.25\textwidth}
\centering
\includegraphics[width=1.2in,height=1.12in]{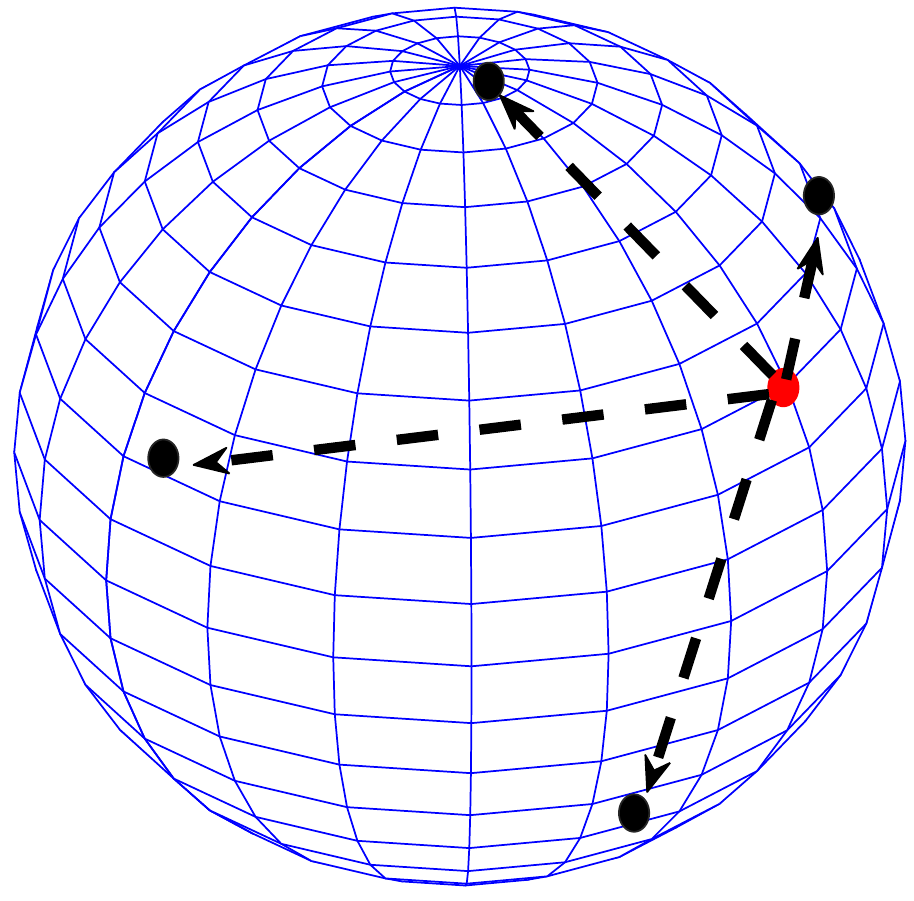}
\end{minipage}
}
\subfloat[Ellipsoid geodesic \ \ \ \ \ \ ]{
\label{fig:improved_subfig_b}
\begin{minipage}[t]{0.25\textwidth}
\centering
\includegraphics[width=0.8in,height=1.08in]{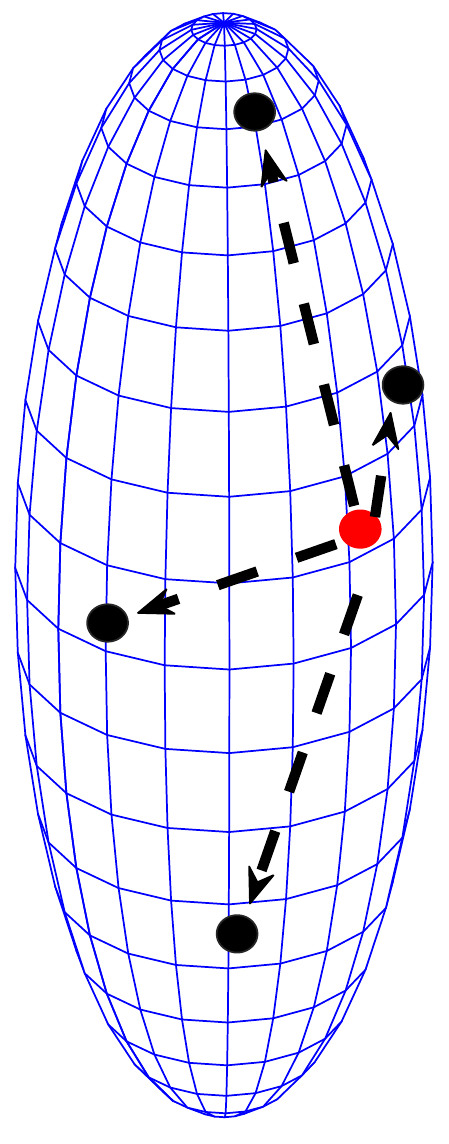}
\end{minipage}
}
\caption{Optimizing BALD with sphere and  ellipsoid geodesics. Ellipsoid geodesic rescales the sphere geodesic to prevent  the updates of  core-set towards  the  boundary regions of the sphere where 
  characteristics of spherical    distribution cannot be properly  captured. Note that the black points denote the feasible  updates of the red points and the dash lines denote the geodesics.}   
\end{figure}

 GBALD consists of two components: 1) initial acquisitions based on   core-set construction and   2)   model uncertainty estimation with those initial acquisitions.

\subsection{Geometric  interpretation of core-set}

Modeling the complete data posterior  over   parameter distribution
can relieve the  two limitations of BALD, which has been stated at the end of Section~3.2. Typically, optimizing the acquisitions of  Bayesian  AL  is  equivalent   to   approximating   a core-set  centered with the  spherical embeddings  \cite{DBLP:conf/iclr/SenerS18}. Let $w_i$  be the sampling weight of $x_i$, $\|w_i\|_0\leq N$, the core-set  construction is to optimize:
 \begin{equation}
\begin{split}
\operatorname*{min}_{ w}  &   \ \ \Bigg \|\underbrace{\sum_{i=1}^N   \mathbb{E}_{y_i}\Big[{\rm log}  p(y_i|x_i, \theta)+{\rm\bm{H}}[y_i|x_i,\mathcal{D}_0]\Big]}_{\mathcal{L}}\\
 &  -\underbrace{\sum_{i=1}^N w_i \mathbb{E}_{y_i}\Big[{\rm log}  p(y_i|x_i, \theta)+{\rm\bm{H}}[y_i|x_i,\mathcal{D}_0]\Big]}_{\mathcal{L}(w)}  \Bigg \|^2,\\
  \end{split}
\end{equation}
where $\mathcal{L}$ and $\mathcal{L}(w)$ denote 
  the full   and   expected (weighted) log-likelihoods, respectively \cite{campbell2018bayesian, campbell2019automated}. Specifically,  $\sum_{i=1}^N{\rm\bm{H}}[y_i|x_i,\mathcal{D}_0]=-\sum_{y_i} p(y_i|x_i,\mathcal{D}_0){\rm log}(p(y_i|x_i,\mathcal{D}_0) $, where  $p(y_i|x_i,\mathcal{D}_0)=\int_\theta p(y_i|x_i,\theta) p(\theta|\mathcal{D}_0) {\rm d} \theta$.  Note $\|\cdot \|$ denotes the $\ell^2$ norm.

The approximation of Eq.~(3) implicitly requires  that the  complete data log posterior of Eq.~(2) w.r.t. $\mathcal{L}$  must be close to an expected   posterior w.r.t. $\mathcal{L}(w)$ such that approximating a sparse subset for the original inputs by sphere geodesic search is feasible (see Figure~2(a)). Generally, solving this optimization is intractable due to the cardinality constraint   \cite{pinsler2019bayesian}.  Campbell \emph{et al.} \cite{campbell2019automated} proposed to relax the constraint in    Frank–Wolfe  optimization, in which  mapping  $\mathcal{X}$ is usually  performed in a Hilbert space (HS) with a bounded inner product operation.  In this solution, the sphere embedded in  the HS replaces the  cardinality constraint with a polynomial constraint. However, the initialization on $\mathcal{D}_0$ affects the iterative approximation to $\mathcal{D}_u$  at the beginning of the  geodesic search. Moreover, the posterior of  $p(\theta|\mathcal{D}_0) $ is  uninformative, if the initialized $\mathcal{D}_0$ is   empty or not correct. Therefore, the typical Bayesian core-set construction of BALD cannot ideally fit an uninformative 
prior. The another geometric interpretation of core-set construction, such as $k$-centers \cite{DBLP:conf/iclr/SenerS18}, is not restricted  to this setting. We thus follow the construction of $k$-centers to find the core-set. 

\textbf{$\bm{k}$-centers.}   Sener  \emph{et al.}  \cite{DBLP:conf/iclr/SenerS18} proposed a core-set representation approach for   deep AL based on $k$-centers. This approach can be adopted in the core-set construction of BALD without the help of a discriminative (training) model. Therefore, the uninformative prior has no further influence to the core-set. Typically, the $k$-centers approach uses  a  greedy strategy to search the data  $\widetilde x$ whose nearest distance to the elements of $\mathcal{D}_0$ is the maximal:
\begin{equation}
 \widetilde x=\operatorname*{ arg \ max}_{ x_i\in  \mathcal{D}_u }    \min_{c_i\in  \mathcal{D}_0 } \|x_i-c_i\|,
\end{equation}
then  $\mathcal{D}_0$  is updated by   $\mathcal{D}_0 \cup \{ \widetilde x, \widetilde y\}$, $\mathcal{D}_u$  is updated by  $\mathcal{D}_u  \backslash  \widetilde x $, where $\widetilde y$ denotes the output    of $\widetilde x$. This max-min operation usually performs $k$ times to construct the centers.

From a  geometric perspective, the ${k}$-centers   can be deemed as   the core-set construction via the spherical geodesic search as \cite{badoiu2002approximate,har2004coresets}. Specifically, the max-min optimization guides $\mathcal{D}_0$ to be updated into one  data  which  draws the longest geodesic from $x_i, \forall i$ across the sphere center. The iterative update on $\widetilde x$ is then along its unique diameter through the sphere center. 
However, this greedy optimization has a large probability that leads the core-set to fall into the boundary regions of the sphere, which is not able to capture the characteristics of the distributions. 
 
\subsection{Initial acquisitions based on core-set construction}
 We present a novel greedy search which rescales the geodesic of a sphere into an ellipsoid following Eq.~(4), in which the iterative update on the geodesic  search is rescaled  (see Figure~2(b)).  We follow the importance sampling strategy to begin the search. 

\textbf{Initial prior on geometry.} Initializing  $p(\mathcal{D}_0|\theta)$ is performed with 
 a group of internal spheres centered with $D_j, \forall j,$ subjected to $D_j \in \mathcal{D}_0$, in which  the geodesic  between $\mathcal{D}_0$ and the unlabeled data 
is over those spheres. Since  $\mathcal{D}_0$ is known, the specification of  
 $\theta$ then plays a key role on  initializing $p(\mathcal{D}_0|\theta) $. Given a radius $R_0$ for any observed internal sphere, $p(y_i|x_i,\theta)$ is firstly   defined by
\begin{equation}
p(y_i|x_i,\theta) =\left\{
\begin{aligned}
&\  \ \ \  \ \   \  \   \  1, & \exists j, \|x_i-D_j \|\leq  R_0,\\
 &{\rm max}\Bigg \{\frac{R_0}{\|x_i-D_j \|} \Bigg\}, & \forall j,  \|x_i-D_i \|>  R_0, \\
\end{aligned}
\right.
\end{equation}
thereby $\theta$ yields the parameter $R_0$.   When the data is enclosed with a   ball, the  probability of Eq.~(5) is 1. The data near the ball, is given a  probability of ${\rm max}\Big \{\frac{R_0}{\|x_i-D_j\|} \Big\}$ constrained by ${\rm min}  \|x_i-D_j \|, \forall j$, i.e. the probability  is assigned by the nearest ball to $x_i$, which is centered with  $D_j$. From Eq.~(3), the information entropy of $y_i\sim \{y_1,y_2,...,y_N\}$ over     $x_i\sim \{x_1,x_2,...,x_N\}$   can be expressed as the integral regarding $p(y_i|x_i,\theta)$:
\begin{equation}
\begin{split}&  
\sum_{i=1}^N{\rm H}(y_i|x_i,\mathcal{D}_0)\\
&=-\sum_{i=1}^N  \int_\theta p(y_i|x_i,\theta)p(\theta|D_0) d \theta{\rm log} \Big(\int_\theta p(y_i|x_i,\theta)p(\theta|D_0)\Big) d \theta,
\end{split}
\end{equation}
 which can be approximated by $ -\sum_{i=1}^N    p(y_i|x_i,\theta)   {\rm log} \Big(p(y_i|x_i,\theta)  \Big)$ following the details of Eq.~(3). In short, this  indicates an approximation to the entropy  over the entire outputs on $\mathcal{D}_u$ that  assumes the prior  $p(D_0|\theta)$ w.r.t.   $p(y_i|x_i,\theta)$  is already known from  Eq.~(5).

\textbf{Max-min optimization.}  Recalling the max-min optimization trick of $k$-centers in the core-set construction of \cite{DBLP:conf/iclr/SenerS18}, the minimizer of Eq.~(3) then can be divided into two parts:  $\operatorname*{ min}_{x^*} \mathcal{L} $   and $\operatorname*{ max}_{w}  \mathcal{L}(w) $, where $\mathcal{D}_0$ is updated by acquiring $x^*$. However, the updates of $\mathcal{D}_0$ decide     the minimizer of $\mathcal{L}$  with regard to  the internal  spheres 
centered with  ${D}_i, \forall i$. Therefore, minimizing $\mathcal{L}$ should be constrained  by  an unbiased full likelihood over $\mathcal{X}$ to alleviate the potential biases from the initialization of $\mathcal{D}_0$. Let  $\mathcal{L}_0$ denote the unbiased full likelihood over $\mathcal{X}$ that particularly stipulates $\mathcal{D}_0$ as the $k$-means centers written as $\mathcal{U}$ of $\mathcal{X}$ which jointly draw the input distribution. We define $\mathcal{L}_0= |{\sum_{i=1}^N  \mathbb{E}_{y_i}[{\rm log}  p(y_i|x_i, \theta)+{\rm\bm{H}}[y_i|x_i,\mathcal{U}]]}|$ to regulate $\mathcal{L}$, that is  
 \begin{equation}
\begin{split}
&\operatorname*{ min}_{x^*} \|\mathcal{L}_0- \mathcal{L}\|^2,\ \ {\rm s.t.}\  \mathcal{D}_0 =\mathcal{D}_0    \cup \{x^*, y^* \},  \mathcal{D}_u =  \mathcal{D}_u  \backslash  x^*. \\
   \end{split}
 \end{equation}
 The other sub optimizer is  $\operatorname*{ max}_{w}  \mathcal{L}(w) $. We present a greedy strategy following Eq.~(1):
\begin{equation}
\begin{split}
&\operatorname*{  max}_{1\leq i \leq N}\  \operatorname*{ min }_{w_i}  \   \sum_{i=1}^N w_i \mathbb{E}_{y_i}[{\rm log}  p(y_i|x_i, \theta)+{\rm\bm{H}}[y_i|x_i,\mathcal{D}_0]]\\
&= \sum_{i=1}^N  w_i{\rm log}  p(y_i|x_i, \theta)-\sum_{i=1}^N w_i p(y_i|x_i, \theta)  {\rm log}  p(y_i|x_i, \theta), 
   \end{split}
 \end{equation}
which  can be further written as: $ \sum_{i=1}^N w_i{\rm log}  p(y_i|x_i, \theta)(1-{\rm log}  p(y_i|x_i, \theta)).$
{ Let $w_i=1, \forall i$ for unbiased estimation of the likelihood  $\mathcal{L}(w)$, } Eq.~(8)  can be simplified as
 \begin{equation}
\begin{split}
\operatorname*{  max}_{x_i\in  \mathcal{D}_u}\ \operatorname*{ min }_{D_j\in \mathcal{D}_0}     {\rm log} p(y_i|x_i, \theta),
   \end{split}
 \end{equation}
 where $p(y_i|x_i, \theta)$ follows Eq.~(5).
Combining Eqs.~(7) and (9), the optimization of Eq.~(3) is then transformed  as
 \begin{equation}
\begin{split}
x^*=\operatorname*{ arg \ max}_{x_j\in  \mathcal{D}_u}\ \operatorname*{ min}_{D_j\in \mathcal{D}_0}  \Bigg \{\|\mathcal{L}_0- \mathcal{L}\|^2 +  {\rm log} p(y_j|x_j, \theta) \Bigg \},
   \end{split}
 \end{equation}
 where $\mathcal{D}_0$ is updated by acquiring $x^*$, i.e. $\mathcal{D}_0= \mathcal{D}_0 \cup \{x^*, y^*\}$.

{\textbf{Geodesic line.} For a metric geometry $M$, a geodesic  line  is a curve $\gamma$ which projects its interval $I$ to $M$: $I\rightarrow M$,  maintaining everywhere locally a distance minimizer \cite{lou2020differentiating}. Given a constant $\nu>0$ such that for any $a, b\in I$ there exists a geodesic  distance $d(\gamma(a),  \gamma(b)):=\int_  a^b \sqrt{{g}_{\gamma(t)}(\gamma'(t),\gamma'(t))} d t $, where $\gamma'(t)$ denotes the  geodesic curvature, and ${g}$ denotes the metric tensor over $M$.  Here, we define $\gamma'(t)=0$, then ${g}_{\gamma(t)}(0,0)=1$ such that $d(\gamma(a),  \gamma(b))$ can be  generalized as a   segment of a straight line: $d(\gamma(a),  \gamma(b))=\|a-b\|$.} 
 
{\textbf{Ellipsoid geodesic distance.} For any observation points $p, q \in M$, if the spherical geodesic distance is defined as  $d(\gamma(p),  \gamma(q))=\|p-q\|$. The affine projection obtains its ellipsoid  interpretation: $d(\gamma(p),  \gamma(q))=\|\eta(p-q)\|$, where $\eta$ denotes  the affine factor subjected to $0<\eta< 1$.   }

{\textbf{Optimizing with  ellipsoid  geodesic search.}
The max-min optimization of Eq.~(10) is performed  on an ellipsoid  geometry  to prevent    the updates of the core-set towards  the  boundary regions,} where the  ellipsoid geodesic line  scales the original  update on the sphere. Assume    $x_i$ is the previous acquisition and  $x^*$ is the next desired acquisition, the ellipsoid geodesic rescales the position of $x^*$ as  $x_e^*=x_i+\eta (x^*-x_i)$. Then, we update this position of  $x_e^*$ to its nearest neighbor $x_j$ in the unlabeled data pool, i.e. $\operatorname*{arg \ min}_{x_j\in  \mathcal{D}_u}  \|x_j- x_e^*\|$, also can be written as
 \begin{equation}
\begin{split}
\operatorname*{arg \  min}_{x_j\in  \mathcal{D}_u} \Big\|x_j- [ x_i+\eta (x^*-x_i)]\Big\|.
  \end{split}
\end{equation}
To study the advantage of ellipsoid  geodesic search, Section~6 presents our   generalization analysis.  
 
\subsection{Model uncertainty estimation with core-set}
 GBALD starts the   model uncertainty estimation with those initial core-set acquisitions, in which it introduces a ranking  scheme to derive both informative and representative acquisitions.

\textbf{Single acquisition.} We follow \cite{gal2017deep} and  use MC dropout to perform  Bayesian  inference on the  neural network model.  It then leads to ranking the informative acquisitions with batch sequences is with high efficiency. We first present the ranking criterion by rewriting Eq.~(1) as  the batch returns:
\begin{equation}
\begin{split}
&\{x^*_1,x^*_2,...,x^*_b\}=\operatorname*{arg\ max}_{\{\hat x_1,\hat x_2,...,\hat x_b\} \subseteq \mathcal{D}_u }  \\
    &\Bigg\{{\rm\bm{H}} [\theta|\mathcal{D}_0]-\mathbb{E}_{\hat y_{1:b}\sim p(\hat y_{1:b}|\hat x_{1:b},\mathcal{D}_0)}   \Big[{\rm\bm{H}}[\theta|\hat x_{1:b},\hat y_{1:b},\mathcal{D}_0] \Big]\Bigg\},\\
\end{split}
\end{equation}
where $\hat x_{1:b}=\{\hat x_1,\hat x_2,...,\hat x_b\}$,  $\hat y_{1:b}=\{\hat y_1,\hat y_2,...,\hat y_b\}$,   $\hat y_i$ denotes the output of $\hat x_i$.
The informative acquisition    $x^*_t$ is then selected  from the ranked batch acquisitions $\hat x_{1:b}$ due to the highest (most) representation for the unlabeled data: 
\begin{equation}
\begin{split}
 x^*_t =\operatorname*{arg\ max}_{ x^*_i\in \{x^*_1,x^*_2,...,x^*_b\}}  \Bigg \{ \max_{D_j\in \mathcal{D}_0} \ p(y_i |x^*_i,\theta):=\frac{R_0}{\|x^*_i-D_j \|} \Bigg\},
\end{split}
\end{equation} 
where $t$ denotes the index of the final acquisition, subjected to $1\leq t \leq b$. This also adopts the max-min  optimization of  $k$-centers in Eq.~(4), i.e.   $ x^*_t =\operatorname*{arg\ max}_{ x^*_i\in \{x^*_1,x^*_2,...,x^*_b\}}     \min_{D_j\in \mathcal{D}_0} \   {\|x^*_i-D_j \|}.$

\textbf{Batch acquisitions.}
The  greedy strategy of Eq.~(13) can be written as  a batch of acquisitions by controlling its output  as a batch set, i.e. 
\begin{equation}
\begin{split}
\{ x^*_{t_1},...,x^*_{t_{b'}} \}=\operatorname*{arg\ max}_{ x^*_{t_1:t_{b'}} \subseteq \{x^*_1,x^*_2,...,x^*_b\}}  p(y^*_{t_1:t_{b'}} |x^*_{t_1:t_{b'}},\theta),
\end{split}
\end{equation}
where $ x^*_{t_1:t_{b'}}=\{ x^*_{t_1},...,x^*_{t_{b'}} \}$, $ y^*_{t_1:t_{b'}}=\{ y^*_{t_1},...,y^*_{t_{b'}} \}$,     $y^*_{t_i}$ denotes the output of $x^*_{t_i}$, $1\leq i \leq b'$, and $1\leq b'\leq b$. This setting can be used to accelerate the acquisitions of AL in a large  dataset. 

\section{Two-stage GBALD Algorithm}
 The   GBALD algorithm has two stages: 1) construct a core-set on ellipsoid (Lines 3 to 13), and 2) estimate   model uncertainty with a deep learning model  (Lines 14 to 21).  
 
Algorithmically, core-set construction is derived from the max-min optimization of Eq.~(10), then updated with ellipsoid geodesic w.r.t. Eq.~(11), where $\theta$ yields a geometric probability model w.r.t. Eq.~(5).  Importing the core-set into $\mathcal{D}_0$ derives the deep learning model to return $b$ informative acquisitions one time, where   $\theta$ yields a deep learning model. Ranking those samples, we select $b'$ samples with the highest representations as the batch outputs w.r.t. Eq.~(14). The iterations of batch acquisitions stop until its budget is exhaust. The final update on $\mathcal{D}_0$  is our acquisition set of AL. 
 

\begin{algorithm*}[!h] 
  \caption{Two-stage GBALD Algorithm}
   \textbf{Input:} Data set $\mathcal{X}$, core-set size  $N_{\mathcal{M}}$, batch returns $b$, batch output $b'$, iteration budget $\mathcal{A}$.  \\
   \textbf{Initialization:} $\alpha \leftarrow 0$, core-set $\mathcal{M}\leftarrow \emptyset$.\\ 
   \textbf{Stage \textcircled{1} begins:}  \\
Initialize $\theta$ to  yield a geometric probability model w.r.t. Eq.~(5).\\
Perform $k$-means to initialize  $\mathcal{U}$ to  $\mathcal{D}_0$.\\
Core-set  construction begins by  acquiring $x_i^*$, \\
\For {$i \leftarrow 1,2,...,N_{\mathcal{M}}$}{
$x_i^* \leftarrow  \operatorname*{ arg \ max}_{x_i\in  \mathcal{D}_u}\ \operatorname*{ min}_{D_i\in \mathcal{D}_0}  \Bigg \{\Big\|\mathcal{L}_0- \mathcal{L}\Big\|^2 +  {\rm log} p(y_i|x_i, \theta) \Bigg \},$ where $\mathcal{L}_0\leftarrow \Big|{\sum_{i=1}^N   \mathbb{E}_{y_i}[{\rm log}  p(y_i|x_i, \theta)+{\rm\bm{H}}[y_i|x_i,\mathcal{U}]]}\Big|$.\\
Ellipsoid  geodesic line  scales  $x_i^*$: 
$x_i^* \leftarrow \operatorname*{arg \  min}_{x_j\in  \mathcal{D}_u} \Big\|x_j- [ x_i+\eta (x^*-x_i)]\Big\|.$ \\
Update $x_i^*$ into core-set $\mathcal{M}$:  $\mathcal{M} \leftarrow x_i^*\cup \mathcal{M}$. \\
Update  $N  \leftarrow N-1$.\\
}
Import core-set to update $\mathcal{D}_0$: $\mathcal{D}_0\leftarrow \mathcal{M}\cup \mathcal{U}'$, where  $\mathcal{U}'$ updates each element of $\mathcal{U}$ into their nearest samples in $\mathcal{X}$.\\
  \textbf{Stage \textcircled{2} begins:}   \\
Initialize $\theta$ to  yield a deep learning model.\\
\While{$\alpha<\mathcal{A}$}{
Return $b$  informative deep learning acquisitions in one budget: $\{x^*_1,x^*_2,...,x^*_b\} \leftarrow\operatorname*{arg\ max}_{x\in  \mathcal{D}_u }  {\rm\bm{H}} [\theta|\mathcal{D}_0]-\mathbb{E}_{y\sim p(y|x,\mathcal{D}_0)}   \Big[{\rm\bm{H}}[\theta|x,y,\mathcal{D}_0] \Big]$.\\
Rank  $b'$ informative   acquisitions with  the highest  geometric representativeness: $\{ x^*_{t_1},...,x^*_{t_{b'}} \} \leftarrow\operatorname*{arg\ max}_{ x^*_i\in \{x^*_1,x^*_2,...,x^*_b\}}  p(y_i |x^*_i,\theta)$.\\
Update $\{ x^*_{t_1},...,x^*_{t_{b'}} \} $ into $\mathcal{D}_0$: $\mathcal{D}_0\leftarrow \mathcal{D}_0\cup \{ x^*_{t_1},...,x^*_{t_{b'}} \}$.\\
$\alpha\leftarrow \alpha+1$.\\
}
 \textbf{Output:}  final update on $\mathcal{D}_0$.
\end{algorithm*}

\section{Generalization errors of geodesic search with sphere  and ellipsoid}
Optimizing with ellipsoid geodesic  linearly rescales the spherical search, which draws core-set  on a tighter geometric object. The inherent motivation is that,   geodesic search with ellipsoid can prevent   the redundant updates of  core-set, avoiding those elements from spherical boundaries.  Following the approximately  perceptron analysis of \cite{sugiyama2006active}, this section presents generalization error analysis from geometry, which provides  feasible   guarantees for geodesic search with  ellipsoid.  
The proofs   are presented in Appendix.

 \subsection{Assumptions of generalization analysis}
 Let ${\rm Pr}[err(h,k)=0]_{\rm Sphere}$ and ${\rm Pr}[err(h,k)=0]_{\rm Ellipsoid}$ be the probabilities of achieving a zero  error by geodesic search with sphere and ellipsoid, respectively, we study their inequality relationship.  The assumptions are inspired from $\gamma$-tube manifold, which characterizes the probability mass of decision boundaries.

 Given $S_A$  be the sphere  that tightly covers class $A$  where $S_A$ is with a center $c_a$ and radius  $R_a$, the assumption on sphere is as follows.
\begin{assumption}\label{Assumption_sphere}
Ben-David et al. \cite{ben2008relating}  proposed that the $\gamma$-tube manifold    \cite{li2020finding}  can characterize the probability mass of an optimal version space-based hypothesis. From geometry, we here assume that the probability mass of  achieving a zero   error  
 by geodesic search  with sphere,  is roughly defined  as the volume ratio of the  $\gamma$-tube and   sphere, that is,   ${\rm Pr} [err(h,k)=0]_{\rm Sphere}:=\frac{\rm {Vol(Tube)}}{{\rm Vol(Sphere)}}$.  Given $\frac{\pi}{\varphi}={\rm arcsin} \frac{R_a-d_a}{R_a}$,   the   assumption  is formalized as 
\[{\rm Pr} [err(h,k)=0]_{\rm Sphere}=1-\frac{t_k^3}{R_a^3},\]
where   $t_k= \frac{R_a^2}{3}+\sqrt[3]{-\frac{\mu_k}{2 \pi}+\sqrt{\frac{\mu_k^2}{4 \pi^2}-\frac{ \pi^3  R_a^3}{27\pi^3}      }       }+\sqrt[3]{-\frac{\mu_k}{2 \pi}-\sqrt{\frac{\mu_k^2}{4 \pi^2}-\frac{ \pi^3  R_a^3}{27\pi^3}      }       }$, and  $\mu_k= (\frac{2k-4}{3k}-\frac{1}{\varphi}cos \frac{\pi}{\varphi}  )\pi R_a^3- \frac{4\pi R_b^3}{3k}$.
\end{assumption}

Given   class  $A$   is tightly covered by   ellipsoid   $E_a$, let  $R_{a_1}$ be the polar radius of   $E_a$, and $R_{a_2}$, $R_{a_3}$ be the equatorial radii of $E_a$,  the assumption on  ellipsoid  is as follows.

\begin{assumption}\label{Assumption_Ellipsoid}
Following Assumption~\ref{Assumption_sphere},   the probability  mass of achieving a zero  error  by geodesic search with ellipsoid, can be assumed  as the volume ratio of the  $\gamma$-tube and    ellipsoid, that is ${\rm Pr} [err(h,k)=0]_{\rm Ellipsoid}:=\frac{\rm {Vol(Tube)}}{{\rm Vol(Ellipsoid)}}$.  Then, the  assumption   is formalized as 
\[{\rm Pr} [err(h,k)=0]_{\rm Ellipsoid}=1-\frac{\lambda_{k_1}\lambda_{k_2}\lambda_{k_3} }{R_{a_1}R_{a_2}R_{a_3} },\]
where $\lambda_{k_i}= \frac{R_{a_i}^2}{3}+\sqrt[3]{-\frac{\sigma_{k_i}}{2 \pi}+\sqrt{\frac{\sigma_{k_i}^2}{4 \pi^2}-\frac{ \pi^3  R_{a_i}^3}{27\pi^3}      }       }+\sqrt[3]{-\frac{\sigma_{k_i}}{2 \pi}-\sqrt{\frac{\sigma_{k_i}^2}{4 \pi^2}-\frac{ \pi^3  R_{a_i}^3}{27\pi^3}      }       }$, and $\sigma_{k_i}= (\frac{2k-4}{3k}-\frac{\pi R_{a_i}  }{2\varphi}  )\pi R_{a_1}R_{a_2}R_{a_3}- \frac{4\pi R_{b_1}R_{b_2}R_{b_3}}{3k}$, $i=1,2,3$. 
\end{assumption}

The specifications of  of Assumptions~\ref{Assumption_sphere} and \ref{Assumption_Ellipsoid} are presented in Appendix A.2 and A.3, respectively. We next present our low-dimensional generalization analysis.

\subsection{Low-dimensional generalizations}
 Our generalization analysis begins from low-dimensional (3-D) sphere/ellipsoid  to high-dimensional   hypersphere/hyperellipsoid, where the high-dimensional settings can be extended into infinite dimensions.

 \subsubsection{Our settings}
\textbf{Geodesic search with sphere.}  With Assumptions~1 and 2, given a perceptron function $h:={w_1 x_1+w_2x_2+w_3}$, the  task is  to classify the two  classes $A$ and $B$ embedded  in a 3-D space. Let $S_A$ and $S_B$ be the spheres that tightly cover $A$ and $B$, respectively, where $S_A$ is with a center $c_a$ and radius  $R_a$,  and $S_B$ is with a center $c_b$ and radius $R_b$. 
  Under this setting,  our generalization  analysis is presented as follows.

\begin{theorem} \label{Geodesic_search_with_sphere}
With Assumptions~1 and 2, given a  perceptron  function $h={ w_1 x_1+w_2x_2+w_3}$ that classifies $A$ and $B$,  and a sampling budget $k$. By drawing core-set  on $S_A$ and $S_B$, the minimum distances to the boundaries of that core-set elements  of $S_A$ and $S_B$, are defined as  $d_a$ and  $d_b$, respectively.  Let  $err(h,k)$ be the classification error rate with respect to $h$ and $k$,  given $\frac{\pi}{\varphi}={\rm arcsin} \frac{R_a-d_a}{R_a}$,  we then have an inequality of error:
${\rm min} \Bigg \{ \frac{4R_a^3-(2R_a+t_k)(R_a-t_k)^2}{4R_a^3+4R_b^3},   \frac{4R_b^3-(2R_b+t_k')(R_b-t_k')^2}{4R_b^3+4R_a^3}\Bigg\}$\[< err(h,k)<\frac{1}{k}, \]
where $t_k= \frac{R_a^2}{3}+\sqrt[3]{-\frac{\mu_k}{2 \pi}+\sqrt{\frac{\mu_k^2}{4 \pi^2}-\frac{ \pi^3  R_a^3}{27\pi^3}      }       }+\sqrt[3]{-\frac{\mu_k}{2 \pi}-\sqrt{\frac{\mu_k^2}{4 \pi^2}-\frac{ \pi^3  R_a^3}{27\pi^3}      }       }$,  $\mu_k= (\frac{2k-4}{3k}-\frac{1}{\varphi}cos \frac{\pi}{\varphi}  )\pi R_a^3- \frac{4\pi R_b^3}{3k}$, $t_k'= \frac{R_b^2}{3}+\sqrt[3]{-\frac{\mu_k'}{2 \pi}+\sqrt{\frac{\mu_k'^2}{4 \pi^2}-\frac{ \pi^3  R_b^3}{27\pi^3}      }       }+\sqrt[3]{-\frac{\mu_k'}{2 \pi}-\sqrt{\frac{\mu_k'^2}{4 \pi^2}-\frac{ \pi^3  R_b^3}{27\pi^3}      }       } $,  and  $\mu_k'=  (\frac{2k-4}{3k}-\frac{1}{\varphi}cos \frac{\pi}{\varphi}  )\pi R_b^3- \frac{4\pi R_a^3}{3k}$.
\end{theorem}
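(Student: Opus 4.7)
My plan is to split the two-sided inequality into the upper bound $err(h,k) < 1/k$ and the lower bound given by the minimum of the two symmetric expressions, and treat them separately. The geometric picture is fixed by Assumption~\ref{Assumption_sphere}: two disjoint enclosing spheres $S_A$ and $S_B$ of radii $R_a$ and $R_b$, a perceptron hyperplane $h$ separating them, and a $\gamma$-tube around the decision boundary that controls what the $k$ core-set points placed via max-min geodesic search can certify as correctly classified. Throughout I would measure $err(h,k)$ as the ratio of total uncertified volume to the input volume $\frac{4\pi}{3}(R_a^3+R_b^3)$.

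For the upper bound, I would use a Voronoi-style covering argument on the max-min geodesic search. Each of the $k$ acquisitions owns a spherical cell whose measure is bounded by a $1/k$ fraction of the enclosing volume, so the worst-case misclassified region sits inside a single such cell and cannot exceed the $1/k$ share. Strict inequality comes from the fact that the boundary sample itself certifies a non-trivial sub-region of its cell, so the supremum $1/k$ is never attained.

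For the lower bound I would reinterpret $t_k$ from Assumption~\ref{Assumption_sphere} as the residual radius left uncovered on $S_A$ after $k$ geodesic acquisitions. The certified region of $S_A$ is then a spherical cap of height $R_a-t_k$ with volume $\frac{\pi}{3}(R_a-t_k)^2(2R_a+t_k)$ by the standard cap formula. Subtracting from $\frac{4\pi}{3}R_a^3$ and dividing by $\frac{4\pi}{3}(R_a^3+R_b^3)$ produces exactly $\frac{4R_a^3-(2R_a+t_k)(R_a-t_k)^2}{4R_a^3+4R_b^3}$, the first term inside the minimum; repeating with the roles of the two classes swapped yields the second term, and the error must be at least the contribution from whichever class retains the smaller certified cap.

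The main obstacle will be checking that the $t_k$ supplied by Assumption~\ref{Assumption_sphere} is indeed the right residual radius to plug into the cap formula. The Cardano-style expression indicates that $t_k$ solves a depressed cubic whose coefficients encode (i) the $\gamma$-tube geometry through $\frac{1}{\varphi}\cos\frac{\pi}{\varphi}$ with $\sin(\pi/\varphi)=(R_a-d_a)/R_a$, (ii) the budget-dependent depletion factor $(2k-4)/(3k)$ from the $k$ max-min geodesic steps, and (iii) the cross-class leakage $\frac{4\pi R_b^3}{3k}$. The delicate step is deriving this cubic by equating the uncovered cap volume against the tube-plus-leakage contribution; once verified, the substitutions into the cap formula and the division by the total enclosing volume are routine algebra.
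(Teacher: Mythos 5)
Your proposal follows essentially the same route as the paper's proof in Appendix A.4 for the substantive part of the claim. The paper likewise obtains $t_k=R_a-d_a$ as the positive root of the depressed cubic $\pi t^3-\pi R_a^2 t+\mu_k=0$, which arises from a volume-balance condition: the region $\mathcal{C}$ cut from the half-sphere (the half-sphere minus a cap $\mathcal{D}$ whose volume carries the $\frac{1}{\varphi}\cos\frac{\pi}{\varphi}$ correction via $\frac{\pi}{\varphi}=\arcsin\frac{R_a-d_a}{R_a}$) is required to equal the $\frac{1}{k}$ share $\frac{4\pi(R_a^3+R_b^3)}{3k}$ of the total enclosing volume; this is exactly your ``tube-plus-leakage'' balance, and the three ingredients of $\mu_k$ you identify are the right ones. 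The lower bound is then, as you describe, the complement of a spherical cap of height $d_a=R_a-t_k$ normalized by $\frac{4\pi}{3}(R_a^3+R_b^3)$, i.e. $\frac{4R_a^3-(2R_a+t_k)(R_a-t_k)^2}{4R_a^3+4R_b^3}$, with the symmetric case $R_a<R_b$ producing the second term and the minimum.

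The one place you genuinely diverge is the upper bound $err(h,k)<\frac{1}{k}$. You propose a Voronoi-style covering argument over the $k$ acquisitions; the paper instead uses the positivity of ${\rm Vol}(\mathcal{D})$ to deduce $\sqrt[3]{\tfrac{2}{k-2}}R_b<R_a$ and hence ${\rm Vol}(B)<\frac{1}{k}({\rm Vol}(A)+{\rm Vol}(B))$, then argues that the ambiguous point $q^*$ lies in $A$ and that the residual error region has relative measure below this $\frac{1}{k}$ share. Both arguments are heuristic and land on the same bound, but be aware that your Voronoi cells are not constructed anywhere in the paper, so if you pursue that route you would need to actually establish that the max-min search equidistributes cell measure to within the $1/k$ fraction --- a claim the paper sidesteps by tying $1/k$ directly to the definition of ${\rm Vol}(\mathcal{C})$. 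Your plan as stated would otherwise reproduce the paper's computation.
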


\textbf{Geodesic search with ellipsoid.}  With Assumptions~1 and 2, given   class  $A$ and $B$   are tightly covered by   ellipsoid   $E_a$ and $E_b$ in a 3-D space. Let  $R_{a_1}$ be the polar radius of   $E_a$, and 
$R_{a_2}, R_{a_3}$ be the equatorial radii of $E_a$,  $R_{b_1}$ be the polar radius of   $E_b$, 
and $R_{b_2}, R_{b_3}$ be the equatorial radii of $E_b$,  the generalization analysis is ready to present following these settings.

\begin{theorem}\label{Geodesic_search_with_ellipsoid}
With Assumptions~1 and 2, given a  perceptron  function $h={w_1 x_1+w_2x_2+w_3}$ that classifies $A$ and $B$,  and a sampling budget $k$.  By  drawing core-set on $E_a$ and $E_b$, the minimum distances to the boundaries of that core-set elements  of $S_A$ and $S_B$, are defined as  $d_a$ and  $d_b$, respectively.  Let  $err(h,k)$ be the classification error rate with respect to $h$ and $k$, given $\frac{\pi}{\varphi}={\rm arcsin} \frac{R_a-d_a}{R_a}$,  we then have an inequality of error:
${\rm min} \Bigg \{ \frac{4\prod_i \!R_{a_i}-(2R_{a_1}+\lambda_k)(R_{a_1}-\lambda_k)^2}{4\prod_i R_{a_i}+4\prod_i R_{b_i}}, \!\!  \frac{4\prod_i \!R_{b_i}-(2R_{b_1}+\lambda_k')(R_{b_1}-\lambda_k')^2}{4\prod_i R_{b_i}+4\prod_i R_{a_i}}\Bigg\}$
\[< err(h,k)<\frac{1}{k},\] 
where $i=1,2,3$,  $\lambda_k= \frac{R_{a_1}^2}{3}+\sqrt[3]{-\frac{\sigma_k}{2 \pi}+\sqrt{\frac{\sigma_k^2}{4 \pi^2}-\frac{ \pi^3  R_{a_1}^3}{27\pi^3}      }       }+\sqrt[3]{-\frac{\sigma_k}{2 \pi}-\sqrt{\frac{\sigma_k^2}{4 \pi^2}-\frac{ \pi^3  R_{a_1}^3}{27\pi^3}      }       }$,  $\sigma_k= (\frac{2k-4}{3k}-\frac{\pi R_{a_1}  }{2\varphi}  )\pi \prod_i R_{a_i}- \frac{4\pi \prod_i R_{b_i}}{3k}$,  
  $\lambda_k'= \frac{R_{b_1}^2}{3}+\sqrt[3]{-\frac{\sigma_k}{2 \pi}+\sqrt{\frac{\sigma_k'^2}{4 \pi^2}-\frac{ \pi^3  R_{b_1}^3}{27\pi^3}      }       }+\sqrt[3]{-\frac{\sigma_k}{2 \pi}-\sqrt{\frac{\sigma_k^2}{4 \pi^2}-\frac{ \pi^3  R_{b_1}^3}{27\pi^3}      }       }$, and   $\sigma_k'= (\frac{2k-4}{3k}-\frac{\pi R_{b_1}  }{2\varphi}  )\pi\prod_i R_{b_i}- \frac{4\pi\prod_i R_{a_i}}{3k}$.
\end{theorem}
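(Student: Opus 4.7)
The plan is to mirror the structure of Theorem~\ref{Geodesic_search_with_sphere}, replacing the spherical volume $\frac{4\pi}{3}R_a^3$ by the ellipsoidal volume $\frac{4\pi}{3}R_{a_1}R_{a_2}R_{a_3}$ and invoking the affine linear rescaling that maps the unit sphere onto the ellipsoid with semi-axes $(R_{a_1},R_{a_2},R_{a_3})$. This is precisely the affine map $p\mapsto \eta p$ already used to introduce the ellipsoid geodesic distance $d(\gamma(p),\gamma(q))=\|\eta(p-q)\|$ in Section~4.2, so the same geometric bookkeeping carries over: volumes scale by the product of semi-axes while the $\gamma$-tube around the perceptron boundary retains its cap-like profile along the polar direction.

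First I would set up the miss region on each ellipsoid. With $k$ core-set queries drawn on $E_a$ and $E_b$, the uncovered portion of the $\gamma$-tube lies within an ellipsoidal cap whose polar extent is $R_{a_1}-\lambda_k$. Pushing the spherical cap formula $V_{\mathrm{cap}}=\tfrac{\pi}{3}(2R+a)(R-a)^2$ through the affine map yields a cap volume proportional to $(2R_{a_1}+\lambda_k)(R_{a_1}-\lambda_k)^2$ after weighting by the equatorial radii; this is exactly the shape that appears in the numerator of the lower bound. Second, I would solve for $\lambda_k$ by equating the cap-to-ellipsoid volume ratio with the zero-error probability prescribed by Assumption~\ref{Assumption_Ellipsoid}, obtaining the cubic with coefficient $\sigma_k=(\tfrac{2k-4}{3k}-\tfrac{\pi R_{a_1}}{2\varphi})\pi\prod_i R_{a_i}-\tfrac{4\pi\prod_i R_{b_i}}{3k}$ whose principal root via Cardano's formula is the stated $\lambda_k$. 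The substitution of the factor $\tfrac{\pi R_{a_1}}{2\varphi}$ in place of $\tfrac{1}{\varphi}\cos\tfrac{\pi}{\varphi}$ from Theorem~\ref{Geodesic_search_with_sphere} reflects the ellipsoidal arc-length element along the polar axis, which integrates against $R_{a_1}$ rather than a spherical chord of length $R_a\cos\tfrac{\pi}{\varphi}$.

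For the upper bound $\tfrac{1}{k}$ I would reuse the pigeonhole argument from Theorem~\ref{Geodesic_search_with_sphere} verbatim: at most $k$ local regions can be certified by $k$ queries, so the classification error cannot exceed $1/k$ independently of whether the ambient body is a sphere or an ellipsoid. For the lower bound I would normalize each uncovered cap volume against the total volume of both classes, $4\prod_i R_{a_i}+4\prod_i R_{b_i}$, producing two candidate error estimates, one for missing mass in class $A$ and one for class $B$; the $\min$ of these two quantities appears in the statement because the perceptron's generalization error is controlled by the easier of the two failure modes.

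The main obstacle will be verifying that the affine rescaling faithfully transports the $\gamma$-tube geometry to the ellipsoid. In contrast to the sphere, an ellipsoidal planar section is not rotationally symmetric, so I must identify the correct slice whose volume reduces precisely to the product form $\lambda_{k_1}\lambda_{k_2}\lambda_{k_3}/(R_{a_1}R_{a_2}R_{a_3})$ demanded by Assumption~\ref{Assumption_Ellipsoid}; the three $\lambda_{k_i}$ arise from the same cubic applied once per semi-axis under the factorization of the Jacobian. Once this correspondence is established, matching the coefficients in $\sigma_{k_i}$ to the ellipsoidal cap equation and applying Cardano's formula delivers the closed-form $\lambda_k$, and the remaining manipulations to package the inequality are routine algebra parallel to the spherical case.
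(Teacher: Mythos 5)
Your proposal follows essentially the same route as the paper's Appendix~A.5: repeat the argument of Theorem~1 with $\frac{4\pi}{3}\prod_i R_{a_i}$ in place of $\frac{4\pi}{3}R_a^3$, approximate the ellipsoidal cap volume so that the term $\frac{1}{\varphi}\cos\frac{\pi}{\varphi}$ becomes $\frac{\pi R_{a_1}}{2\varphi}$, extract $\lambda_k$ as the Cardano root of the resulting cubic $\pi\lambda_k^3-\pi R_{a_1}^2\lambda_k+\sigma_k=0$, and take the minimum over the two cases $R_{a_1}\gtrless R_{b_1}$ with the same $\frac{1}{k}$ upper bound. One small correction: in the paper the cubic arises from the volume-balance condition ${\rm Vol}(\mathcal{C})=\frac{4\pi(\prod_i R_{a_i}+\prod_i R_{b_i})}{3k}$ on the cut region, not from equating a ratio to the zero-error probability of Assumption~2 (which is only invoked later, in Proposition~2); the coefficient $\sigma_k$ you wrote is nevertheless the correct one, so the argument lands in the same place.
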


\subsubsection{Our insights}
\textbf{Insight $\textcircled{1}$: tighter lower error bound.}
Let ${\rm lower}[err(h,k)]_{\rm Sphere}$ and ${\rm lower}[err(h,k)]_{\rm Ellipsoid}$ be the lower bounds of the generalization errors by geodesic search with sphere and ellipsoid, respectively. 
With $R_{a_1}<R_a$, compare Theorems~\ref{Geodesic_search_with_sphere} and \ref{Geodesic_search_with_ellipsoid}, we have the following proposition.
\begin{proposition}
Given a  perceptron  function $h={w_1 x_1+w_2x_2+w_3}$ that classifies $A$ and $B$,  and a sampling budget $k$. By  drawing core-set on $S_a$ and $S_b$,  let  $err(h,k)$ be the classification error rate with respect to $h$ and $k$, with Theorems~\ref{Geodesic_search_with_sphere} and \ref{Geodesic_search_with_ellipsoid}, the lower bounds of geodesic search with sphere and ellipsoid satisfy: ${\rm lower}[err(h,k)]_{\rm Ellipsoid}<{\rm lower}[err(h,k)]_{\rm Sphere}$.
\end{proposition}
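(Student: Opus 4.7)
The plan is to prove the inequality by a termwise comparison of the two minima in the lower bounds from Theorems~\ref{Geodesic_search_with_sphere} and~\ref{Geodesic_search_with_ellipsoid}: I would establish separately that the ``class $A$ term'' of the ellipsoid bound is strictly smaller than the ``class $A$ term'' of the sphere bound, and likewise for the ``class $B$ term''. Since $\min$ preserves strict inequalities, this yields ${\rm lower}[err(h,k)]_{\rm Ellipsoid}<{\rm lower}[err(h,k)]_{\rm Sphere}$.

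First I would simplify by expanding $(2R+t)(R-t)^2=2R^3-3R^2 t+t^3$, so that the class $A$ sphere term becomes $(2R_a^3+3R_a^2 t_k-t_k^3)/(4R_a^3+4R_b^3)$ and the class $A$ ellipsoid term becomes $(4\prod_i R_{a_i}-2R_{a_1}^3+3R_{a_1}^2\lambda_k-\lambda_k^3)/(4\prod_i R_{a_i}+4\prod_i R_{b_i})$. Next, using that the ellipsoid $E_a$ arises as an affine contraction of the covering sphere $S_A$ (the hypothesis $R_{a_1}<R_a$ together with the tight-cover convention implies $R_{a_i}\leq R_a$ for all $i$), I would conclude $\prod_i R_{a_i}<R_a^3$, and similarly $\prod_i R_{b_i}\leq R_b^3$. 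Hence the denominators satisfy the strict ordering $\prod_i R_{a_i}+\prod_i R_{b_i}<R_a^3+R_b^3$.

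The key step is to import the volumetric interpretation embedded in Assumptions~\ref{Assumption_sphere} and~\ref{Assumption_Ellipsoid}: the $\gamma$-tube volume is intrinsic to the decision boundary and therefore common to both geometries, while the enclosing volume shrinks when passing from sphere to ellipsoid. This forces ${\rm Pr}[err=0]_{\rm Ellipsoid}>{\rm Pr}[err=0]_{\rm Sphere}$, which via the closed forms given in the two Assumptions is equivalent to $\lambda_{k_1}\lambda_{k_2}\lambda_{k_3}/\prod_i R_{a_i}<t_k^3/R_a^3$. From the cubic defining equations for $t_k$ (through $\mu_k$) and $\lambda_k$ (through $\sigma_k$), and the monotonicity of these cubics in the radii entering their coefficients, one then extracts the pointwise bound $\lambda_k<t_k$, which controls the numerator.

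The principal obstacle is that \emph{both} the numerator and the denominator of the class $A$ term shrink when going from sphere to ellipsoid, so a naive ratio comparison is inconclusive; one must verify that the numerator shrinks proportionally more. I would handle this by observing that the map $t\mapsto 2R^3+3R^2 t-t^3$ is increasing on $[0,R]$, so that the combination of $R_{a_1}<R_a$ and $\lambda_k<t_k$ forces the ellipsoid numerator to be dominated by the sphere numerator by a larger multiplicative margin than the denominator is, closing the ratio comparison. The symmetric argument applied to the class $B$ term, followed by taking minima on both sides, completes the proof.
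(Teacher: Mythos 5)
Your plan attacks the exact expressions in Theorems~\ref{Geodesic_search_with_sphere} and \ref{Geodesic_search_with_ellipsoid} head-on, but it stalls at precisely the step you flag as ``the principal obstacle,'' and the resolution you offer there is not a proof. Monotonicity of $t\mapsto 2R^3+3R^2t-t^3$ on $[0,R]$ only tells you that the ellipsoid numerator and the ellipsoid denominator are \emph{both} smaller than their spherical counterparts; it gives no information about which shrinks by the larger multiplicative factor, and the assertion that the numerator ``is dominated by a larger multiplicative margin than the denominator'' is exactly the inequality the proposition claims, restated rather than derived. There are two further soft spots upstream: (i) extracting the pointwise bound $\lambda_k<t_k$ from the product inequality $\lambda_{k_1}\lambda_{k_2}\lambda_{k_3}/\prod_i R_{a_i}<t_k^3/R_a^3$ is invalid, since a product of three ratios being small does not control any single factor; and (ii) the probability inequality ${\rm Pr}[err=0]_{\rm Ellipsoid}>{\rm Pr}[err=0]_{\rm Sphere}$ that you import is Proposition~2 of the paper, whose proof there is itself deduced \emph{from} Proposition~1, so leaning on it risks circularity unless you establish it independently.

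The paper's own argument is much cruder and avoids the ratio comparison entirely: it does not compare the stated bounds of Theorems~\ref{Geodesic_search_with_sphere} and \ref{Geodesic_search_with_ellipsoid} at all, but instead re-relaxes both lower bounds by enclosing $S_a$ (resp.\ $E_a$) and the cut $\mathcal{C}$ in boxes, obtaining the surrogate bounds $err(h,k)>1-\frac{d_a}{R_a}$ for the sphere and $err(h,k)>1-\frac{d_a}{R_{a_1}}$ for the ellipsoid with the \emph{same} $d_a$, whereupon $R_{a_1}<R_a$ gives $1-\frac{d_a}{R_{a_1}}<1-\frac{d_a}{R_a}$ in one line. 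If you want to keep your termwise strategy, the missing ingredient is a genuine quantitative lemma comparing the relative shrinkage of numerator and denominator (which would require controlling $\lambda_k$ against $t_k$ through their defining cubics, not through the volume-ratio heuristic); otherwise the honest route is the paper's, namely to prove the comparison for a common relaxation of both bounds and state the conclusion at that level.
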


 \textbf{Insight $\textcircled{2}$: higher probability of achieving a  zero error.}
Let ${\rm Pr}[err(h,k)=0]_{\rm Sphere}$ and  ${\rm Pr}[err(h,k)=0]_{\rm Ellipsoid}$ be the probabilities of achieving a zero  error  of geodesic search with sphere and ellipsoid, respectively. Their relationship is presented in Proposition~2.
\begin{proposition}
Given a  perceptron  function $h={w_1 x_1+w_2x_2+w_3}$ that classifies $A$ and $B$,  and a sampling budget $k$. By  drawing core-set on $E_a$ and $E_b$,  let  $err(h,k)$ be the classification error rate with respect to $h$ and $k$, with Assumptions~\ref{Assumption_sphere} and \ref{Assumption_Ellipsoid}, the probabilities of geodesic search with sphere and ellipsoid satisfy: ${\rm Pr}[err(h,k)=0]_{\rm Ellipsoid}>{\rm Pr}[err(h,k)=0]_{\rm Sphere}$.
\end{proposition}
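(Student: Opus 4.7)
The plan is to substitute the explicit probability forms from Assumptions~\ref{Assumption_sphere} and~\ref{Assumption_Ellipsoid} and prove the resulting algebraic inequality
\[
\frac{\lambda_{k_1}\lambda_{k_2}\lambda_{k_3}}{R_{a_1}R_{a_2}R_{a_3}}\;<\;\frac{t_k^3}{R_a^3},
\]
which is equivalent to the statement ${\rm Pr}[err(h,k)=0]_{\rm Ellipsoid}>{\rm Pr}[err(h,k)=0]_{\rm Sphere}$ of Proposition~2.

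The guiding geometric intuition is that both sides equal $1-\mathrm{Vol(Tube)}/\mathrm{Vol(Shape)}$, where the $\gamma$-tube is pinned to the decision boundary and therefore invariant of the enclosing shape, while the ellipsoid $E_a$ covers class $A$ strictly more tightly than the circumscribing sphere $S_A$. I would first establish the axis-length bounds $R_{a_i}\leq R_a$ for $i=1,2,3$ (and similarly $R_{b_i}\leq R_b$), from which $R_{a_1}R_{a_2}R_{a_3}\leq R_a^3$ and $\mathrm{Vol(Ellipsoid)}<\mathrm{Vol(Sphere)}$ whenever the ellipsoid is a strictly tighter cover. With $\mathrm{Vol(Tube)}$ held fixed, a smaller denominator makes the tube occupy a larger proportion of the enclosing body, i.e.\ the success probability strictly grows, which is precisely the claim.

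To promote this intuition into a formal argument through the Cardano forms, I would show that the cubic roots $t_k$ and $\lambda_{k_i}$ are the quantities that encode $\mathrm{Vol(Sphere)}-\mathrm{Vol(Tube)}$ and $\mathrm{Vol(Ellipsoid)}-\mathrm{Vol(Tube)}$ respectively, so that $t_k^3/R_a^3$ and $\prod_i\lambda_{k_i}/\prod_i R_{a_i}$ both represent the non-tube volume fraction of the enclosing body. The desired inequality then reduces to the monotonicity of $V\mapsto 1-\mathrm{Vol(Tube)}/V$ in the enclosing volume $V$, which is immediate. A term-by-term comparison of $\mu_k$ and $\sigma_{k_i}$ across coordinate axes, combined with the bounds $R_{a_i}\leq R_a$ and $R_{b_i}\leq R_b$, then promotes the spherical inequality coordinate-wise into the product inequality claimed.

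The main obstacle will be reconciling the asymmetry between the defining coefficients: $\mu_k$ carries $-\tfrac{1}{\varphi}\cos\tfrac{\pi}{\varphi}$, reflecting a spherical arc-length scaling, while $\sigma_{k_i}$ carries $-\tfrac{\pi R_{a_i}}{2\varphi}$, reflecting an axis-dependent ellipsoidal arc-length. To make the tube-invariance precise I would argue that these are two alternative algebraic encodings of the same $\gamma$-tube cross-section, and then push the comparison through Cardano's formula with a case split on the sign of the discriminant $\tfrac{\mu_k^2}{4\pi^2}-\tfrac{\pi^3 R_a^3}{27\pi^3}$ and its ellipsoidal counterpart, covering both the one-real-root and three-real-roots regimes. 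Equality is attained only when the class is already spherical, so strict inequality holds generically.
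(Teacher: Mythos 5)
Your reduction of the claim to the inequality $\frac{\lambda_{k_1}\lambda_{k_2}\lambda_{k_3}}{R_{a_1}R_{a_2}R_{a_3}}<\frac{t_k^3}{R_a^3}$ is correct, and your high-level intuition (a tighter enclosing body makes the tube occupy a larger fraction) points in the right direction. However, your concrete plan has two problems. First, the premise that $\mathrm{Vol(Tube)}$ is ``pinned to the decision boundary and therefore invariant of the enclosing shape'' is inconsistent with the paper's own specifications of Assumptions~1 and~2: there the tube is the shell of thickness $d_a$ of the enclosing body itself, namely $\frac{4}{3}\pi R_a^3-\frac{4}{3}(R_a-d_a)^3$ for the sphere and $\frac{4}{3}\pi\prod_i R_{a_i}-\frac{4}{3}\prod_i(R_{a_i}-d_a)$ for the ellipsoid. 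These numerators are different, so the ``fixed numerator, smaller denominator'' monotonicity argument does not apply as stated, and the inequality cannot be obtained merely from $\mathrm{Vol}(E_a)<\mathrm{Vol}(S_A)$.

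Second, the plan to push the comparison through Cardano's formula with a case split on the discriminant, and a term-by-term comparison of $\mu_k$ against $\sigma_{k_i}$, misses the key simplification that makes the paper's proof three lines long: $t_k$ and $\lambda_{k_i}$ were \emph{defined} as $t_k=R_a-d_a$ and $\lambda_{k_i}=R_{a_i}-d_a$ before the cubic was ever solved (the Cardano expressions are just their closed forms). Substituting these back, with the same $d_a$ on both sides, the claim reduces to $\prod_{i}\bigl(1-\tfrac{d_a}{R_{a_i}}\bigr)<\bigl(1-\tfrac{d_a}{R_a}\bigr)^3$, which follows immediately from $R_{a_i}\le R_{a_1}<R_a$ (the latter being exactly the comparison already used in Proposition~1). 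No discriminant analysis or reconciliation of the $-\frac{1}{\varphi}\cos\frac{\pi}{\varphi}$ versus $-\frac{\pi R_{a_i}}{2\varphi}$ coefficients is needed; attempting that route would be far harder and is not obviously tractable. You should restructure the argument around the back-substitution $t_k=R_a-d_a$, $\lambda_{k_i}=R_{a_i}-d_a$ rather than around tube invariance and the Cardano roots.
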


Overall,   geodesic search with ellipsoid  is more effective than with   sphere, due to 1)  tighter lower error bound, and 2)   higher probability to achieve a zero error. 
\subsection{High-dimensional generalizations}

With the above insights,  we next present a connection between 3-D sphere/ellipsoid and ${\bm d}$-dimensional hyperesphere/hyperellipsoid, where ${\bm d}>3$. The major technique is to prove that the volume of the 3-D sphere and ellipsoid are   lower dimensional generalization of the ${\bm d}$-dimensional hypheresphere and hyperellipsoid, respectively. References can refer to $n$-sphere \cite{barnea1999hyperspherical}, \cite{blumenson1960derivation}, and volume prototypes of  hyperellipsoids \cite{kaymak2002fuzzy}. 
With volume generalization analysis,  all proofs from Theorems~1 to 2 and Propositions~1 and 2  can hold in  ${\bm d}$-dimensional geometry.

In the following, Theorems~3 and 4  then present a high-dimensional generalization for the above theoretical results, in terms of the volume functions of sphere and ellipsoid. 

\begin{theorem} Let ${\rm Vol}_{\bm d}(S_a)$ or ${\rm Vol}_{\bm d}(r_a)$ be the volume of   ${\bm d}$-dimensional hypersphere $S_a$ with a radius $r_a$, given $\vartheta \in [0,\pi]$,
by performing integral operation on any $({\bm d \text{-} 1})$-dimensional hypersphere, there exists ${\rm Vol}_{\bm d}$  can be approximated as ${\rm  Vol}_{\bm m}(S_a) =\int_{0}^{\pi/2} 2  {\rm Vol}_{\bm m\text{-}1}(r_acos(\vartheta)) r_a(cos(\vartheta))  d\vartheta$, 
and we define this operation as  $ {\rm  Vol}_{\bm m}(S_a) \bowtie {\rm Vol}_{\bm m\text{-}1}(S_a)$. 
 Then, we know   $   {\rm  Vol}_{\bm m\text{-}1}(S_a) \bowtie {\rm Vol}_{\bm m\text{-}2}(S_a)$, $ {\rm  Vol}_{\bm m\text{-}2}(S_a) \bowtie {\rm Vol}_{\bm m\text{-}3}(S_a)$, ..., $   {\rm  Vol}_{\bm 4}(S_a) \bowtie {\rm Vol}_{\bm 3}(S_a)$. With this progressive relationship, 
we can say ${\rm Vol}_{\bm 3}(S_a)$ is a low-dimensional generalization of ${\rm Vol}_{\bm d}(S_a)$.
\end{theorem}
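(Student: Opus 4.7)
The plan is to establish the single-step relation ${\rm Vol}_{\bm m}(S_a) \bowtie {\rm Vol}_{\bm m\text{-}1}(S_a)$ by a standard slicing argument and then iterate it down from dimension ${\bm d}$ to dimension $3$. First, I would parameterize the ${\bm m}$-dimensional hyperball $\{x\in\mathbb{R}^{\bm m}:\|x\|\le r_a\}$ by choosing a distinguished coordinate $t\in[-r_a,r_a]$ and observing that each orthogonal slice is the $({\bm m}\text{-}1)$-dimensional hyperball of radius $\sqrt{r_a^{2}-t^{2}}$. By Fubini's theorem this yields
\begin{equation*}
{\rm Vol}_{\bm m}(r_a)=\int_{-r_a}^{r_a}{\rm Vol}_{\bm m\text{-}1}\!\left(\sqrt{r_a^{2}-t^{2}}\right)dt.
\end{equation*}

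Second, I would apply the trigonometric substitution $t=r_a\sin\vartheta$, so that $dt=r_a\cos\vartheta\,d\vartheta$ and $\sqrt{r_a^{2}-t^{2}}=r_a\cos\vartheta$. Exploiting the symmetry of the integrand about $\vartheta=0$ and folding the domain onto $[0,\pi/2]$ reproduces precisely the integral formula in the statement, namely $\int_{0}^{\pi/2}2\,{\rm Vol}_{\bm m\text{-}1}(r_a\cos\vartheta)\,r_a\cos\vartheta\,d\vartheta$. This establishes the $\bowtie$ relation for every ${\bm m}\ge 2$.

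Third, I would iterate. Starting at ${\bm m}={\bm d}$ and chaining the single-step relation gives ${\rm Vol}_{\bm d}(S_a)\bowtie{\rm Vol}_{\bm d\text{-}1}(S_a)\bowtie\cdots\bowtie{\rm Vol}_{\bm 3}(S_a)$ after $({\bm d}\text{-}3)$ applications, so the $3$-D sphere volume appears as the innermost term of a finite tower of well-defined integral transforms. A sanity check against the classical closed form $\pi^{\bm d/2}r_a^{\bm d}/\Gamma({\bm d}/2+1)$ of the $n$-sphere references \cite{barnea1999hyperspherical,blumenson1960derivation} verifies that the iterated recursion indeed reproduces the hyperspherical volume, confirming that ${\rm Vol}_{\bm 3}(S_a)$ is a legitimate low-dimensional generalization of ${\rm Vol}_{\bm d}(S_a)$.

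The main obstacle I anticipate is expository rather than computational. The theorem conflates ${\rm Vol}_{\bm m}(S_a)$, the volume of a specific set, with ${\rm Vol}_{\bm m}(r_a)$, a function of radius, and the recursion is only meaningful if the radius is allowed to vary slice-by-slice. Care is therefore needed to keep the functional dependence on the radius explicit so that each application of $\bowtie$ is invoked at the correct argument $r_a\cos\vartheta$ rather than at the original $r_a$. A secondary subtlety is checking that the angular parameterization here is consistent with the hyperspherical coordinate conventions of the cited sources so that the chained recursion neither drops nor double-counts a trigonometric Jacobian factor.
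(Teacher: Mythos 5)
Your proposal is correct and follows essentially the same route as the paper's Appendix A.8: slice the ${\bm m}$-ball into $({\bm m}\text{-}1)$-dimensional cross-sections, substitute $t=r_a\sin\vartheta$ to obtain the recursion $\int_{0}^{\pi/2}2\,{\rm Vol}_{\bm m\text{-}1}(r_a\cos\vartheta)\,r_a\cos\vartheta\,d\vartheta$, and chain it down to dimension $3$. The only difference is presentational — you invoke Fubini at general ${\bm m}$ whereas the paper builds up explicitly from the 2-D and 3-D cases before asserting the general step — and your remark about keeping the radius argument $r_a\cos\vartheta$ explicit is a genuine clarification of a point the paper leaves loose.
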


 The proof skills of Theorem~3 can refer to  a mathematical perspective \footnote{\url{https://www.sjsu.edu/faculty/watkins/ndim.htm}}. Appendix A.8 also presents a machine learning proof skill. 
Moreover, the proof   of Theorem~3  can be adopted in the generalization of 3-D ellipsoid to ${\bm d}$-D hyperellipsoid.
\begin{theorem} Let ${\rm Vol}_{\bm d}(E_a)$ be the volume of a ${\bm d}$-dimensional  hyperellipsoid, given $\vartheta \in [0,\pi]$,
by performing integral operation on any $({\bm d\text{-}1}$)-dimensional hyperellipsoid, there exists    $   {\rm  Vol}_{\bm m-1}(S_a) \bowtie {\rm Vol}_{\bm m\text{-}2}(S_a)$, $ {\rm  Vol}_{\bm m\text{-}2}(S_a) \bowtie {\rm Vol}_{\bm m\text{-}3}(S_a)$, ..., $   {\rm  Vol}_{\bm 4}(S_a) \bowtie {\rm Vol}_{\bm 3}(S_a)$. With this progressive relationship, 
we can say 
${\rm Vol}_{\bm 3}(E_a)$ is a low-dimensional generalization of ${\rm Vol}_{\bm d}(E_a)$.
\end{theorem}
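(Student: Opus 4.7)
The plan is to mirror the argument behind Theorem~3 but to handle the $\bm m$ distinct semi-axes that characterise a hyperellipsoid. I would fix the canonical parameterisation $E_a = \{\,x\in\mathbb{R}^{\bm m}:\sum_{i=1}^{\bm m} x_i^2/R_{a_i}^2\le 1\,\}$, then slice along the last coordinate by setting $x_{\bm m}=R_{a_{\bm m}}\sin\vartheta$ with $\vartheta\in[-\pi/2,\pi/2]$. For each fixed $\vartheta$ the cross-section is again a hyperellipsoid, but one dimension lower and with every remaining semi-axis rescaled by $\cos\vartheta$; the change of variable contributes a factor $R_{a_{\bm m}}\cos\vartheta\,d\vartheta$ to the volume element.

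Second, I would read off the integral identity ${\rm Vol}_{\bm m}(E_a)=\int_{-\pi/2}^{\pi/2}{\rm Vol}_{\bm m\text{-}1}(E_a(\vartheta))\,R_{a_{\bm m}}\cos\vartheta\,d\vartheta$ and, using symmetry about $\vartheta=0$, fold it onto $[0,\pi/2]$ to obtain ${\rm Vol}_{\bm m}(E_a)=\int_{0}^{\pi/2}2\,{\rm Vol}_{\bm m\text{-}1}(E_a(\vartheta))\,R_{a_{\bm m}}\cos\vartheta\,d\vartheta$. This is exactly the template introduced as $\bowtie$ in Theorem~3, now instantiated for ellipsoids, so the relation ${\rm Vol}_{\bm m}(E_a)\bowtie{\rm Vol}_{\bm m\text{-}1}(E_a)$ holds. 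Iterating the same slicing on the $(\bm m\text{-}1)$-dimensional cross-section and continuing until only three semi-axes remain produces the chain ${\rm Vol}_{\bm m}(E_a)\bowtie{\rm Vol}_{\bm m\text{-}1}(E_a)\bowtie\cdots\bowtie{\rm Vol}_{\bm 3}(E_a)$ by straightforward induction on the dimension. Hence ${\rm Vol}_{\bm 3}(E_a)$ is a bona fide low-dimensional generalisation of ${\rm Vol}_{\bm d}(E_a)$ in the sense of Theorem~3.

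The main obstacle I anticipate is bookkeeping the semi-axis rescaling across iterations. Unlike the spherical case of Theorem~3, where a single radius $r_a$ is simply replaced by $r_a\cos\vartheta$ at every slice, each slice of the ellipsoid multiplies all remaining $R_{a_i}$ by $\cos\vartheta$, so after several iterations nested $\cos\vartheta_j$ factors accumulate and must be tracked jointly with the surviving semi-axes. I would neutralise this difficulty by supplementing the inductive argument with an affine-change-of-variables remark: the map $y_i=x_i/R_{a_i}$ converts $E_a$ into the unit hypersphere with constant Jacobian $\prod_{i=1}^{\bm m}R_{a_i}$, so ${\rm Vol}_{\bm m}(E_a)=\prod_{i=1}^{\bm m}R_{a_i}\cdot{\rm Vol}_{\bm m}(S_1)$. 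Plugging this into the already-established chain for spheres (Theorem~3) reproduces the chain for ellipsoids without any tedious nested-cosine bookkeeping, and provides a second, cleaner route that I would record as a remark alongside the direct slicing proof.
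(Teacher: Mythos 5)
Your proposal is correct and takes essentially the same route as the paper: the paper's proof of Theorem~4 is a one-line appeal to the Theorem~3 recursion, replacing the $2$-dimensional spherical cross-section ${\rm Vol}_{\bm 2}(S_a)$ by its elliptical counterpart ${\rm Vol}_{\bm 2}(E_a)$, which is precisely the slicing identity you derive explicitly (your version just fills in the cross-section rescaling and the $R_{a_{\bm m}}\cos\vartheta\,d\vartheta$ volume element that the paper leaves implicit). Your closing affine-change-of-variables remark with Jacobian $\prod_{i} R_{a_i}$ also coincides with the paper's own follow-up observation that the hyperellipsoid volume is obtained from the hypersphere formula by substituting $\prod_i R_{a_i}$ for the product of radii.
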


The volume of a hyperellipsoid can also be generalized by replacing the operation $\prod_i R_a$ by the polar radius  $\prod_i R_{a_i}$,  for $0<i<{\bm d}+1$,  in the formula for the volume of a hypersphere.   Proofs also can refer to a mathematical skill\footnote{\url{https://www.sjsu.edu/faculty/watkins/ellipsoid.htm}}.

\section{Experiments}
 In experiments, we start  by showing how BALD degenerates  its performance  with an uninformative prior and the redundant information,  and show that how our proposed GBALD   relieves theses limitations. 
\par Our  experiments discuss three questions: 1) is   GBALD using core-set of Eq.~(11)  competitive  with an uninformative prior? 2) can  GBALD using the ranking of Eq.~(14)  improve the informative acquisitions of model uncertainty? and 3) can GBALD outperform  the  state-of-the-art acquisition approaches?
Following the experiment settings of \cite{gal2017deep,kirsch2019batchbald}, we   use  MC dropout  to implement the Bayesian approximation of DNNs.
Three benchmark datasets are selected: MNIST, SVHN, and CIFAR10.

\subsection{Baselines}
To evaluate the performance of GBALD,  several typical baselines from the latest   deep AL literature  are selected.
\begin{itemize}

\item Bayesian active learning by disagreement (BALD) \cite{houlsby2011bayesian}.  It has been introduced in Section~3.

\item  Maximize variation ratio (Var) \cite{gal2017deep}. The algorithm chooses the unlabeled data that maximizes its variation ratio of the probability:
\begin{equation}
x^*=\operatorname*{ arg \ max}_{x\in  \mathcal{D}_u}  \Big\{  1-\max_{y\in\mathcal{Y}} \ {\rm Pr}(y|,x,\mathcal{D}_0)) \Big\}.
\end{equation}

\item  Maximize  entropy (Entropy) \cite{gal2017deep}. The algorithm chooses the unlabeled data that maximizes the predictive entropy:
  \begin{equation}
x^*=\operatorname*{ arg \ max}_{x\in  \mathcal{D}_u}  \Big\{   - \sum_{y\in \mathcal{Y}}{\rm Pr}(y|x,\mathcal{D}_0)){\rm log}\Big({\rm Pr}(y|x,\mathcal{D}_0)\Big) \Big\}.
\end{equation}

\item  $k$-modoids \cite{park2009simple}. A classical unsupervised algorithm that represents the input distribution by $k$ clustering centers:
\begin{equation}
\{x_1^*,x_2^*,...,x_k^*\}=\operatorname*{ arg \ min}_{z_1,z_2,...,z_k}  \Big\{ \sum_{i=1}^{k}\sum_{ z_i\in \mathcal{X}^k  }\|x_i-z_i\|        \Big\},
\end{equation}
where  $\mathcal{X}^k$ denotes the $k$-th subcluster centered with $z_i$, and $z_i\in \mathcal{X},\forall i$.

\item Greedy $k$-centers ($k$-centers) \cite{DBLP:conf/iclr/SenerS18}.  A geometric core-set interpretation on sphere.  See Eq.~(4).
\item BatchBALD \cite{kirsch2019batchbald}. A batch extension of BALD which incorporates the diversity, not maximal entropy as BALD,  to rank the acquisitions:
\begin{equation}
\begin{split}
&\{ x^*_{t_1},...,x^*_{t_b} \} \\
&=\operatorname*{arg\ max}_{   x_{t_1},...,x_{t_b}}   {\rm H}(y_{t_1},...,y_{t_b})   - {\rm E}_{p(\theta|\mathcal{D}_0)}[{\rm H}(y_{t_1},...,y_{t_b}| \theta) ],\\
\end{split}
\end{equation} 
where ${\rm H}(y_{t_1},...,y_{t_b})$ denote the expected entropy over all possible labels from $y_{t_1}$ to $y_{t_b}$ such that  ${\rm H}(y_{t_1},....,y_{t_b})={\rm E}_p(y_{t_1},..,y_{t_b})[- {\rm log} p(y_{t_1},..,y_{t_b}]$, and ${\rm E}_{p(\theta|\mathcal{D}_0)}[{\rm H}(y_{t_1},...,y_{t_b}| \theta) ]$ is estimated by  MC sampling \cite{roy2001toward} \cite{osborne2012active} a subset    from $\mathcal{X}$  which approximates the parameter distributions of $\theta$.
\end{itemize}

\textbf{Parameters of GBALD.} The parameter settings of Eq.~(5) are   $R_0=2.0e+3$ and $\eta=$0.9. Accuracy of each acquired  dataset  of the experiments are averaged over 3 runs.

\subsection{Uninformative priors} 
 
As discussed in the introduction, BALD is sensitive to an uninformative prior, i.e. $p(\mathcal{D}_0|\theta)$. We thus initialize $\mathcal{D}_0$ from a fixed class of the training data of  datasets to observe its acquisition performance. In this way,  $p(\mathcal{D}_0|\theta)$ is uninformative due to extreme label category. 

Figure~3 presents the prediction accuracies of BALD with an acquisition budget of 130 over the training set of MNIST. Based on the uninformative setting of 
$p(\mathcal{D}_0|\theta)$,   we randomly select 20 samples from   digit `0' and `1' to initialize $\mathcal{D}_0$, respectively. The   classification model of AL follows a   convolutional neural network (CNN)   with  one block of [convolution,
dropout, max-pooling, relu], with 32, 3x3 convolution filters,  5x5 max pooling, and 0.5  dropout rate.  In the AL loops, 
 we use 2,000 MC dropout   samples      from the unlabeled data pool to fit the training of the network  as with \cite{kirsch2019batchbald}. 
 
\begin{figure*} 
 
\subfloat[Digit `0']{
\label{fig:improved_subfig_b}
\begin{minipage}[t]{0.49\textwidth}
\centering
\includegraphics[width=2.72in,height=2.22in]{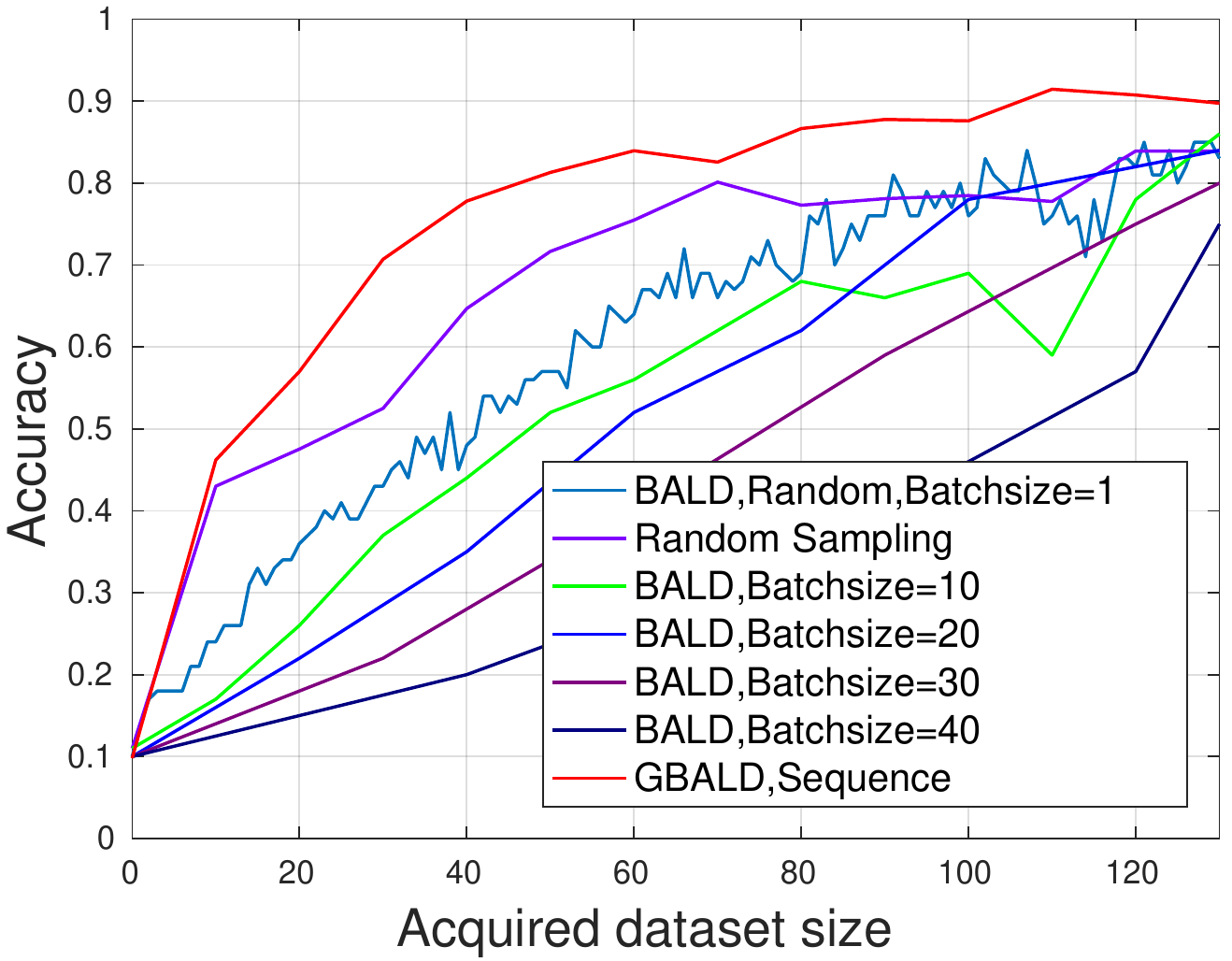}
\end{minipage}
}
\subfloat[Digit `1']{
\label{fig:improved_subfig_b}
\begin{minipage}[t]{0.49\textwidth}
\centering
\includegraphics[width=2.72in,height=2.22in]{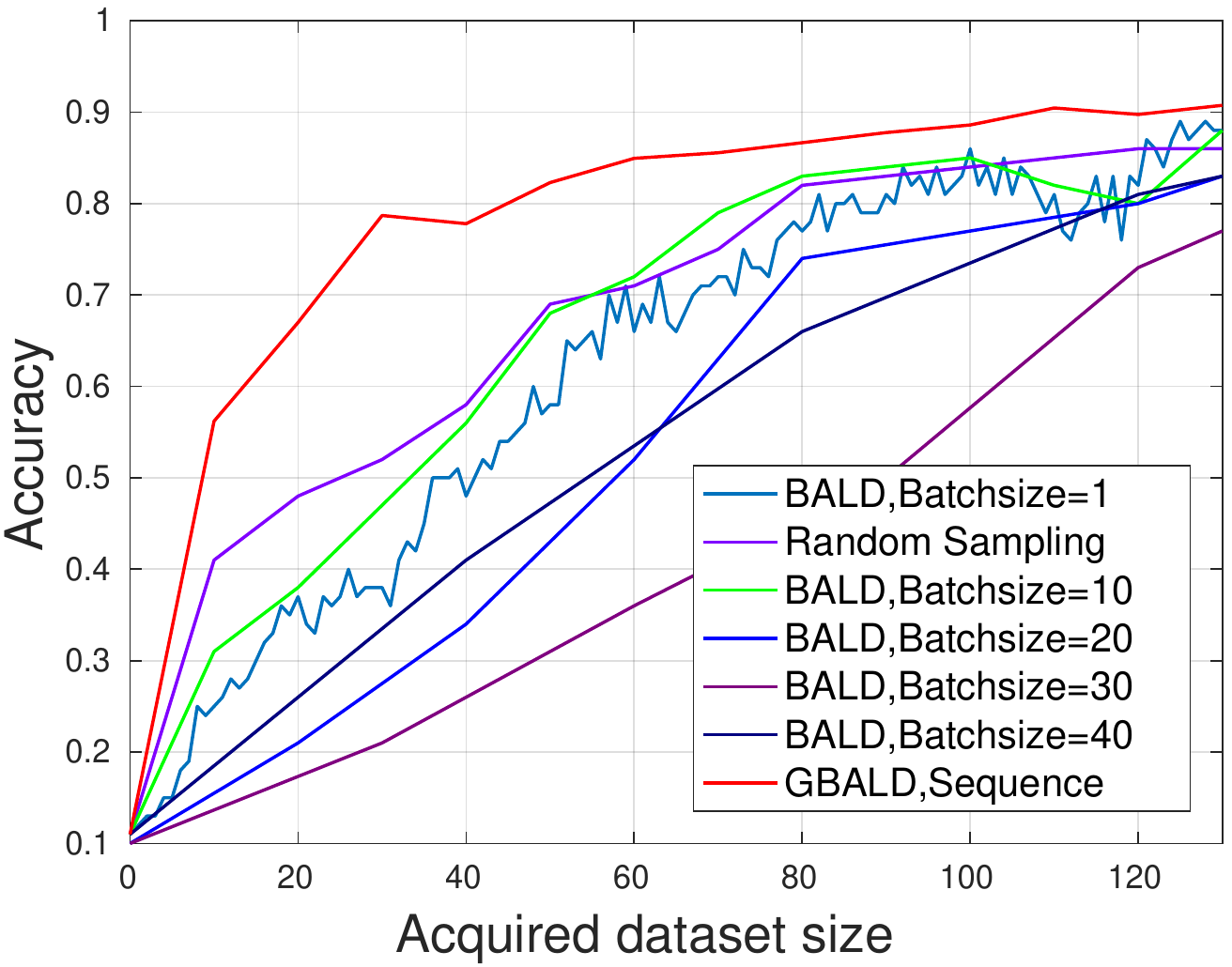}
\end{minipage}
}
\caption{Acquisitions with uninformative  priors from digit  `0' and `1'.
 }  
 
\end{figure*}

As the figure shown,   BALD  can slowly accelerate the training model due to the biased initial acquisitions, which cannot uniformly cover all the label categories. Moreover, the uninformative prior guides BALD to unstable acquisition results. Specifically, in Figure~3(b), BALD with Bathsize\ = 10 shows better performance than that of Batchsize\ =1; while BALD in Figure~3(a) keeps stable performance. This is because the initial labeled data does not cover all classes and   BALD with Batchsize\ =1 may further be misled to select those samples from  one or a few fixed classes at the first acquisitions. However,  Batchsize >1 may  result in a random acquisition process  that possibly  covers more diverse labels at its first acquisitions. Another excursive result of    BALD  is that  the increasing batch size cannot degenerate its acquisition performance   in  Figure~3(b). For example,  Batchsize\ =10 $\succ$ Batchsize\ =1\ $\succ$ Batchsize\ =20,40\ $\succ$ Batchsize\ =30, where `$\succ$' denotes `better' performance;     Batchsize\ = 20 achieves similar results as with Batchsize\ =40. \textbf{This undermines
 the 
acquisition policy of BALD: its performance would be degenerated when  the batch size increases, and sometimes  worse than random sampling. This also is the reason why we utilize a core-set to start BALD in our framework. }

\par Different to BALD,   core-set construction of GBALD using Eq.~(11) provides a \textbf{complete label matching against all classes}. Therefore, it outperforms BALD with the  batch sizes of 1, 10, 20, 30, and 40. As the shown  learning curves in Figure~3,  GBALD with a batch size of  1 and sequence 
size of 10 (i.e. breakpoints of acquired size are  10,\ 20,\ ...,\ 130)   achieves significantly higher accuracies than BALD using different batch sizes since BALD misguides the network updating using a poor prior.

 \begin{figure*} 
\subfloat[Digit `0']{
\label{fig:improved_subfig_b}
\begin{minipage}[t]{0.49\textwidth}
\centering
\includegraphics[width=2.72in,height=2.22in]{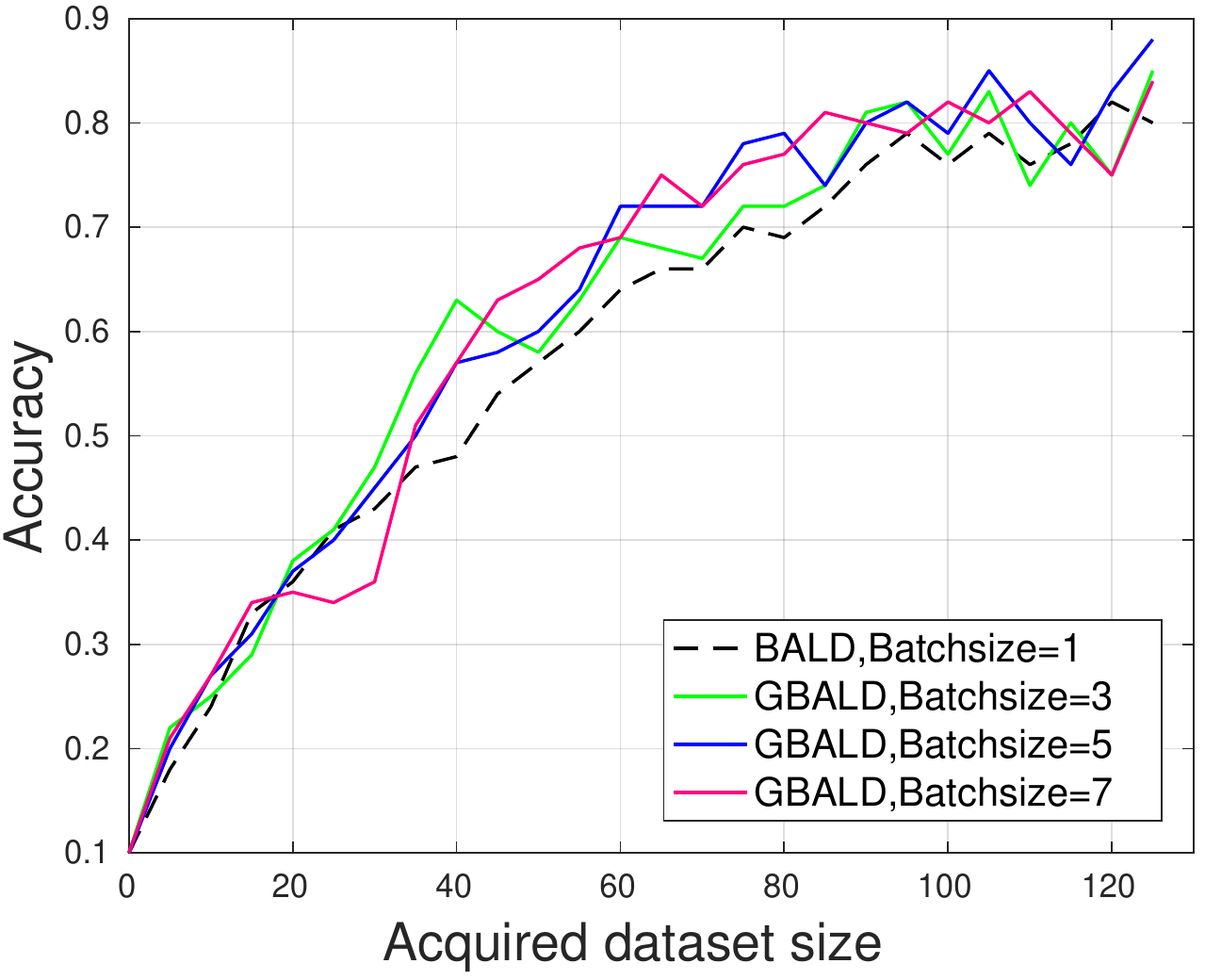}
\end{minipage}
}
\subfloat[Digit `1']{
\label{fig:improved_subfig_b}
\begin{minipage}[t]{0.49\textwidth}
\centering
\includegraphics[width=2.72in,height=2.22in]{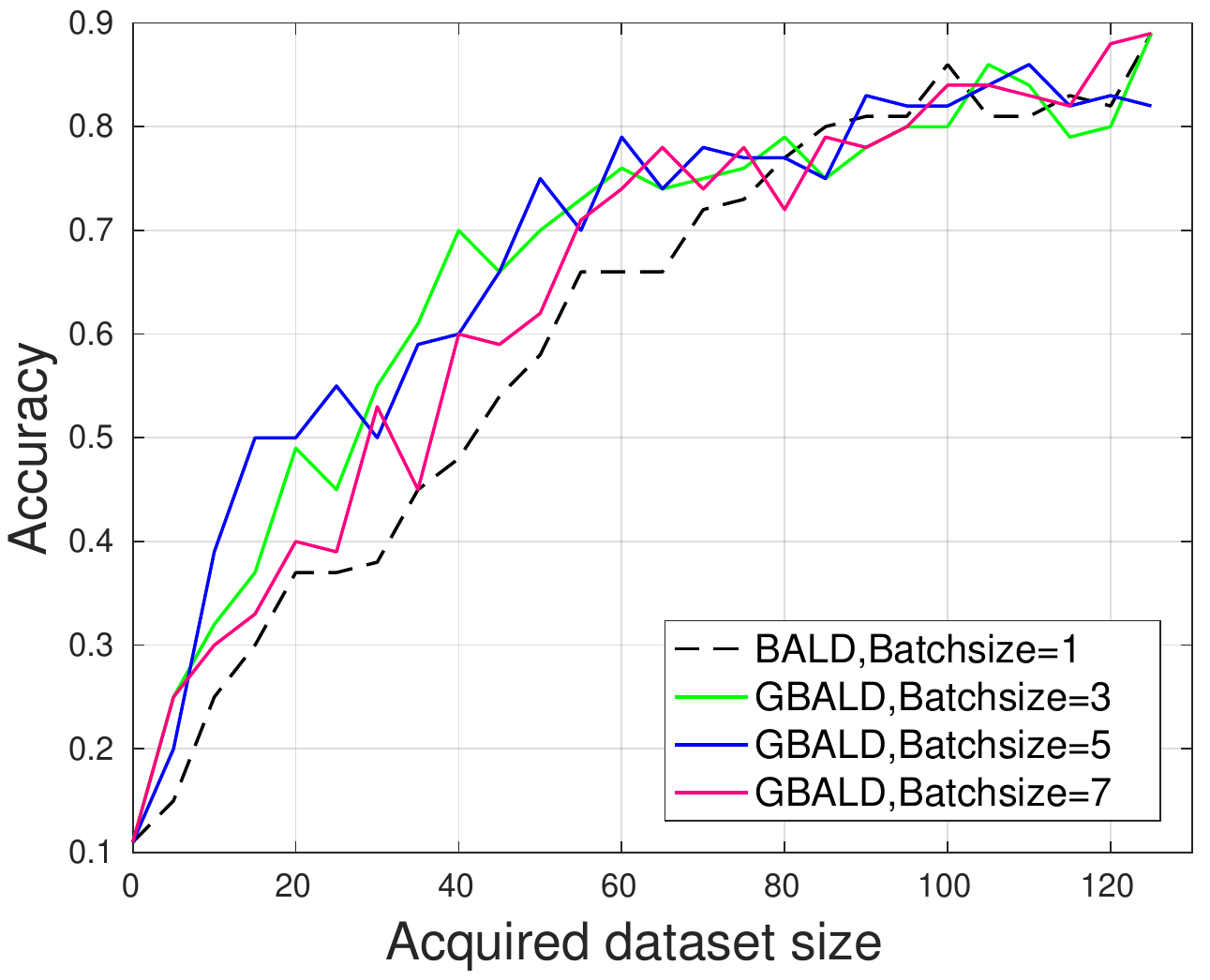}
\end{minipage}
}
\caption{GBALD outperforms BALD using ranked informative acquisitions which cooperate with representation constraints.
 } 
 
\end{figure*}

\subsection{Improved informative acquisitions}
For BALD,   repeated or similar acquisitions can easily  delay  the acceleration or improvement of the model training. Following the experiment  settings of Section~7.1, we compare the best performance of BALD with a batch size of 1 and GBALD with different batch size parameters.   Following Eq.~(14), we set $b=\{3, 5, 7\}$ and $b'$=1, respectively, that means,  we output the most representative data from  a batch of highly-informative acquisitions. Different settings on $b$ and $b'$ are used to observe the parameter perturbations of GBALD.
 
Training  by the same  parameterized CNN model as in Section~7.2, Figure~4  presents the acquisition performance of parameterized BALD and GBALD. As the learning curves shown, 
BALD cannot accelerate the model as fast as GBALD due to the repeated information over the acquisitions. For GBALD, it ranks the batch acquisitions of the highly-informative samples and selects the most representative ones. By employing  this special ranking strategy, GBALD can  \textbf{reduce the probability of sampling those nearby data of the previous acquisitions}. It is thus GBALD significantly outperforms BALD, even if we progressively increase the ranked batch size $b$.   
 
\subsection{Active acquisitions}
 
GBALD using Eqs.~(11) and (14) has been demonstrated to achieve  successful improvements over BALD. We thus combine  these two components into a uniform framework. Figure~5 presents the AL accuracies using different acquisition algorithms on the three image datasets.  The selected baselines  follow \cite{gal2017deep} including 1) maximizing   the variation ratios (Var),  2)  BALD, 3) maximizing  the entropy (Entropy), 4) $k$-medoids, and  one greedy  5) $k$-centers approach \cite{DBLP:conf/iclr/SenerS18}. The network architecture is a three-layer multi-layer perceptron (MLP) with three blocks of [convolution,
dropout, max-pooling, relu], with 32, 64, and 128 3x3 convolution filters,  5x5 max pooling, and 0.5  dropout rate.  In the AL loops, 
  the MC dropout still randomly samples  2,000  data    from the unlabeled data pool to approximate the training of the network architecture following \cite{kirsch2019batchbald}. The initial labeled data of MNIST, SVHN and CIFAR-10 are 20, 1000, 1000 random samples from their full training sets, respectively.

 The batch size of the compared baselines is 100, where GBALD ranks 300 acquisitions to select 100 data for the training, i.e. $b=300,b'=100$.   As the  learning curves shown in Figure~5, 1) $k$-centers algorithm performs more poorly than the other compared baselines because the representation optimization with the sphere geodesic usually falls into the selection of the boundary data; 2) Var, Entropy, and BALD algorithms cannot accelerate  the network model rapidly due to  those highly-skewed acquisitions towards few fixed classes  at its first acquisitions (start states); 
3) $k$-medoids approach does not interact with the neural network model while directly  imports  the clustering centers into its training set and the results are not strong; 4) the accuracies of the acquisitions of GBALD achieve better performance at the beginning than the Var, Entropy, and BALD approaches which fed the training set of the network model via acquisition loops. In short, \textbf{the network is improved faster  after drawing the distribution characteristics} of the input dataset with sufficient labels. GBALD thus consists of the representative and informative acquisitions in its uniform framework. The advantages of  these two acquisition paradigms  are integrated to 
present higher accuracies than any  single paradigm. 

 \begin{figure*} 
\subfloat[MNIST]{
\label{fig:improved_subfig_b}
\begin{minipage}[t]{0.32\textwidth}
\centering
\includegraphics[width=2.38in,height=1.78in]{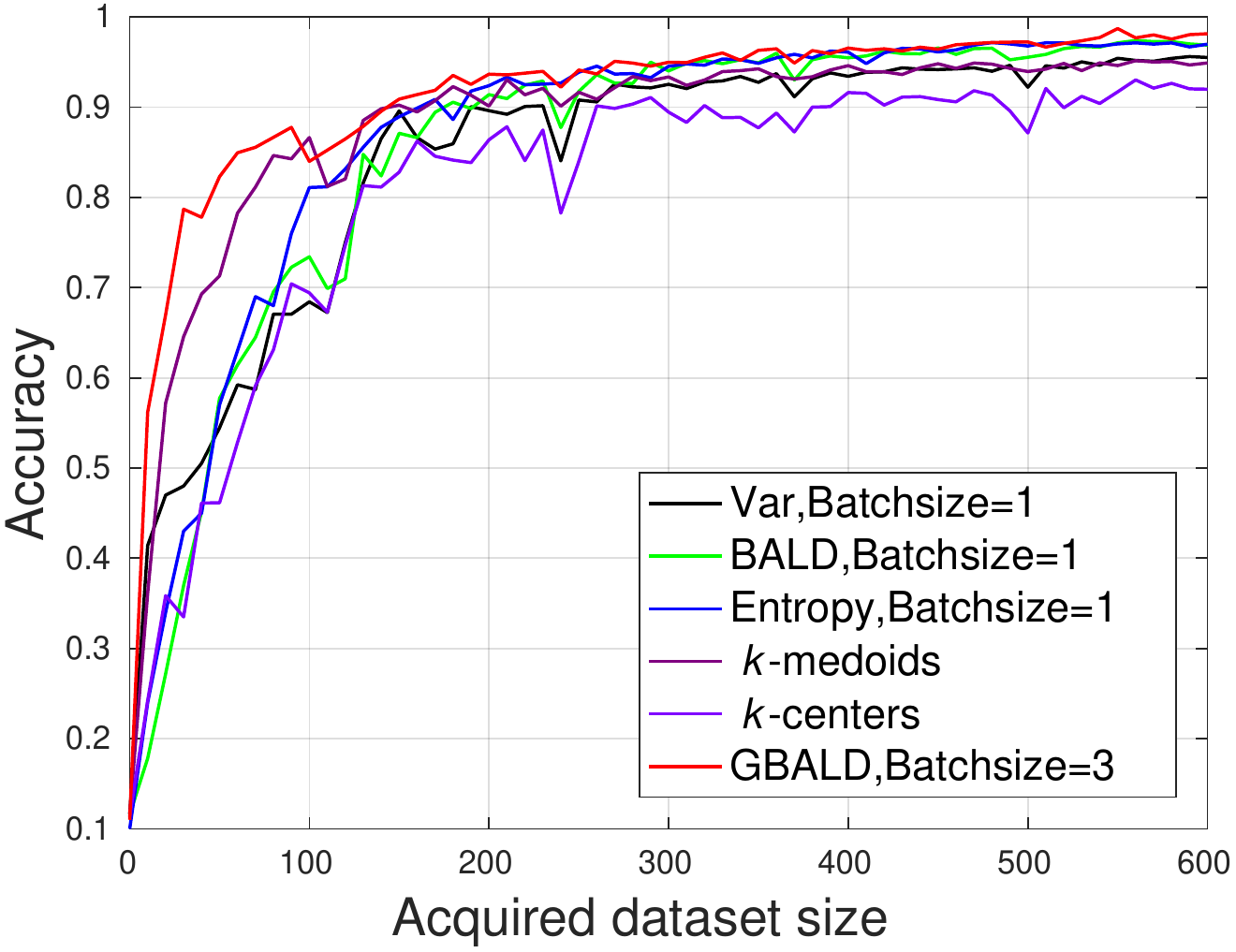}
\end{minipage}
}
\subfloat[SVHN]{
\label{fig:improved_subfig_b}
\begin{minipage}[t]{0.32\textwidth}
\centering
\includegraphics[width=2.38in,height=1.78in]{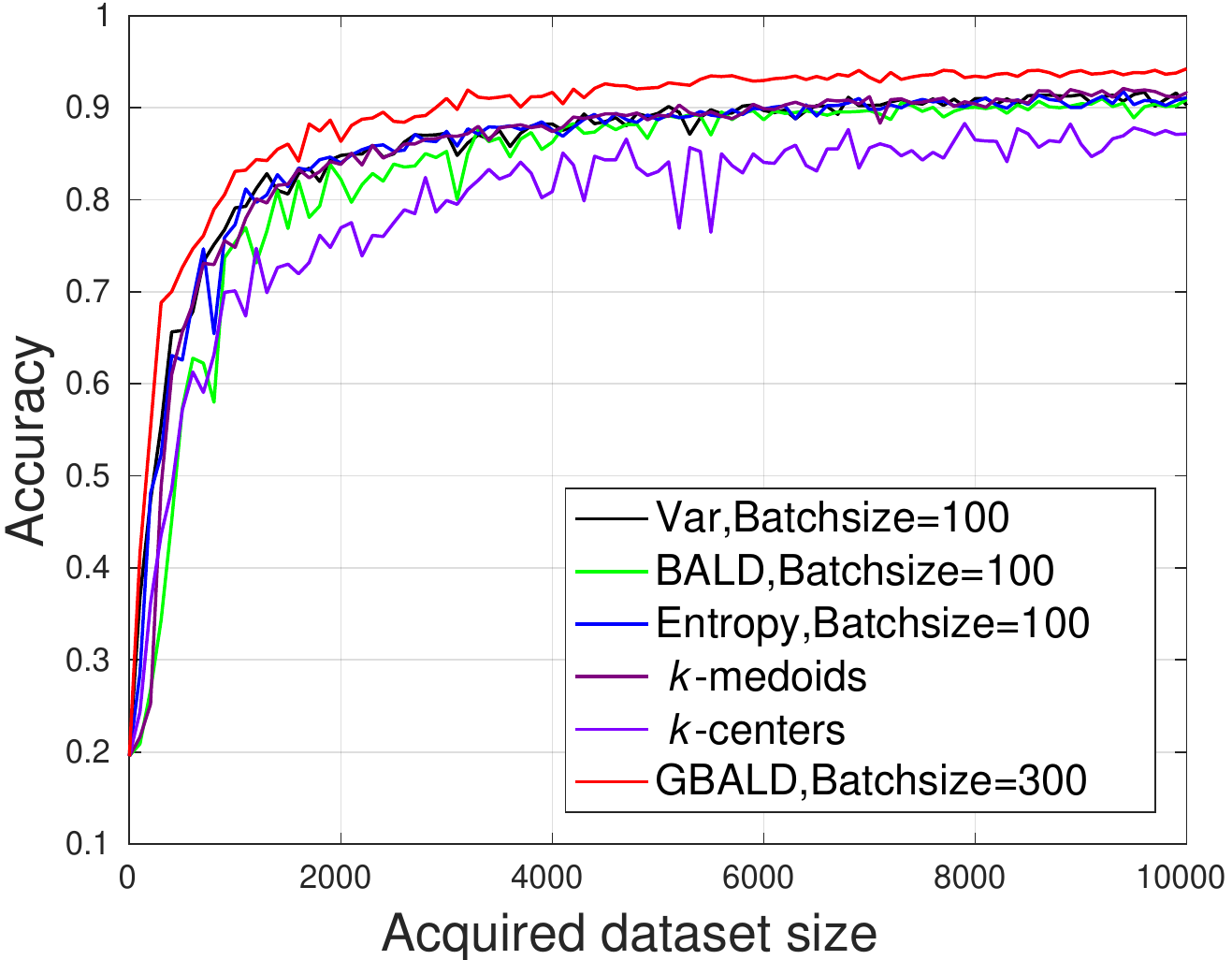}
\end{minipage}
}
\subfloat[CIFAR10]{
\label{fig:improved_subfig_b}
\begin{minipage}[t]{0.32\textwidth}
\centering
\includegraphics[width=2.38in,height=1.78in]{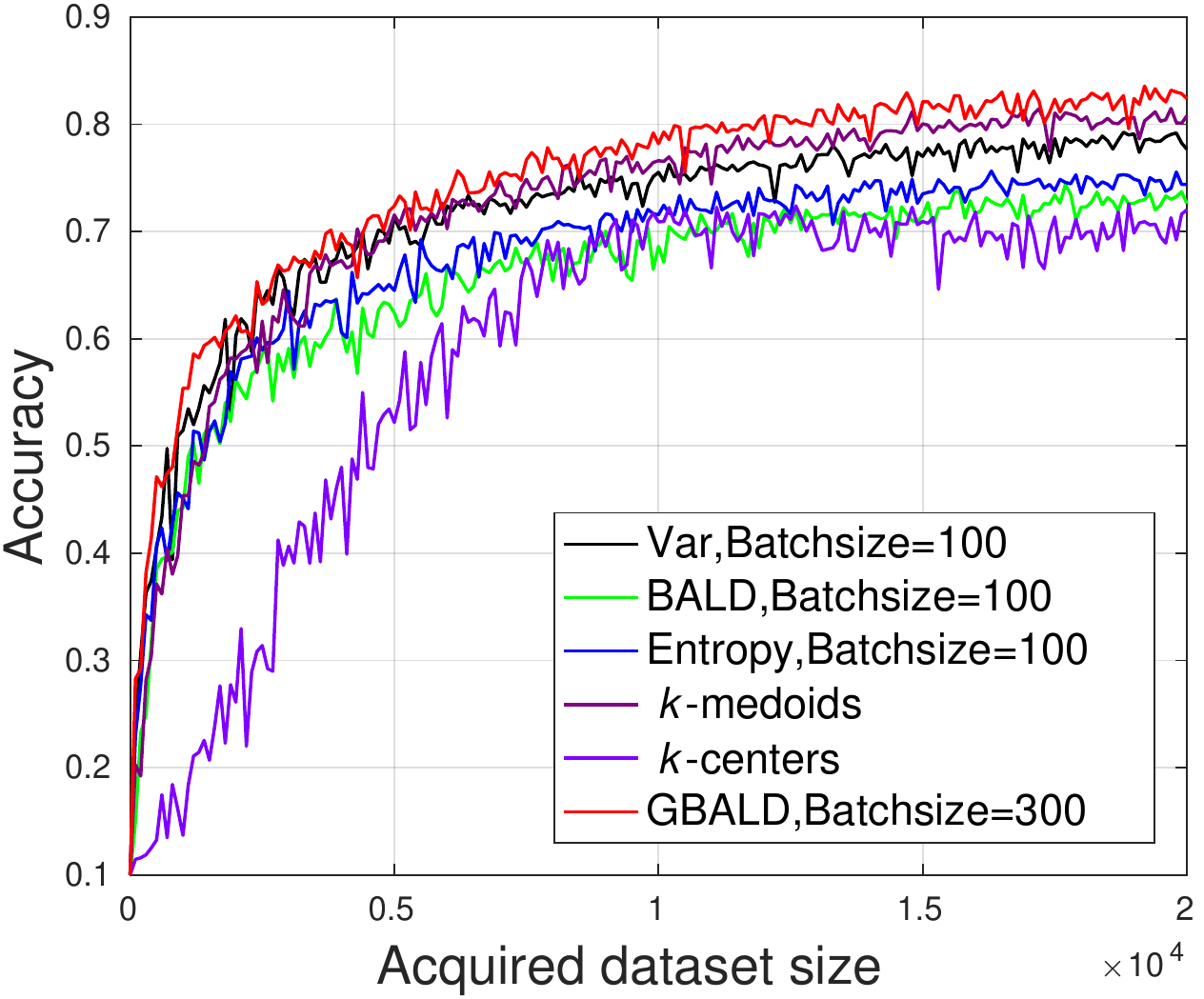}
\end{minipage}
} 
\caption{ Active  acquisitions on MNIST, SVHN, and CIFAR10 datasets. 
 } 
\end{figure*}

\begin{table*}
 \caption{Mean$\pm$std of the test accuracies of the breakpoints of the learning curves on MNIST, SVHN, and CIFAR-10. }
 \setlength{\tabcolsep}{5.5pt}{
\begin{center}
\scalebox{1.2}{
 \begin{tabular}{c| c c c  c c c } 
\hline
\multirow{2}{*}{Datasets}  &  &  \multicolumn{4}{c}{Algorithms}  \\
        &    Var  &   BALD  &      Entropy &$k$-medoids  &$k$-centers &GBALD    \\

\hline
   MNIST     &0.8419$\pm$ 0.1721   &0.8645$\pm$0.1909  & 0.8498$\pm$0.2098   &0.8785$\pm$0.1433&0.8052$\pm$0.1838 &\textbf{0.9106$\pm$0.1296}    \\

  SVHN   &0.8535$\pm$0.1098& 0.8510$\pm$0.1160 &0.8294$\pm$0.1415 &  0.8498$\pm$0.1294 & 0.7909$\pm$0.1235&\textbf{0.8885$\pm$0.1054}  \\

  CIFAR-10     &0.7122$\pm$0.1034&0.6760$\pm$0.1023&0.6536$\pm$0.1038&   0.71837$\pm$0.1245&  0.5890$\pm$0.1758 &\textbf{0.7440$\pm$0.1087}     \\
\hline
\end{tabular}}
\end{center}}
\end{table*}

 \begin{table}
 \caption{Number of acquisitions  on MNIST, SVHN and CIFAR10 until 70\%, 80\%, and 90\% accuracies are reached. }

\setlength{\tabcolsep}{3.0pt}
\begin{center}
\scalebox{1.0}{
 \begin{tabular}{c| c c c  } 
\hline
\multirow{2}{*}{Algorithms}  &  & \multicolumn{1}{c}{Accuracies}   \\
                                         &   70\% &  80\%   &      90\%   \\

\hline
   Var       & 140/1,700/5,700   &150/2,200/>20,000  &210/>10,000/>6,100      \\
   
    BALD   &110/1,700 /8,800&120 /2,300/>20,000&190/7,100 / >20,000 \\

Entropy     &110/1,900/11,200&150/2,400/>20,000&200/8,600/>20,000  \\

$k$-modoids    &70/1,700/5,900&90/2,200/16,000&\textbf{170}/6,200 />20,000   \\

$k$-centers   &110/2,000/10,100&150/3,800/>20,000&280/>10,000/>20,000  \\

GBALD  &\textbf{50/1,400/4,800}&\textbf{70/1,900/12,200}&\textbf{170/3,900}/>20,000     \\
\hline
\end{tabular}}
\end{center}

\end{table}

\par Table~1 reports the mean$\pm$std values of the test accuracies of the  breakpoints of the  learning curves in Figure~5, where the breakpoints of MNIST are $\{0,10,20,30,...,600\}$, the breakpoints of SVHN are $\{0,100,200,...,10000\}$,  and  the   breakpoints of CIFAR10 are   $\{0,100,200,...,20000\}$.  We then calculate their average accuracies and std values over these acquisition points. As the shown in Table~1, all std values around 0.1, yielding a norm value. Usually, an average accuracy on the same acquisition size with different random seeds of DNNs, will result a small std value.  Our mean accuracy spans across the whole learning curve.

The results show that 1) GBALD achieves the highest average accuracies; 2)$k$-medoids is ranked the second amongst the compared baselines; 3) $k$-centers has ranked the worst accuracies amongst these approaches; 4) the others, which iteratively update the training model are ranked at the middle including BALD, Var and Entropy algorithms.  
Table~2 shows the acquisition numbers of achieving the accuracies of 70\%, 80\%, and 90\% on the three datasets.
The three numbers of each cell are  the acquisition numbers over MNIST, SVHN, and CIFAR10, respectively.
The results show that GBALD can use fewer acquisitions to achieve a desired accuracy than the other algorithms.

\subsection{Active acquisitions with repeated samples}
Repeatedly collecting samples in the establishment of a database is very common. Those repeated samples may be  continuously evaluated as the primary acquisitions of AL due to the lack of one or more categories of class labels.   Meanwhile,  this situation may lead the evaluation of the model uncertainty to fall into  repeated acquisitions. To respond  this collecting situation, we compare the acquisition performance of BALD, Var, and GBALD using 5,000 and 10,000 repeated samples from the first 5,000 and 10,000 unlabeled data of SVHN, respectively. In addition, the unsupervised algorithms which do not interact with the network architecture, such as $k$-medoids and $k$-centers, have been shown that they cannot accelerate the training in terms of the experiment  results of Section~7.3. Thus, we are no longer studying their performance. The network architecture still follows the settings of   Section~7.3.

\par The acquisition results over the repeated SVHN datasets are presented in Figure~7. The batch sizes of the compared baselines are 100, where GBALD ranks 300 acquisitions to select 100 data for the training, i.e. $b=300,b'=100$.   The mean$\pm$std values of these baselines of the breakpoints (i.e. $\{0,100,200,...,10000\}$)  are reported in Table~3. The results demonstrate that GBALD shows slighter perturbations on the repeated samples than Var and BALD because it draws the core-set from the input distribution  as the initial acquisitions,  leading a small  probability to sample from one or more fixed class.
In GBALD, the informative acquisitions constrained with geometric representations further scatter the acquisitions spread in different classes. However, the Var and BALD algorithms have no particular schemes against the repeated acquisitions. The maximizer on the model uncertainty may be repeatedly produced by those repeated samples.
In additional, the unsupervised algorithms such as $k$-medoids and $k$-centers don not have these limitations, but cannot accelerate the training since there has no interactions with the network architecture. 

 \begin{figure*}
\subfloat[Var]{
\label{fig:improved_subfig_b}
\begin{minipage}[t]{0.32\textwidth}
\centering
\includegraphics[width=2.38in,height=1.78in]{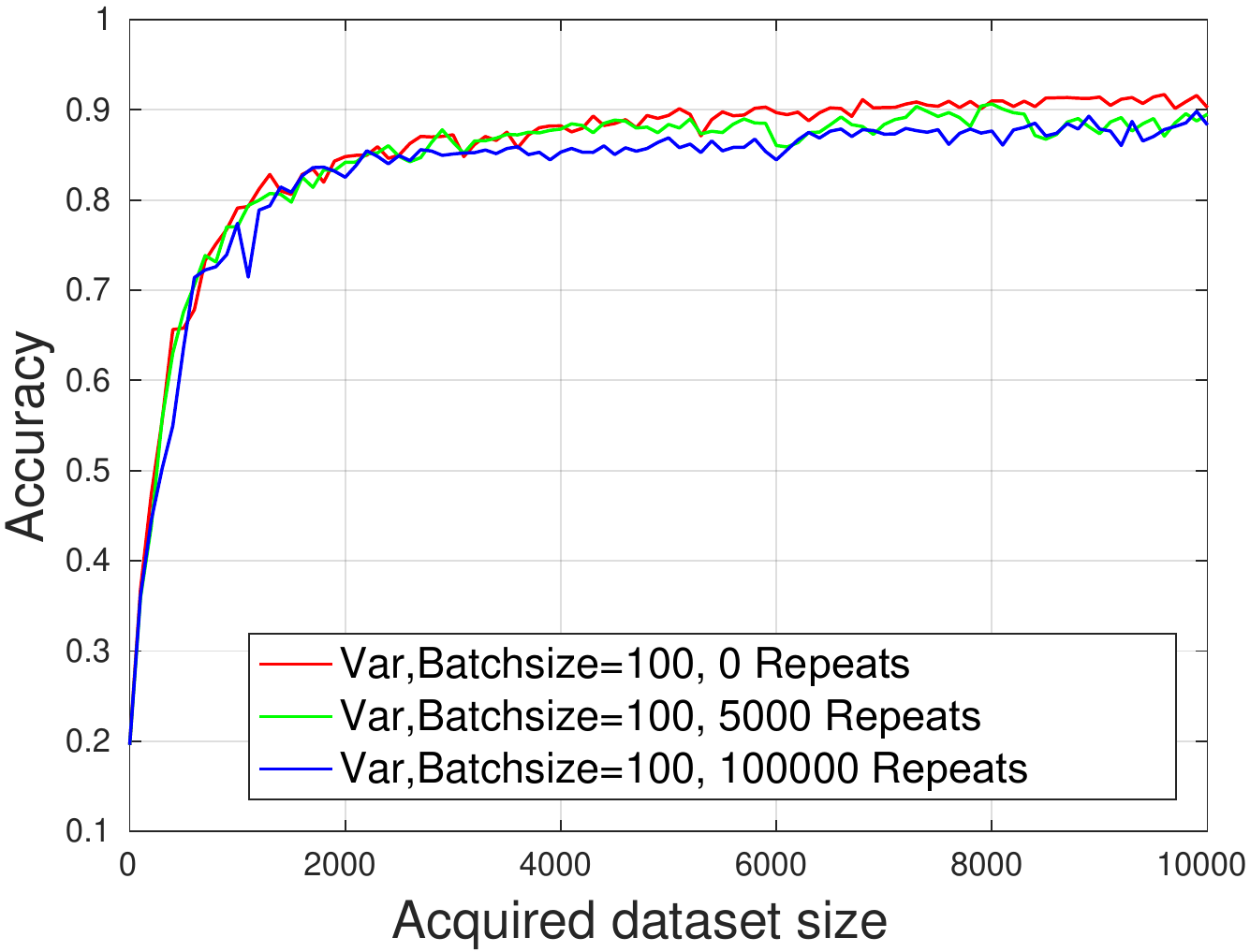}
\end{minipage}
}
\subfloat[BALD]{
\label{fig:improved_subfig_b}
\begin{minipage}[t]{0.32\textwidth}
\centering
\includegraphics[width=2.38in,height=1.78in]{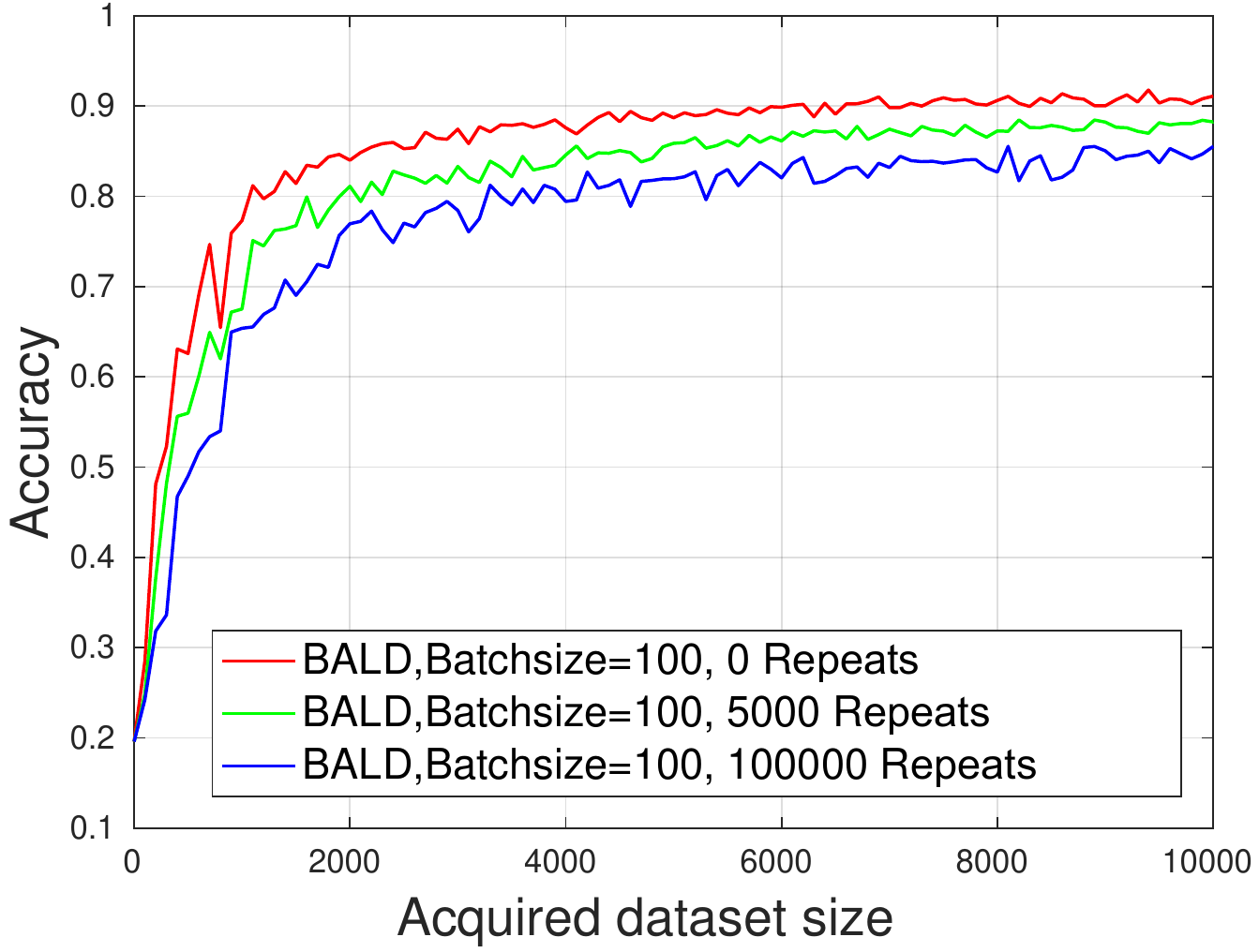}
\end{minipage}
}
\subfloat[GBALD ]{
\label{fig:improved_subfig_b}
\begin{minipage}[t]{0.32\textwidth}
\centering
\includegraphics[width=2.38in,height=1.78in]{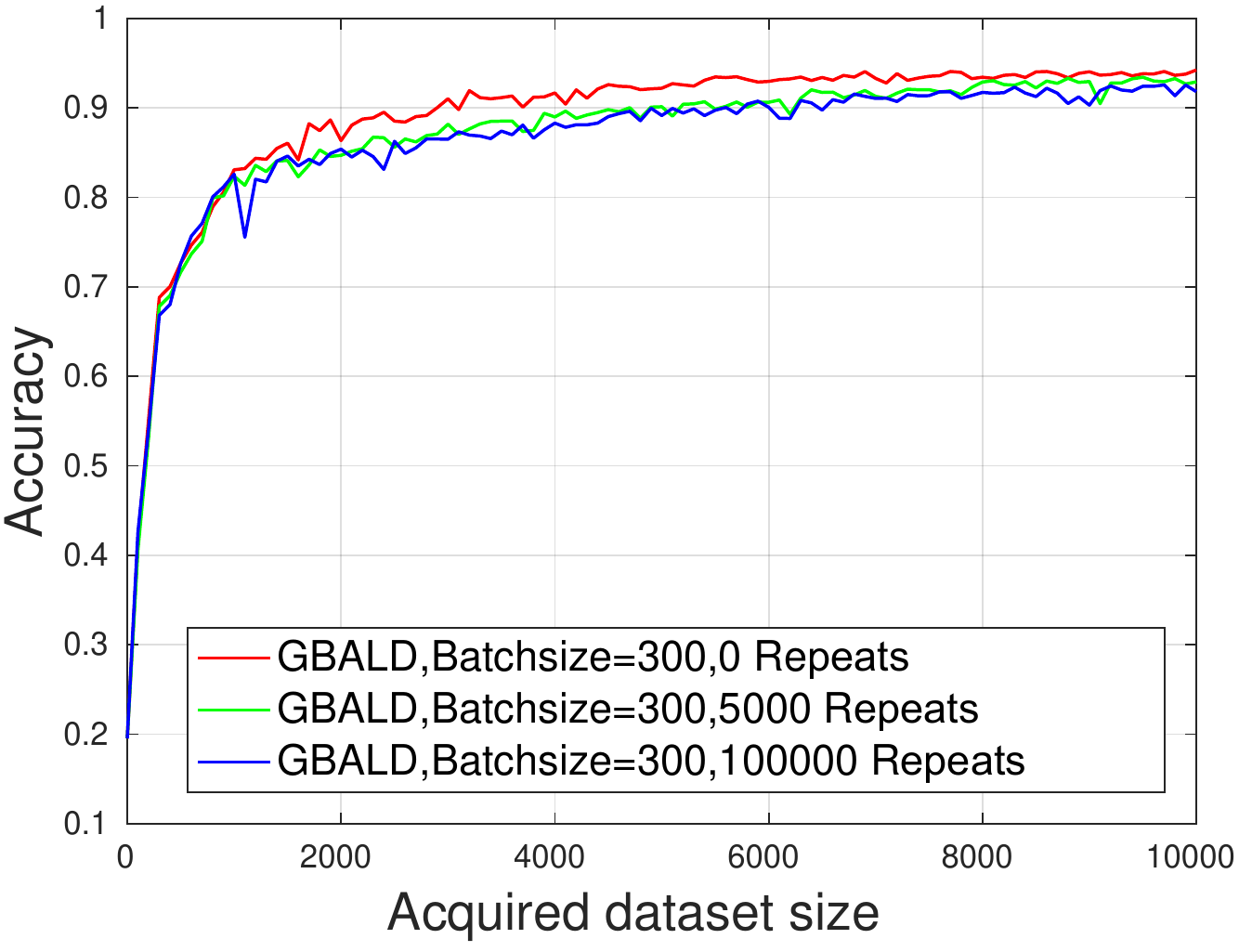}
\end{minipage}
}
\caption{Active  acquisitions on  SVHN with 5,000 and 10,000 repeated samples.
 } 
\end{figure*}

\begin{table}
 \caption{Mean$\pm$std of active  acquisitions on  SVHN with 5,000 and 10,000 repeated samples. }
\setlength{\tabcolsep}{8.0pt}
\begin{center}
\scalebox{1.0}{
 \begin{tabular}{c| c c c  } 
\hline
\multirow{2}{*}{Algorithms}  &  & \multicolumn{1}{c}{Accuracies}   \\
                                         &   0 repeats    & 5,000 repeats     &  10,000 repeats   \\

\hline
 Var          & 0.8535$\pm$0.1098   &0.8478$\pm$0.1074  &0.8281$\pm$0.1082     \\
   
BALD    &0.8510$\pm$0.1160&0.8119$\pm$0.1216 &0.7689$\pm$0.1288 \\

GBALD  &\textbf{0.8885$\pm$0.1054}&\textbf{0.8694$\pm$0.1032}&\textbf{0.8630$\pm$0.1002}     \\
\hline
\end{tabular}}
\end{center}
\end{table}

\subsection{Active  acquisitions with noisy samples}
 Noisy labels \cite{golovin2010near,han2018co} are inevitable due to human errors in data annotation.  Training on noisy labels, the neural network  model will degenerate its inherent properties.  To assess the perturbations of the above acquisition algorithms against noisy labels, we organize the following experiment scenarios: we select  the first 5,000 and 10,000 samples respectively   from the unlabeled data pool of the MNIST dataset 
 and  reset their labels   by shifting  $\{$`0',`1',...,`8'$\}$ to $\{$`1',`2',...,`9'$\}$, respectively.   The network architecture follows the MLP of Section~7.3. The selected baselines are Var and BALD.

 \par Figure~7 presents the acquisition results of those baseline with noisy labels.  The batch sizes of the compared baselines are 100, where GBALD ranks 300 acquisitions to select 100 data for the training, i.e. $b=300,b'=100$.  Table~4 presents the mean$\pm$std values of the breakpoints (i.e. $\{0,100,200,...,10000 \}$) over learning curves of Figure~7. The results further show that GBALD has  smaller   noisy perturbations than the  other baselines. 
 For Var and BALD,    model uncertainty leads    high probabilities to sample those noisy data due to their   greatly updating on the   model.

  \begin{figure*} 
\subfloat[Var ]{
\label{fig:improved_subfig_b}
\begin{minipage}[t]{0.32\textwidth}
\centering
\includegraphics[width=2.38in,height=1.78in]{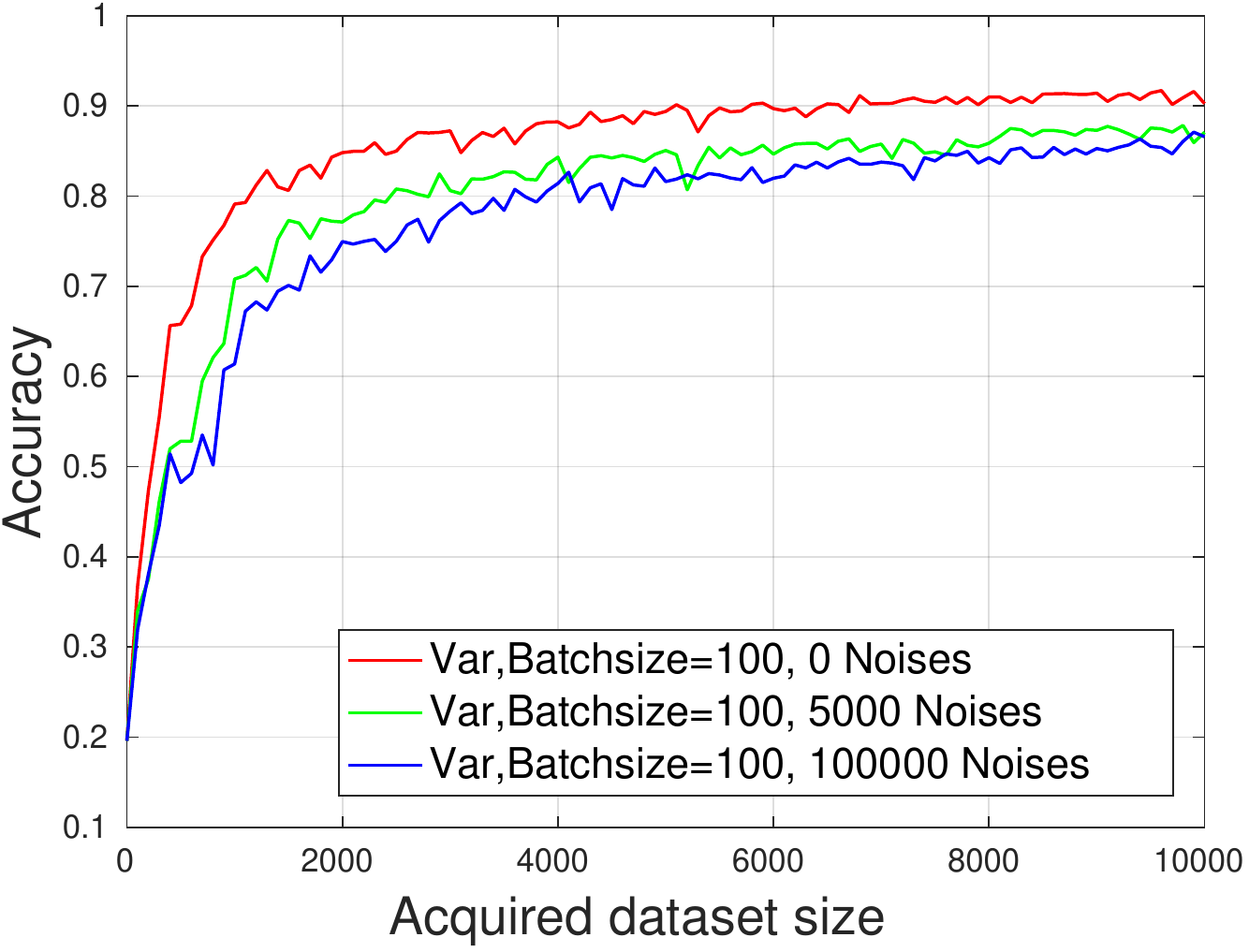}
\end{minipage}
}
\subfloat[BALD]{
\label{fig:improved_subfig_b}
\begin{minipage}[t]{0.32\textwidth}
\centering
\includegraphics[width=2.38in,height=1.78in]{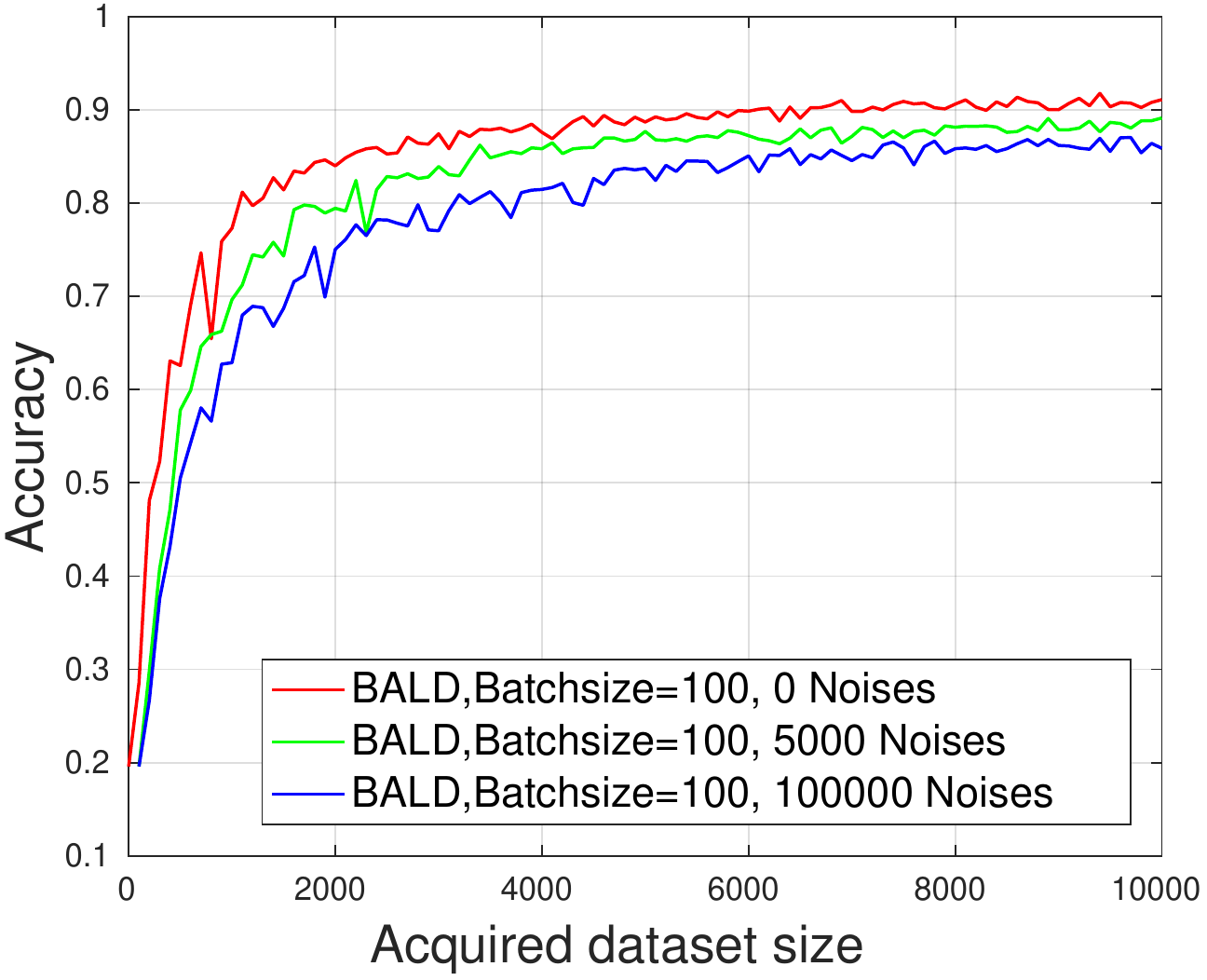}
\end{minipage}
}
\subfloat[GBALD]{
\label{fig:improved_subfig_b}
\begin{minipage}[t]{0.32\textwidth}
\centering
\includegraphics[width=2.38in,height=1.78in]{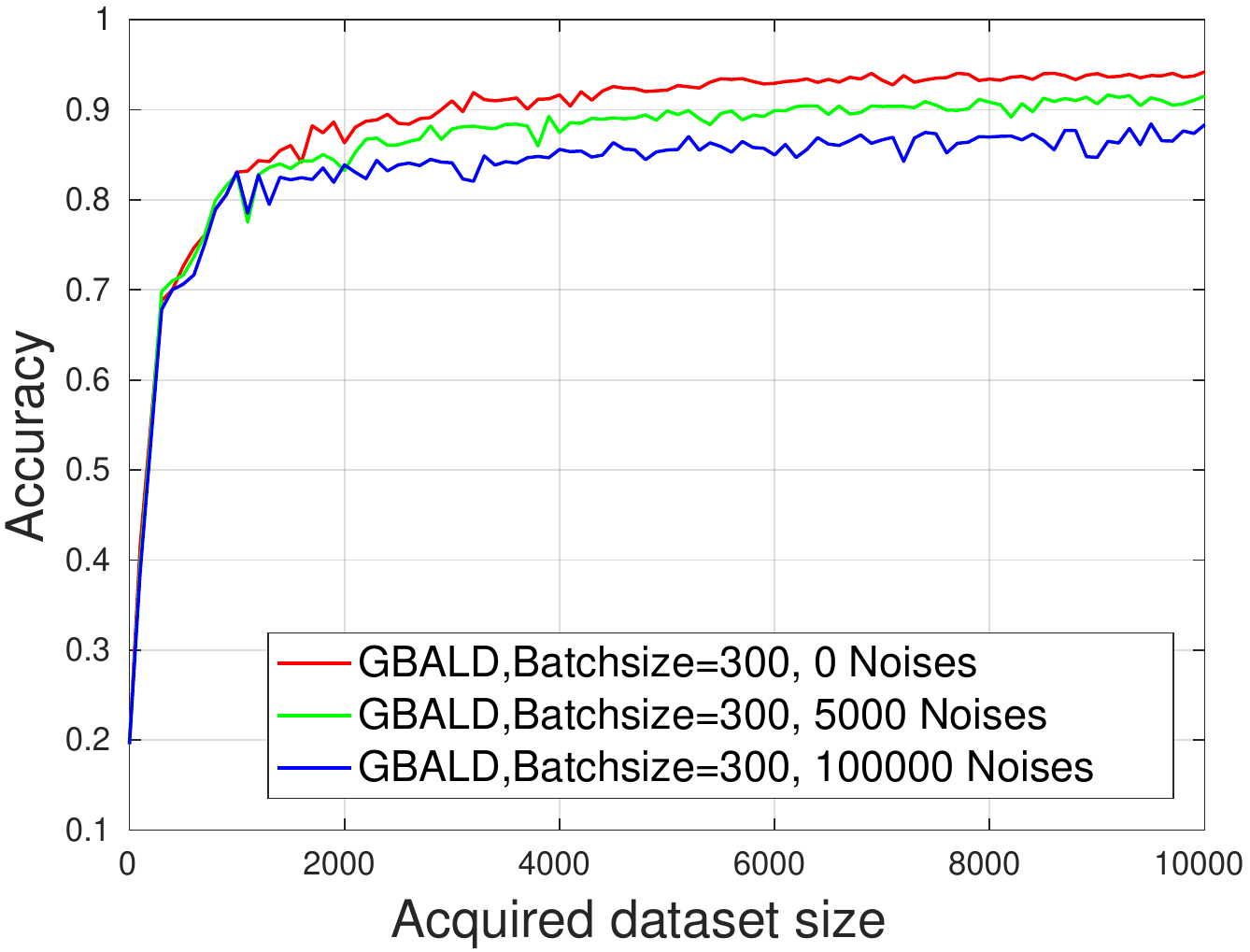}
\end{minipage}
}
\caption{Active noisy acquisitions on SVHN with  5,000 and 10,000 noisy labels.
 } 
\end{figure*}

\begin{table} 
 \caption{Mean$\pm$std of active noisy  acquisitions on  SVHN with 5,000 and 10,000 noises. }
\setlength{\tabcolsep}{8pt}
\begin{center}
\scalebox{1.0}{
 \begin{tabular}{c| c c c  } 
\hline
\multirow{2}{*}{Algorithms}  &  & \multicolumn{1}{c}{Accuracies}   \\
                                         &   0 noises & 5,000  noises   &  10,000 noises    \\
  \hline

 Var          & 0.8535$\pm$0.1098   &0.7980$\pm$0.1203 &0.7702$\pm$0.1238   \\
   
BALD    &0.8510$\pm$0.1160&0.8205$\pm$0.1185 &0.7849$\pm$0.1239 \\

GBALD  &\textbf{0.8885$\pm$0.1054}&\textbf{0.8622$\pm$0.0991}&\textbf{0.8301$\pm$0.0916}     \\
\hline
\end{tabular}}
\end{center}
\end{table}

\subsection{GBALD vs.  BatchBALD}
Batch deep AL was recently  proposed to accelerate the training of a DNN model. In recent literature, BatchBALD \cite{kirsch2019batchbald} extended  BALD with a batch acquisition setting to converge the network 
using fewer iteration loops. Different to   BALD,  BathBALD introduces the diversity    to avoid the  repeated or similar     acquisitions.

How to set the batch size of the acquisitions attracted our eyes before  starting the experiments. It involves with whether our experiment settings are fair and reasonable.  From a theoretical view, the larger the batch size, the worse the batch acquisitions will be.  Experimental results of \cite{kirsch2019batchbald} also demonstrated this phenomenon.  We thus set different batch sizes to run BatchBALD. Figure~8 presents the comparison results of BALD, BatchBALD, and our proposed GBALD as with the experiment settings of Section~7.3. As the shown in this figure,  BatchBALD degenerates the test accuracies if we progressively increase the bath sizes, where BatchBALD with a batch size of 10  keeps similar learning curves as BALD. 
It shows that   BatchBALD actually can accelerate BALD with a similar acquisition result if the batch size is not large.  This means, if the batch size is  between 2 to 10, BatchBALD will degenerate into BALD and maintains highly-consistent results. 

\begin{figure}
\centering
\includegraphics[scale=0.62]{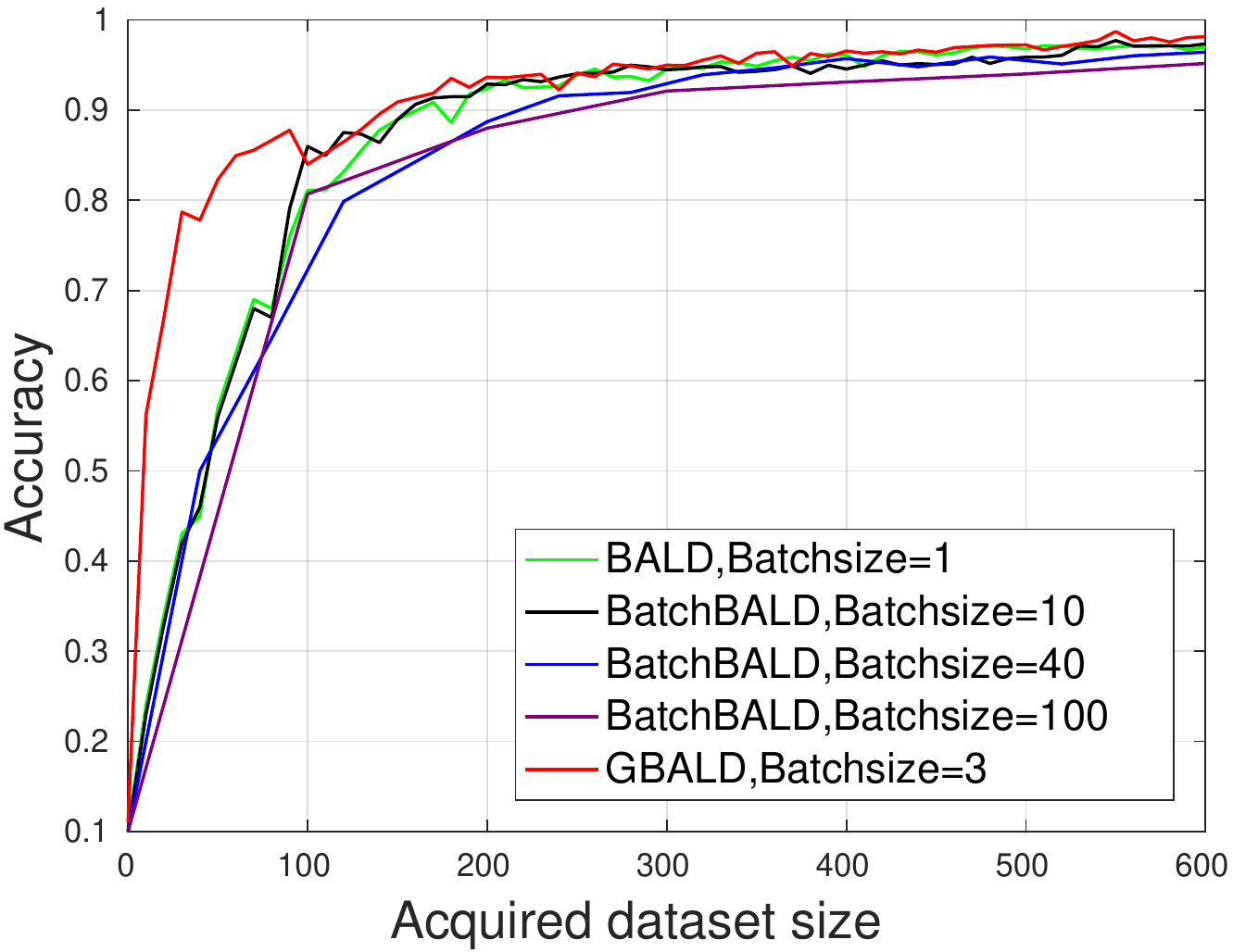}
\caption{Comparisons of BALD, BatchBALD, and GBALD of active   acquisitions on MNIST with bath settings.}
\end{figure}

\par Also because of this, BatchBALD also  has the same sensitivity  to an uninformative   prior. For our GBALD, the core-set  solicits  sufficient data which properly  matches the input distribution (w.r.t. acquired data set size $\leq$ 100), providing expressive input features to start the DNN model (w.r.t. acquired data set size $>$ 100).  Table~5 then presents the  mean$\pm$std of the breakpoints ($\{0,10,20,...,600 \}$) of active  acquisitions on  MNIST with batch settings.  The statistical results show that GBALD has much higher mean accuracy than BatchBALD with different bath sizes. Therefore, evaluating the model uncertainty of DNN using those highly-representative core-set samples can improve the performance of  the neural network.

\begin{table}
 \caption{{Mean$\pm$std of  BALD, BatchBALD, and GBALD of active    acquisitions on  MNIST with batch settings.}}
\setlength{\tabcolsep}{20.5pt}
\begin{center}
\scalebox{1.0}{
 \begin{tabular}{c| c c  } 
\hline
Algorithms  &  {Batch sizes}  & Accuracies \\
\hline
BALD  & 1&   0.8654$\pm$0.0354 \\
BatchBALD  & 10& 0.8645$\pm$0.0365 \\
BatchBALD  & 40& 0.8273$\pm$0.0545\\
BatchBALD  & 100&  0.7902$\pm$0.0951\\
GBALD  & 3& 0.9106$\pm$0.1296\\
\hline
\end{tabular}}
\end{center}
\end{table}

\subsection{Acceleration of accuracy}
Accelerations of accuracy  i.e. the first-orders of the   breakpoints of the learning curve, describe the efficiency of the active acquisition loops. Different to the accuracy curves, the acceleration curve  reflects how active acquisitions help the convergence of  the interacting DNN model.

We thus  firstly present the acceleration curves of different baselines  on MNIST, SVHN, and CIFAR10 datasets as with the experiments of Section~7.3.  The acceleration curves of active acquisitions   are drawn in Figure~9. Observing those acceleration curves of different algorithms clearly finds that,  GBALD always keeps \textbf{higher accelerations of accuracy} than the other baselines against the three benchmark datasets. This revels the reason of why GBALD can derive more informative and representative data to maximally update the DNN model.

The acceleration curves  of active acquisitions with repeated samples  are presented in Figure~10.  As the shown in this  figure, GBALD presents \textbf{slighter perturbations} to the number of repeated samples than that of Var and BALD due to its effective ranking scheme on optimizing model uncertainty    of DNNs.  
The acceleration curves of active  noisy acquisitions   are drawn in Figure~11. 
Compared to Figure~7, it presents  more intuitive descriptions for the noisy perturbations to different baselines. With horizontal comparisons to the  acceleration curves of Var and BALD, our proposed GBALD has smaller noisy perturbations due to 1) the powerful core-set  which properly captures the input distribution,  and 2) both the highly representative and informative   acquisitions of the model uncertainty.

 \begin{figure*}[!htbp] 
\subfloat[MNIST]{
\label{fig:improved_subfig_b}
\begin{minipage}[t]{0.32\textwidth}
\centering
\includegraphics[width=2.38in,height=1.78in]{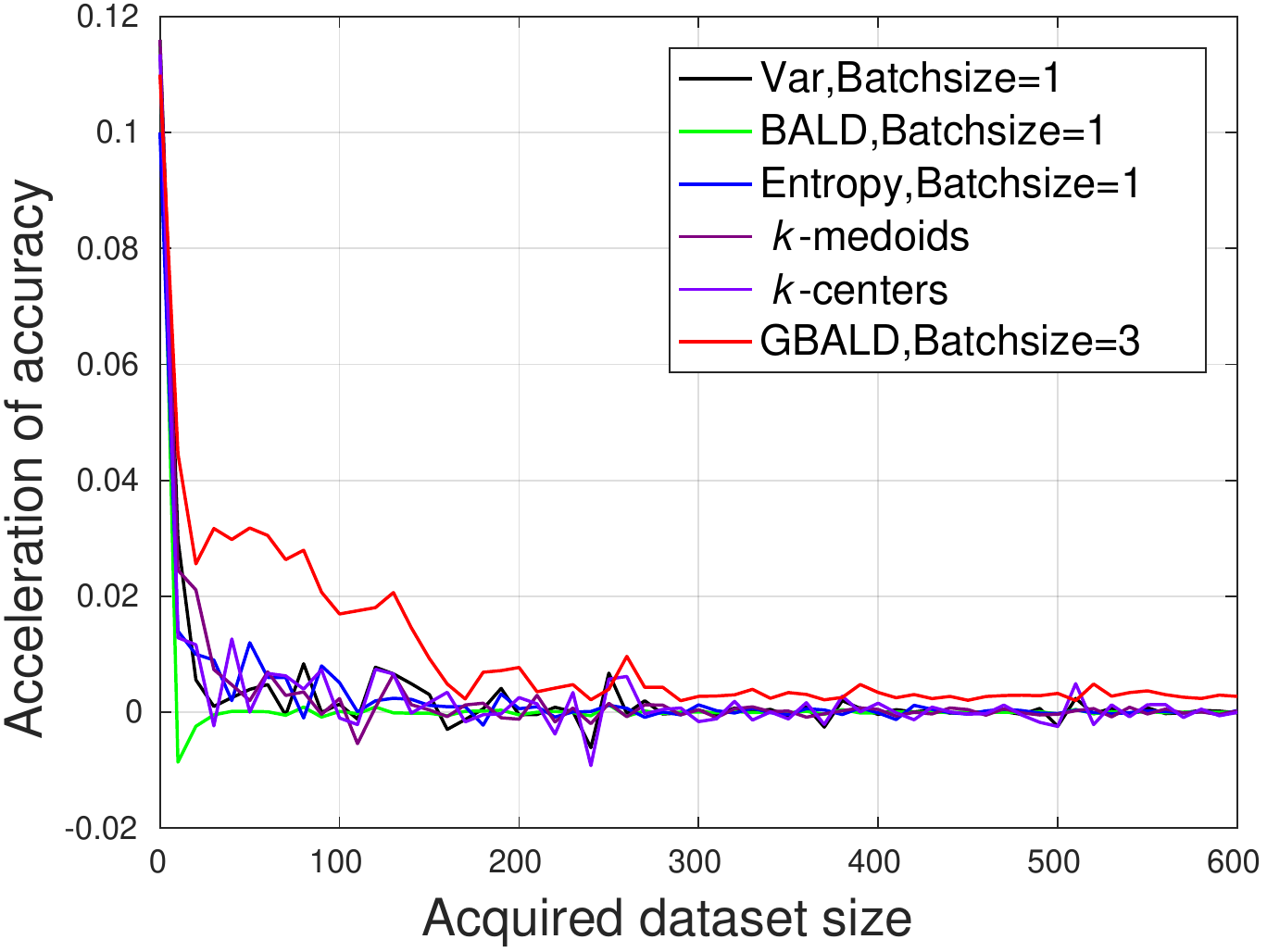}
\end{minipage}
}
\subfloat[SVHN]{
\label{fig:improved_subfig_b}
\begin{minipage}[t]{0.32\textwidth}
\centering
\includegraphics[width=2.38in,height=1.78in]{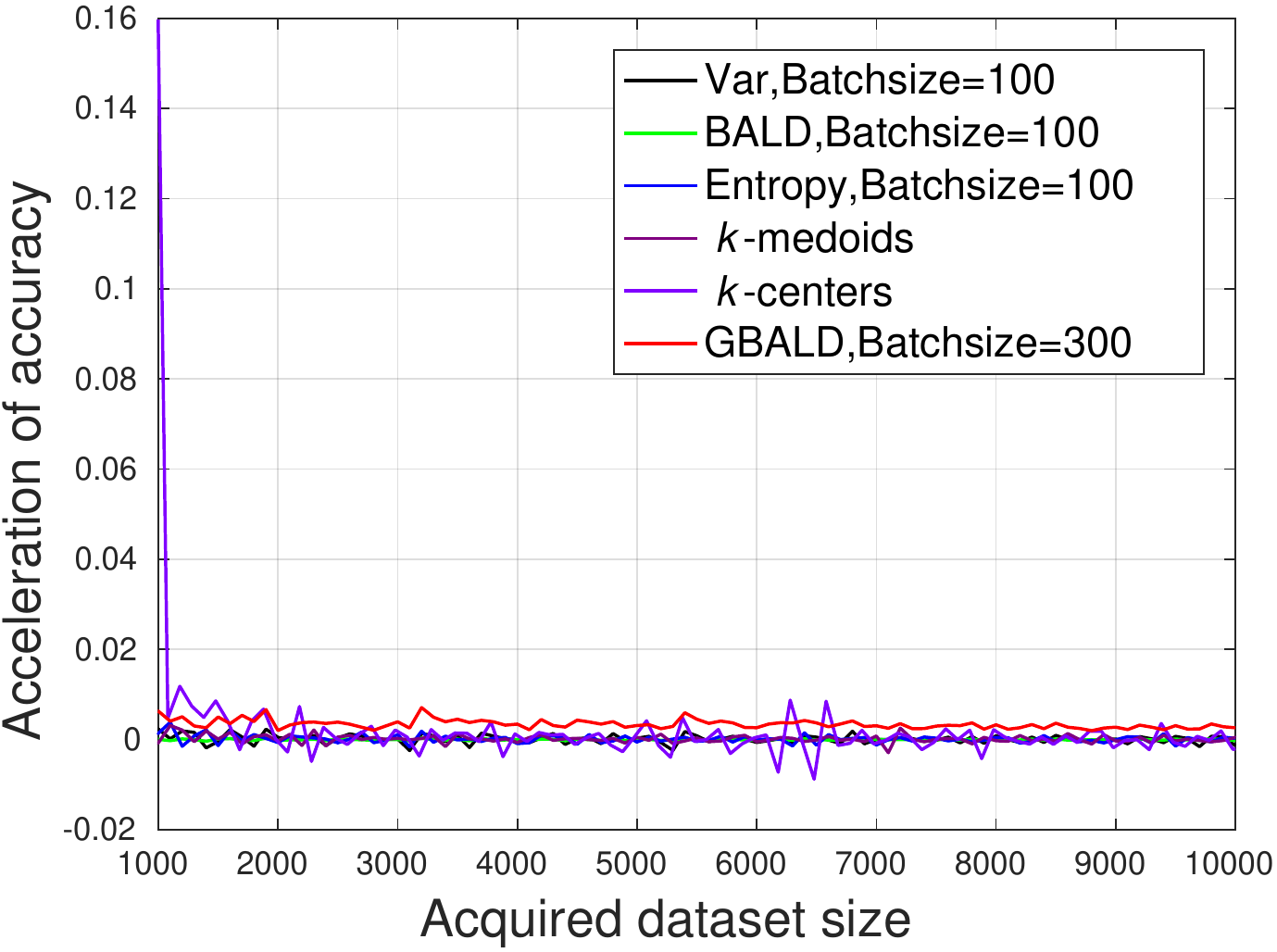}
\end{minipage}
}
\subfloat[CIFAR10]{
\label{fig:improved_subfig_b}
\begin{minipage}[t]{0.32\textwidth}
\centering
\includegraphics[width=2.38in,height=1.78in]{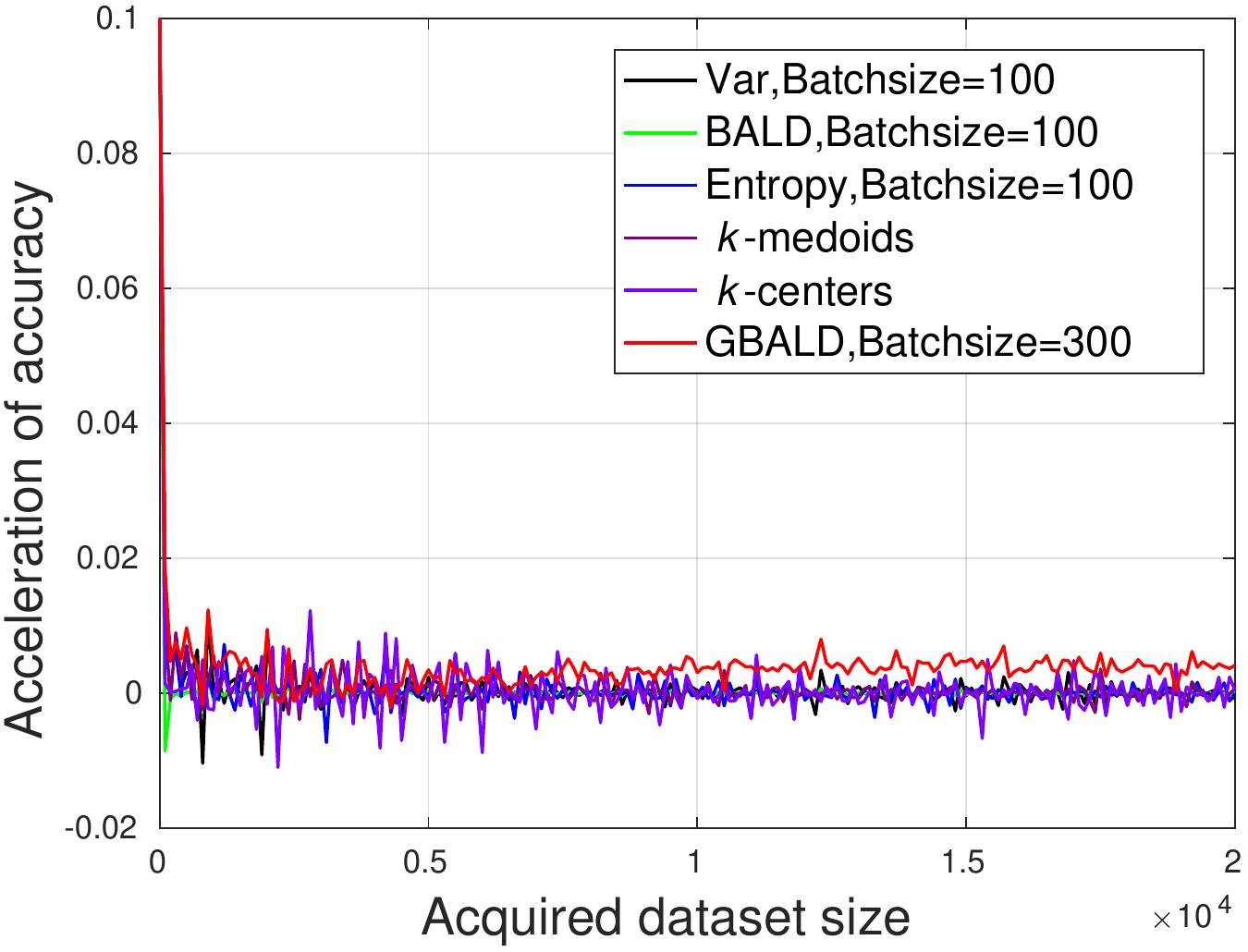}
\end{minipage}
}
\caption{Accelerations of accuracy of different baselines  on MNIST, SVHN, and CIFAR10 datasets. 
 } 
\end{figure*}

 \begin{figure*}[!htbp]  
\subfloat[Var]{
\label{fig:improved_subfig_b}
\begin{minipage}[t]{0.32\textwidth}
\centering
\includegraphics[width=2.38in,height=1.78in]{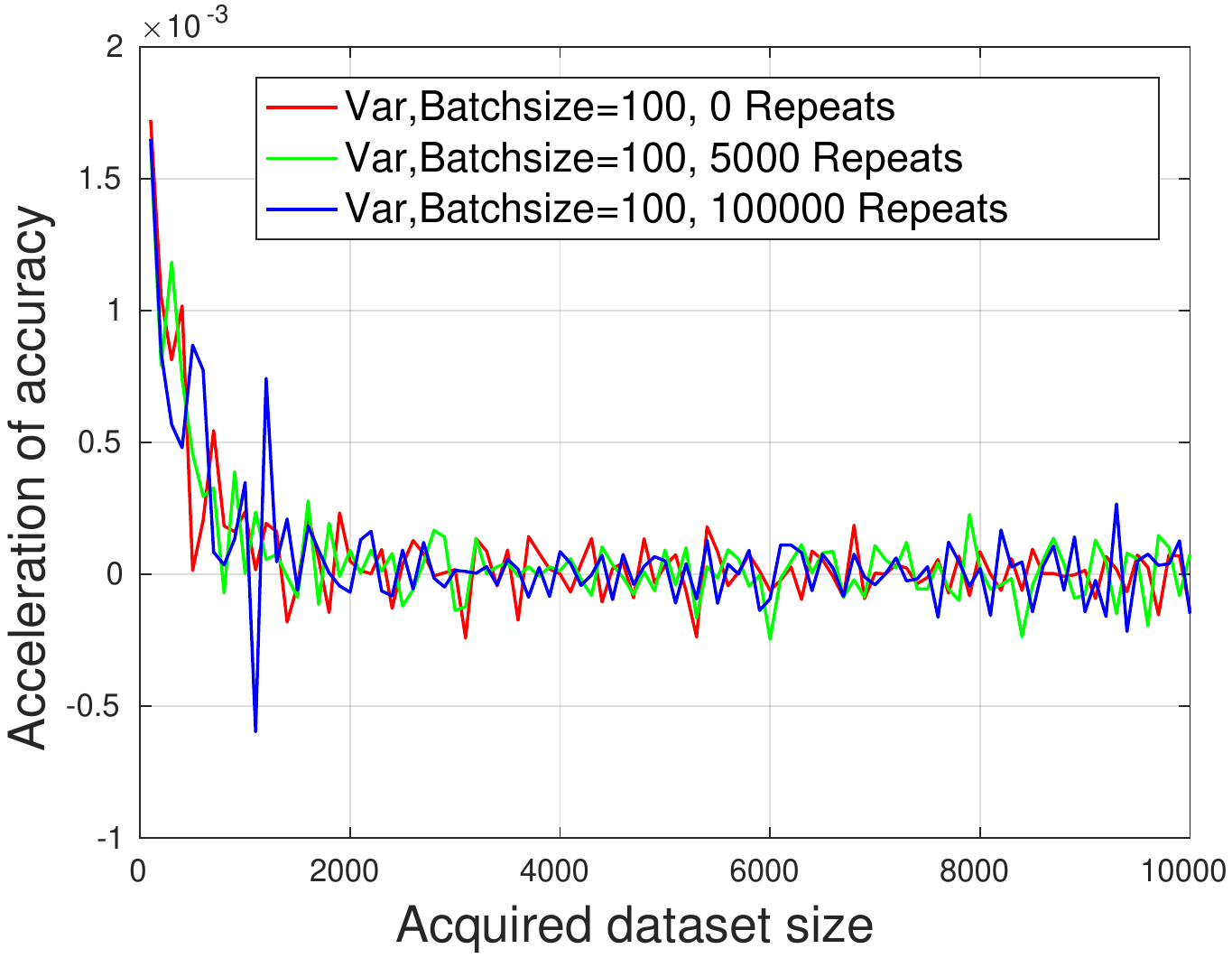}
\end{minipage}
}
\subfloat[BALD]{
\label{fig:improved_subfig_b}
\begin{minipage}[t]{0.32\textwidth}
\centering
\includegraphics[width=2.38in,height=1.78in]{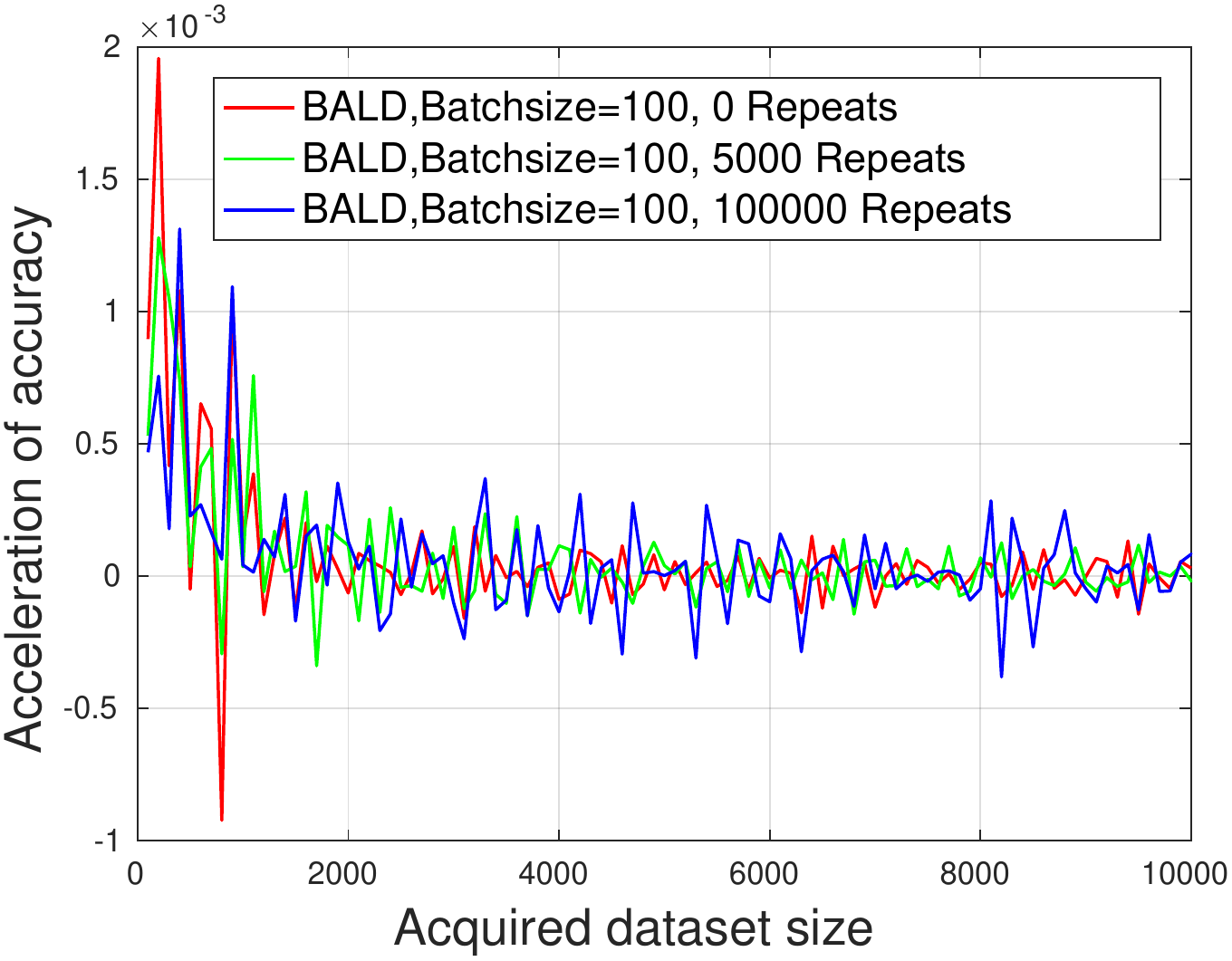}
\end{minipage}
}
\subfloat[GBALD ]{
\label{fig:improved_subfig_b}
\begin{minipage}[t]{0.32\textwidth}
\centering
\includegraphics[width=2.38in,height=1.78in]{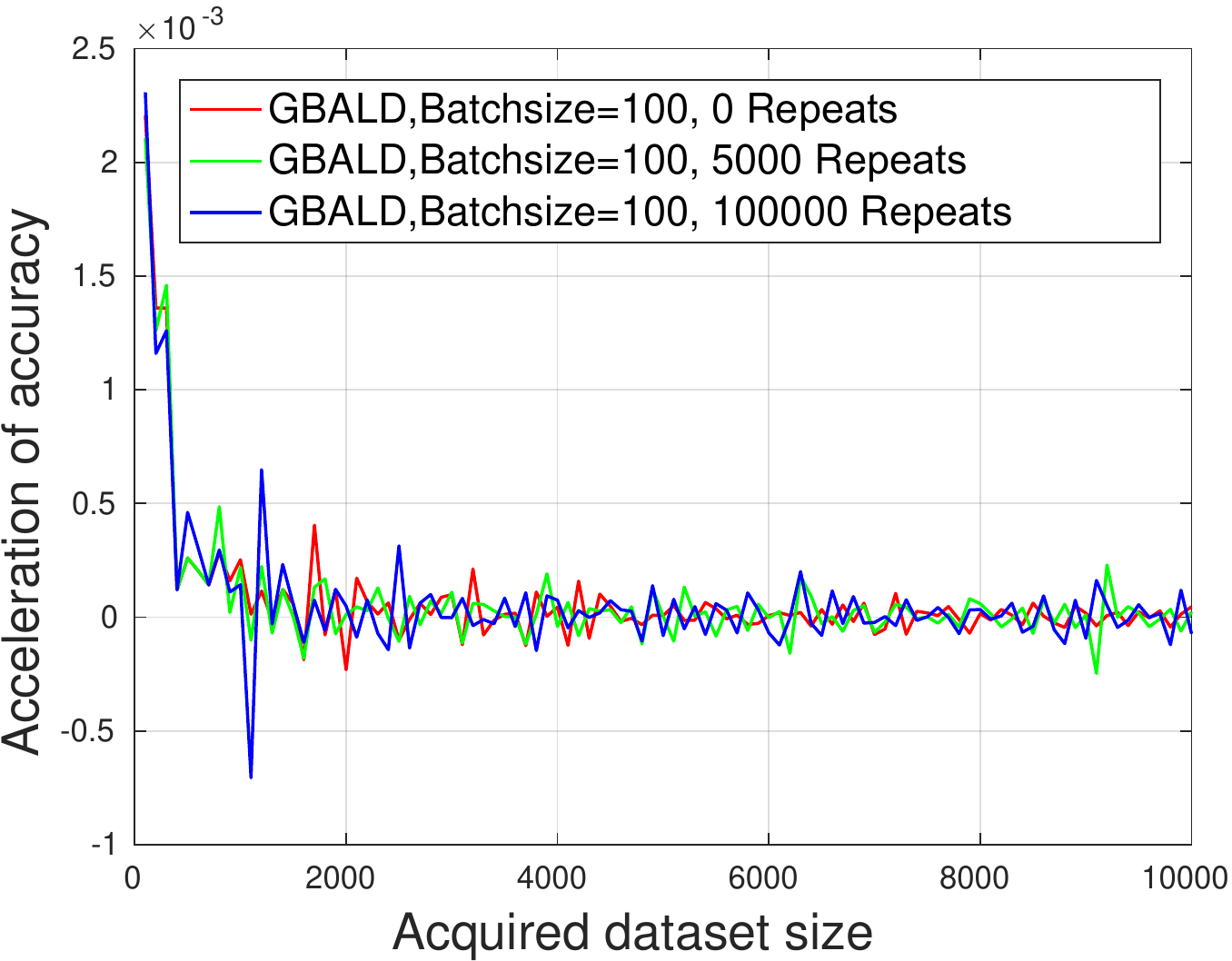}
\end{minipage}
}
\caption{Accelerations of accuracy of active  acquisitions on  SVHN with 5,000 and 10,000 repeated samples.}
 
\end{figure*}
  \begin{figure*}[!htbp] 
\subfloat[Var ]{
\label{fig:improved_subfig_b}
\begin{minipage}[t]{0.32\textwidth}
\centering
\includegraphics[width=2.38in,height=1.78in]{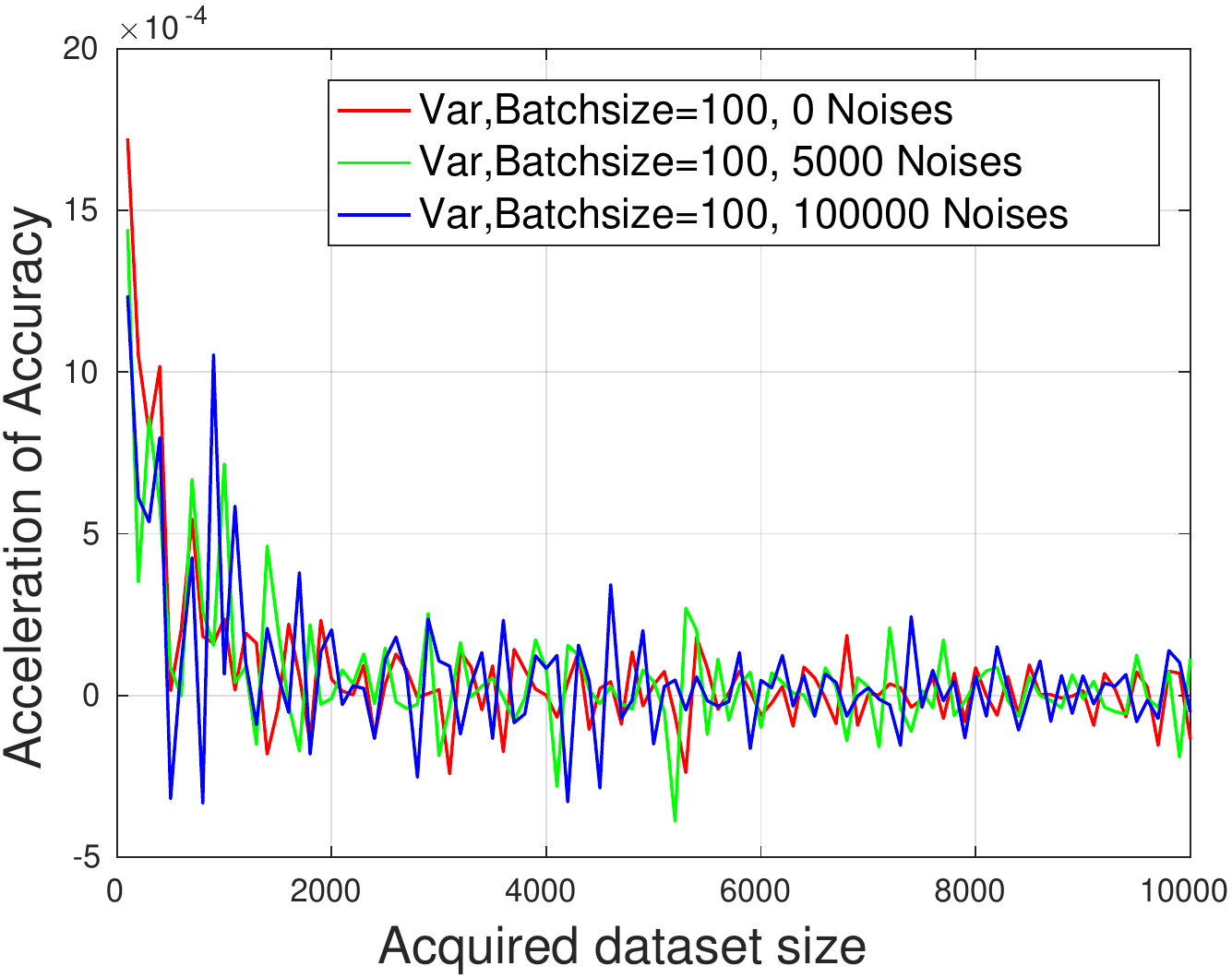}
\end{minipage}
}
\subfloat[BALD]{
\label{fig:improved_subfig_b}
\begin{minipage}[t]{0.32\textwidth}
\centering
\includegraphics[width=2.38in,height=1.78in]{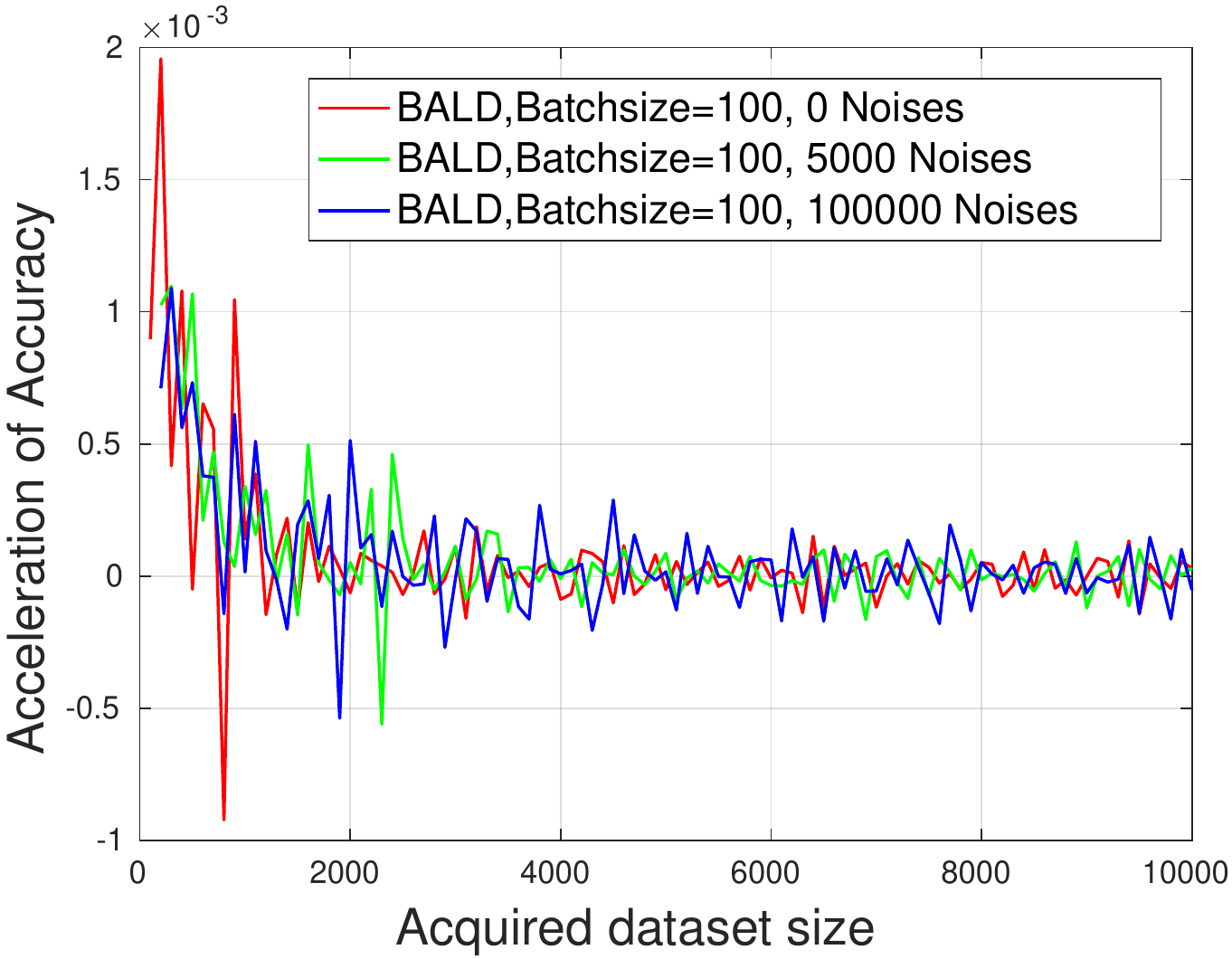}
\end{minipage}
}
\subfloat[GBALD]{
\label{fig:improved_subfig_b}
\begin{minipage}[t]{0.32\textwidth}
\centering
\includegraphics[width=2.38in,height=1.78in]{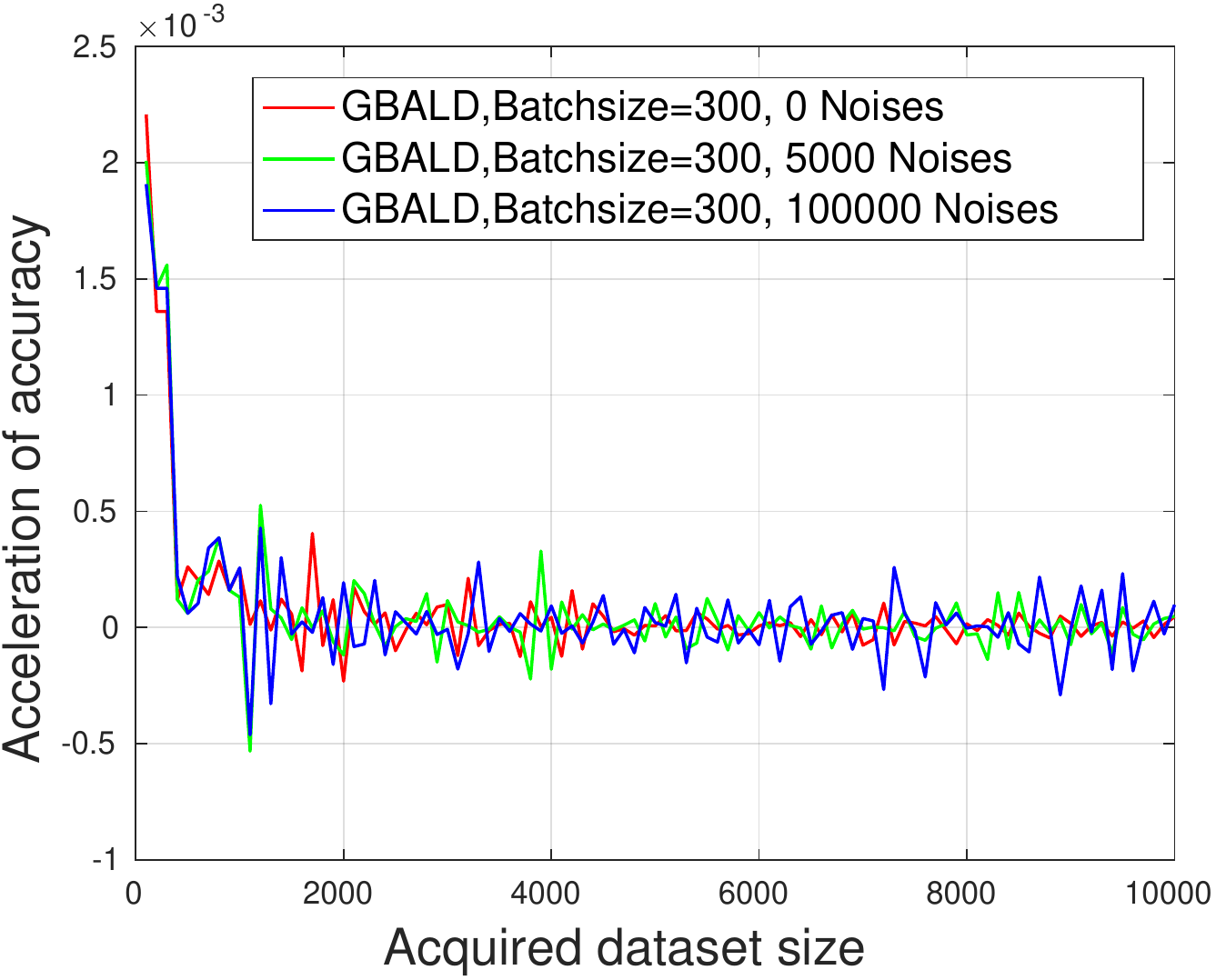}
\end{minipage}
}
\caption{Accelerations of accuracy of  active noisy acquisitions on   SVHN with  5,000 and 10,000 noisy labels.
 } 
\end{figure*}

\subsection{Hyperparameter settings}
What is the proper time to start active acquisitions using Eq.~(14) in GBALD framework? Does the  size  ratio of  the core-set and model uncertainty acquisitions affect the performance of GBALD?

\textbf{We discuss the key hyperparameter of GBALD here: the core-set size $N_\mathcal{M}$.} Table~6 presents the    relationship of    accuracies and    size of  core-set, where  start accuracy denotes the test accuracy over the initial core-set, and   ultimate accuracy denotes the test accuracy over up to $Q=20,000$ training data.   Let $b=1000, b'=500$  in GBALD,  $\mathcal{N}_\mathcal{M}$ be the number of the core-set size,   the iteration  budget  $\mathcal{A}$ of GBALD can then  be defined as   $\mathcal{A}=(Q-\mathcal{N}_\mathcal{M})/b'$. For example, if the number of the initial core-set labels are set as $\mathcal{N}_\mathcal{M}=1,000$, we have   $\mathcal{A}=(Q-\mathcal{N}_\mathcal{M})/b'\approx 38$;  if   $\mathcal{N}_\mathcal{M}=2,000$, then   $\mathcal{A}=(Q-\mathcal{N}_\mathcal{M})/b'\approx 36$. 

 From Table~6, the GBALD algorithm keeps   stable accuracies over the start,  ultimate, and  mean$\pm$std accuracies when there  inputs more than  1,000 core-set  labels. Therefore, drawing sufficient   core-set labels using Eq.~(10) to start the model uncertainty   of Eq.~(14) can maximize the performance of our GBALD framework. 

\begin{table}
 \caption{Relationship of  accuracies and   sizes of core-set  on SVHN. }
\setlength{\tabcolsep}{3.5pt}
\begin{center}
\scalebox{1.0}{
 \begin{tabular}{c| ccc } 
\hline
\multirow{2}{*}{ Size of core-set    }  &  & \multicolumn{1}{c}{Accuracies}   \\
                                         & Start  accuracy & Ultimate  accuracy&Mean$\pm$std accuracy\\
  \hline

 $N_\mathcal{M}=$ 1,000      &  0.8790&0.9344 &0.9134$\pm$0.0169 \\
   
 $N_\mathcal{M}=$ 2,000   & 0.8898   &0.9212 & 0.9151$\pm$0.0148     \\

$N_\mathcal{M}=$ 3,000    &0.8848&\textbf{0.9364}&0.9173$\pm$0.0138   \\

$N_\mathcal{M}=$ 4,000   &0.8811&0.9271&0.9146$\pm$0.0165    \\

$N_\mathcal{M}=$ 5,000    &\textbf{0.8959}&0.9342& \textbf{0.9197$\pm$0.0117}     \\
 
\hline
\end{tabular}}
\end{center} 
\end{table}

  \begin{figure*}[!htbp] 
\subfloat[MNIST ]{
\label{fig:improved_subfig_b}
\begin{minipage}[t]{0.32\textwidth}
\centering
\includegraphics[width=2.38in,height=1.78in]{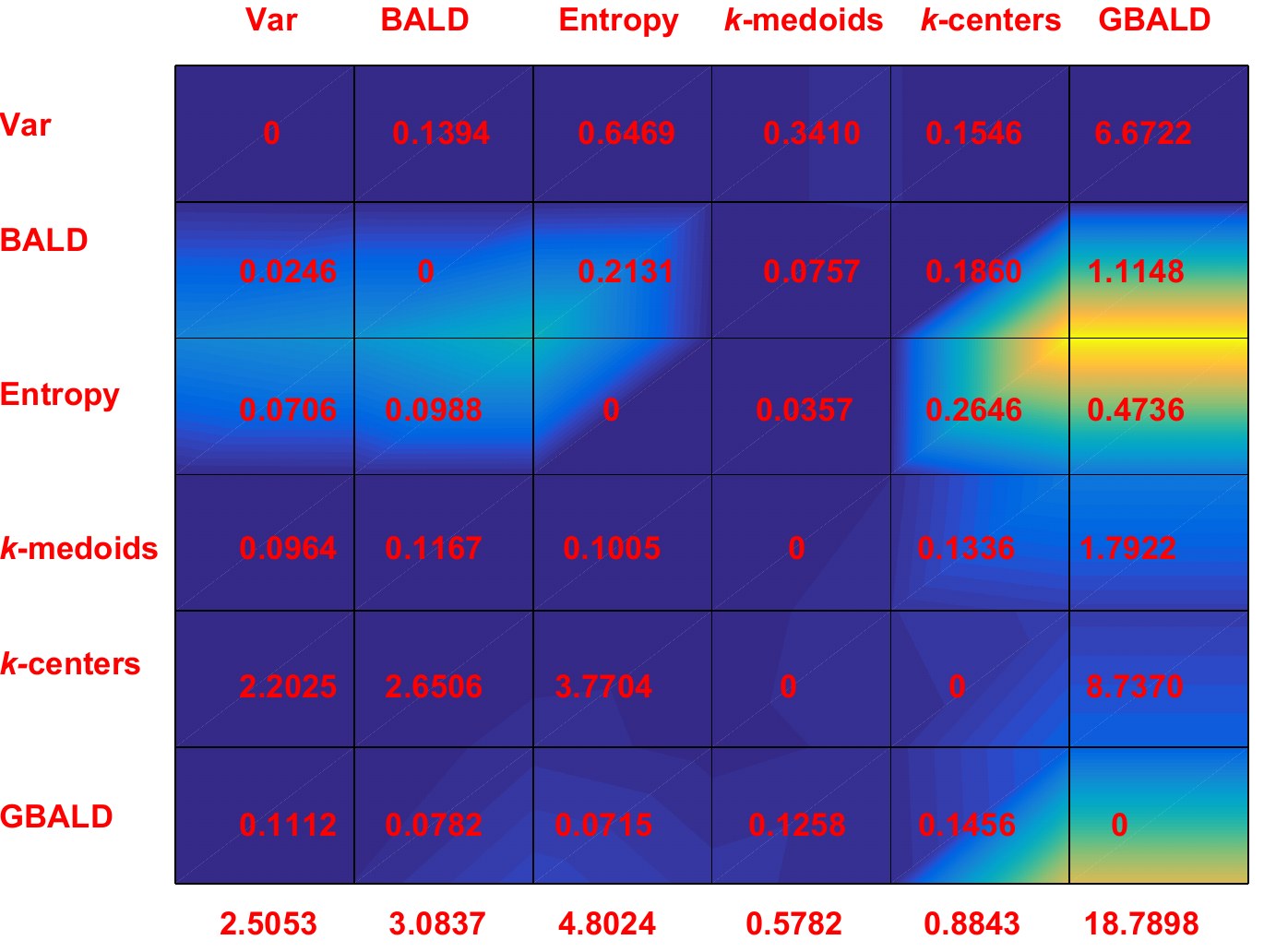}
\end{minipage}
}
\subfloat[SVHN]{
\label{fig:improved_subfig_b}
\begin{minipage}[t]{0.32\textwidth}
\centering
\includegraphics[width=2.38in,height=1.78in]{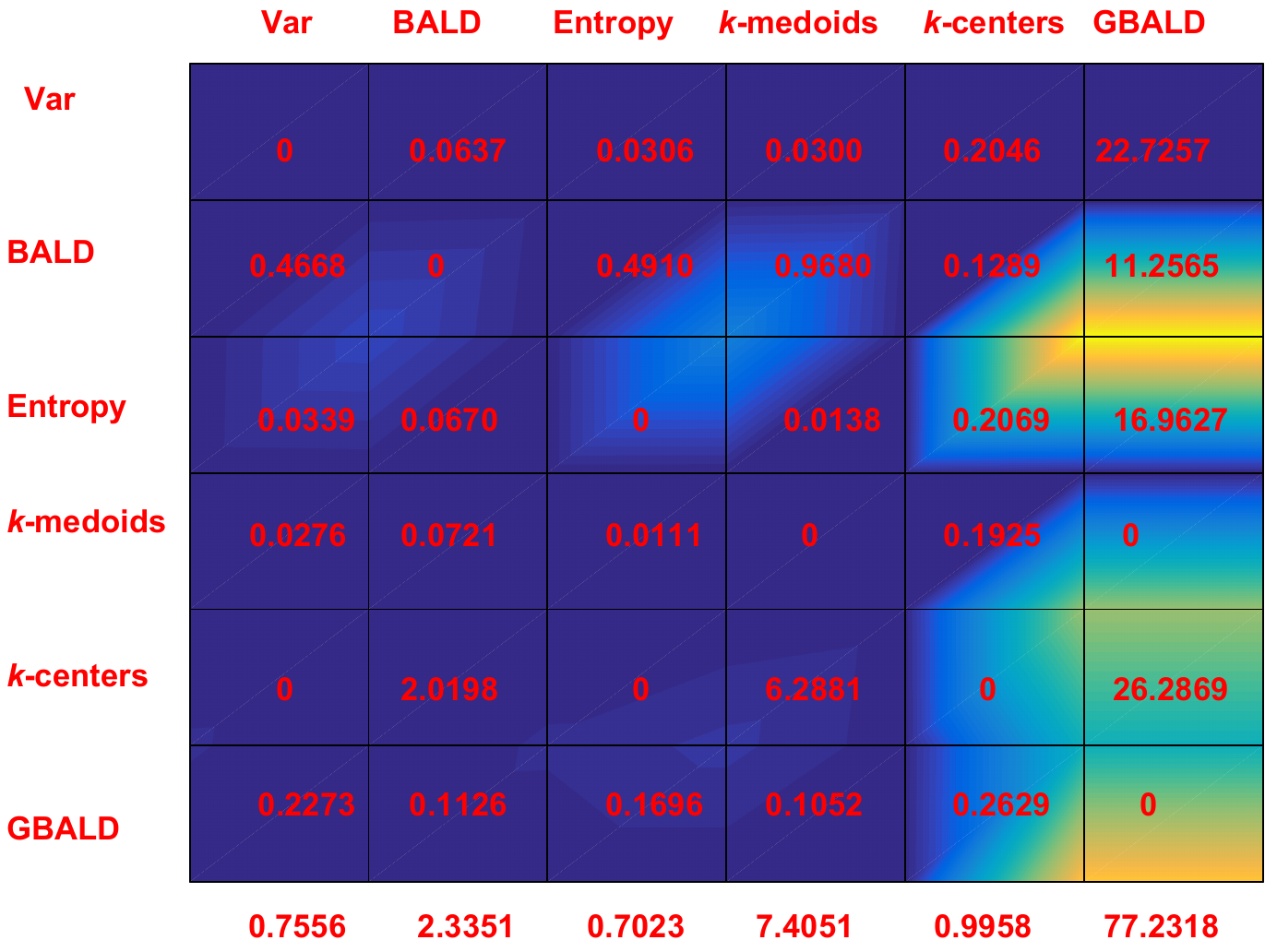}
\end{minipage}
}
\subfloat[CIFAR10]{
\label{fig:improved_subfig_b}
\begin{minipage}[t]{0.32\textwidth}
\centering
\includegraphics[width=2.38in,height=1.78in]{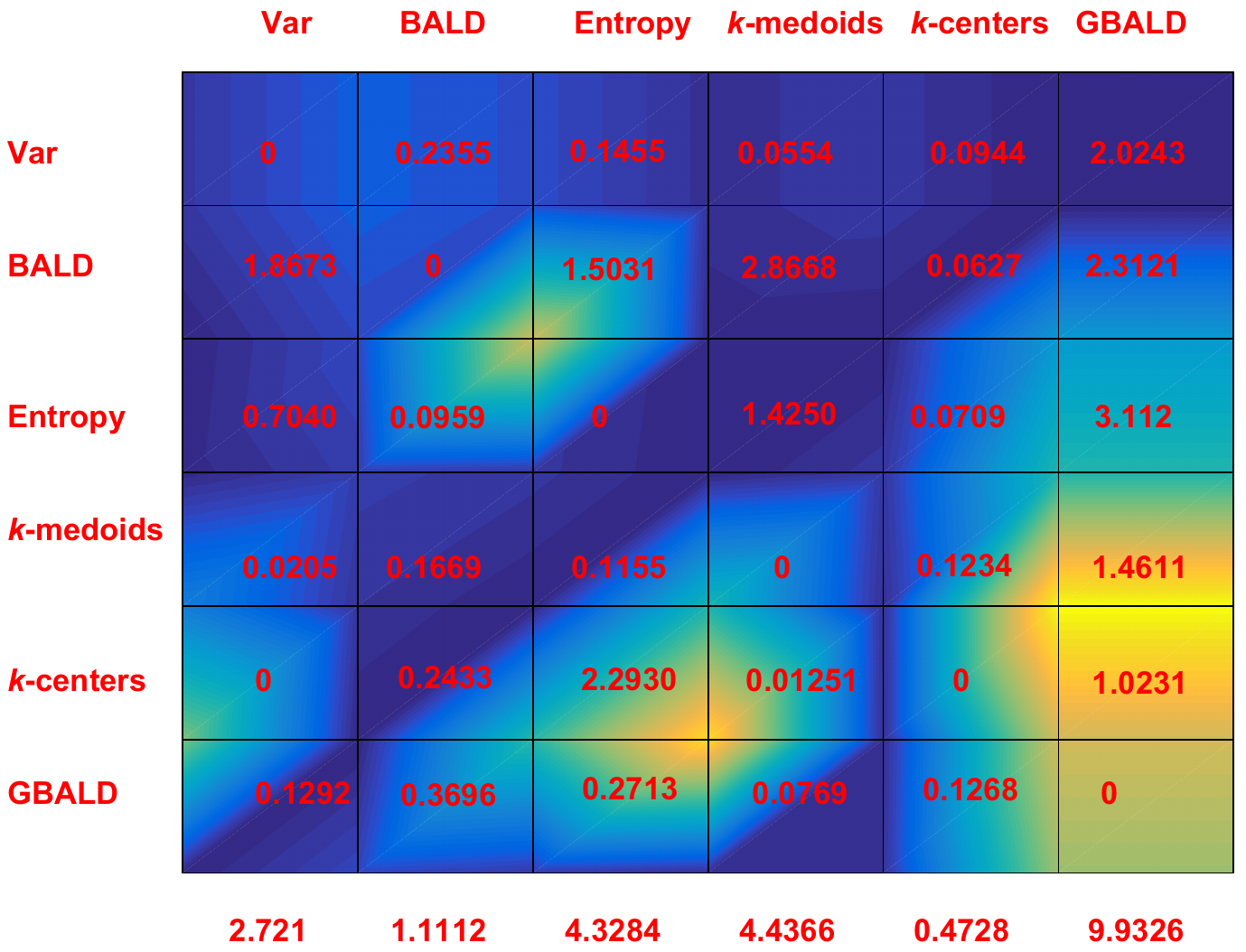}
\end{minipage}
}
\caption{A pairwise penalty matrix over active acquisitions on MNIST, SVHN, and CIFAR10.  Column-wise values 
at the bottom of each matrix show the overall performance of the compared baselines (larger value has more significant superior performance).
 } 
\end{figure*}
\textbf{Hyperparameter settings on batch returns $b$ and bath outputs $b'$.} Experiments of Sections~7.1 and 7.2 used different $b$ and $b'$ to observe the parameter perturbations. No matter what the settings of $b'$ and $b$  are, GBALD still outperforms BALD. For single acquisition of GBALD, we suggest $b=3$ and $b'=1$. For bath  acquisitions, the settings on $b'$ and $b$  are user-defined according the time cost and hardware resources.

\textbf{Hyperparameter setting on  iteration budget $\mathcal{A}$.} Given the acquisition budget $Q$, let $b'$  be the  number of the output returns at each loop,  $\mathcal{N}_\mathcal{M}$ be the number of the core-set size,   the iteration  budget  $\mathcal{A}$ of GBLAD then can be defined as   $\mathcal{A}=(Q-\mathcal{N}_\mathcal{M})/b'$.

\textbf{Other hyperparameter settings.} Eq.~(5) has one parameter $R_0$ which describes the geometric  prior from probability. The default radius of the intern balls $R_0$ is used to legalize the prior and has no further influences on  Eq.~(10). It is set as $R_0=2.0e+3$ for those three image datasets. Ellipsoid geodesic is adjusted by  $\eta$ which controls how far of the updates of   core-set to the boundaries of  distributions. It is set as $\eta=0.9$ in this paper.

\subsection{Two-sided $t$-test}
 We present two-sided (two-tailed) $t$-test \cite{hodges1956efficiency} \cite{donmez2007dual} for the learning curves of Figure~5. Different to the mean$\pm$ std of Table~1,
 $t$-test can enlarge the significant difference of those baselines.  In the typical $t$-test, the two groups of observations usually require a degree of freedom  smaller than 30. However, the numbers of  the breakpoints of MNIST, SVHN, and CIFAR10 are 61, 101, and 201, respectively, thereby holding a degree of freedom of 60, 100, 200, respectively. It is thus we introduce $t$-test score to directly compare the significant difference of the pairwise baselines.

 $t$-test score between any pair group    of breakpoints  are defined as follows.
Let $B_1=\{\alpha_1,\alpha_2, ..., \alpha_n\}$ and $B_2=\{\beta_1, \beta_2, ..., \beta_n\}$, there exists $t$-score  of
\[t-{\rm score}=\sqrt{n}\frac{\mu}{\sigma},\]
where $\mu=\frac{1}{n}\sum_{i=1}^n (\alpha_i-\beta_i)$, and $\sigma=\sqrt{\frac{1}{n-1}\sum_{i=1}^n (\alpha_i-\beta_i-\mu)^2}$.

\par In two-sided $t$-test,  $B_1$ beats $B_2$ on breakpoints $\alpha_i$ and $\beta_i$ satisfying a condition of  $t-{\rm score}>\nu$; $B_2$ beats $B_1$ on breakpoints $\alpha_i$ and $\beta_i$ satisfying  a condition of  $t-{\rm score}<-\nu$, where $\nu$ denotes the hypothesized criterion with a given confidence risk. Following \cite{ash2019deep}, we add a penalty of $\frac{1}{e}$  to each pair of breakpoints, which further enlarges their differences in the aggregated penalty matrix, where $e$ denotes the number of  $B_1$ beats $B_2$ on all breakpoints.    All penalty values finally calculate their  $L_1$ expressions.

Figure~12 presents the penalty matrix over the learning curves of Figure~5.  Column-wise values 
at the bottom of each matrix show the overall performance of the compared baselines. As the shown results, GBALD has significant performance than that of the other baselines over the three datasets. Especially for SVHN, it has superior performance.

\section{Conclusion}
 We have introduced a novel Bayesian AL framework termed GBALD from the  perspective of geometry, which seamlessly incorporates the representative (core-set)  and informative (model uncertainty estimation) acquisitions to accelerate the training of a DNN model. Our GBALD yields significant improvements over BALD, flexibly resolving the limitations of an  uninformative   prior and the redundant information by optimizing the acquisition on an ellipsoid. Generalization analysis has asserted that, geodesic search with ellipsoid  has 
  tighter lower error bound and higher probability to  achieve a zero error, than that of geodesic search with  sphere.  
  Compared to the representative or informative acquisition algorithms,   experiments  show that our GBALD spends much fewer acquisitions to accelerate the  convergence of training model. Moreover, it keeps slighter accuracy reduction than other baselines against repeated and noisy acquisitions. 
 Leveraging the acquisition sizes of the geometric  core-set decides how our framework   interacts with  the  representative and informative acquisitions. It will be  a future work of Auto-AL that derives  an advanced AL pipeline.

\bibliographystyle{IEEEtran}
\bibliography{References}

 \onecolumn
   \begin{figure*}
\label{fig:improved_subfig_b}
\centering
\includegraphics[width=3.8in,height=2.28in]{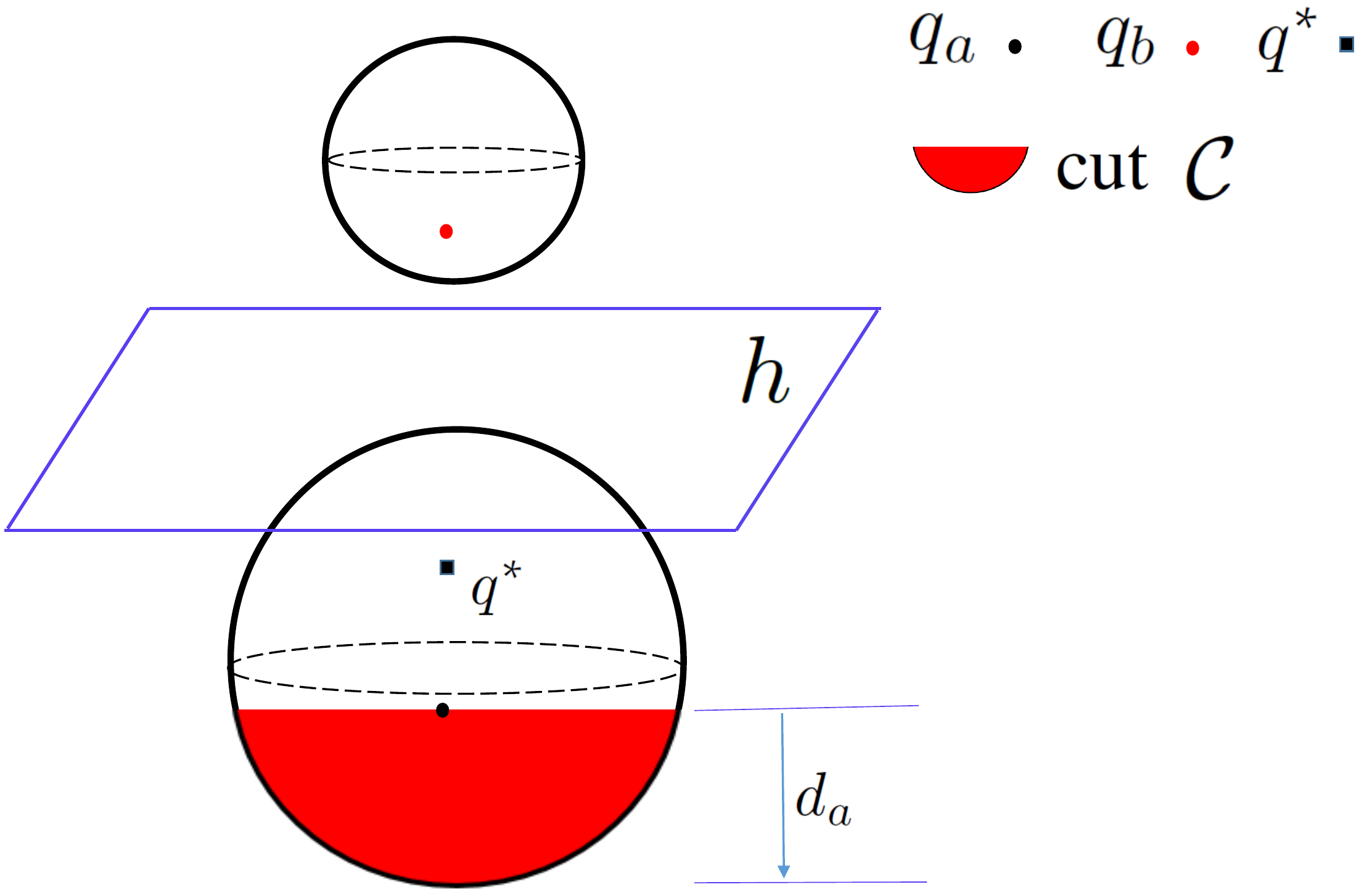}
\caption{Assumption of the generalization analysis. The ball above $h$ denotes $S_b$, the ball below $h$ denotes $S_a$, and $R_b<R_a$.  Spherical cap$^1$  of the half-sphere of $S_a$ is denoted as $\mathcal{D}$.
 } 

\end{figure*}

\subsection*{A.1 Case study of generalization analysis of $err(h,3)$ of geodesic search with sphere }
\begin{theorem}
Given a  perceptron  function $h={w_1 x_1+w_2x_2+w_3}$ that classifies $A$ and $B$,  and a sampling budget $k$.  By drawing core-set on $S_a$ and $S_b$, the minimum distance to the boundaries of  that core-set elements  of  $S_A$ and $S_B$,    are defined as  $d_a$ and  $d_b$, respectively.  Let  $err(h,k)$ be the classification error rate with respect to $h$ and $k$, given $\frac{\pi}{\varphi}={\rm arcsin} \frac{R_a-d_a}{R_a}$,  we have an inequality of error:

 \[  {\rm min} \Bigg \{ \frac{(2R_a+t)(R_a-t)^2}{4R_a^3+4R_b^3},   \frac{(2R_b+t')(R_b-t')^2}{4R_b^3+4R_a^3}\Bigg\}< err(h,3)<0.3334,\] 
where $t= \frac{R_a^2}{3}+\sqrt[3]{-\frac{\mu}{2 \pi}+\sqrt{\frac{\mu^2}{4 \pi^2}-\frac{ \pi^3  R_a^3}{27\pi^3}      }       }+\sqrt[3]{-\frac{\mu}{2 \pi}-\sqrt{\frac{\mu^2}{4 \pi^2}-\frac{ \pi^3  R_a^3}{27\pi^3}      }       }$, $\mu= (\frac{2}{9}-\frac{1}{\varphi}cos \frac{\pi}{\varphi}  )\pi R_a^3- \frac{4\pi R_b^3}{9}$, $t'= \frac{R_b^2}{3}+\sqrt[3]{-\frac{\mu'}{2 \pi}+\sqrt{\frac{\mu'^2}{4 \pi^2}-\frac{ \pi^3  R_b^3}{27\pi^3}      }       }+\sqrt[3]{-\frac{\mu'}{2 \pi}-\sqrt{\frac{\mu'^2}{4 \pi^2}-\frac{ \pi^3  R_b^3}{27\pi^3}      }       } $,  and  $\mu'= (\frac{2}{9}-\frac{1}{\varphi}cos \frac{\pi}{\varphi}  )\pi R_b^3- \frac{4\pi R_a^3}{9}$.
\end{theorem}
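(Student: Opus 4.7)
My plan is to recognize Theorem~5 as the specialization of Theorem~\ref{Geodesic_search_with_sphere} at $k=3$ and to verify each quantity by direct substitution, then to reconstruct the geometric argument of Theorem~1 at this concrete sample budget. Substituting $k=3$ into the general expressions immediately gives $\frac{2k-4}{3k}=\frac{2}{9}$, $\frac{4\pi R_b^3}{3k}=\frac{4\pi R_b^3}{9}$, and $\frac{1}{k}\approx 0.3334$, which recover the stated $\mu$, $t$, $\mu'$, $t'$, and the declared upper bound. The substantive content is therefore showing that the lower and upper bounds hold in the $k=3$ geometry; everything else is purely algebraic.

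For the lower bound, I would work under Assumption~\ref{Assumption_sphere}. Each core-set element lies at distance at least $d_a$ from the boundary of $S_a$, so after three greedy acquisitions there remains an uncovered spherical cap of residual height $R_a-t$ on $S_a$, and symmetrically of height $R_b-t'$ on $S_b$. Using the standard cap-volume $V_{\rm cap}(R,t)=\tfrac{\pi}{3}(R-t)^2(2R+t)$ and the sphere volume $\tfrac{4}{3}\pi R^3$, I would balance the volume covered by the three acquisitions against the angular-coverage factor $\frac{1}{\varphi}\cos\frac{\pi}{\varphi}$ which translates the surface-distance constraint $d_a$ into a radial depth. This balance produces a depressed cubic of the form $\pi t^3 - \pi R_a^2 t - \mu = 0$, and Cardano's formula yields precisely the three-summand closed form for $t$ stated in the theorem. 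The perceptron $h$ is linear while the two classes are embedded on $S_a$ and $S_b$, so any point in the residual cap of the sphere whose core-set is poorer contributes fully to the error; dividing the smaller cap volume by the combined class volume $\tfrac{4}{3}\pi(R_a^3+R_b^3)$ yields the $\min\{\cdot,\cdot\}$ lower bound after canceling the common $\pi/3$. The upper bound $err(h,3)<1/3$ follows from the approximately linear perceptron analysis of~\cite{sugiyama2006active}, which bounds the misclassified fraction under a linear hypothesis with $k$ greedy acquisitions by $1/k$.

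The main obstacle is the cubic step. The raw volumetric balance mixes $R_a$, $R_b$, $\varphi$, and $k$ nontrivially, and the sign and magnitude of the constant term $\mu$ must be tracked so that the depressed cubic has exactly the shape Cardano's formula expects, producing the two cube-roots with the square-root argument $\tfrac{\mu^2}{4\pi^2}-\tfrac{R_a^3}{27}$ as written. In particular, bookkeeping the angular factor $\frac{1}{\varphi}\cos\frac{\pi}{\varphi}$ so that it enters $\mu$ with the correct coefficient, and checking that the resulting $t$ lies in $[0,R_a]$ so that the cap volume $(R_a-t)^2(2R_a+t)$ is nonnegative and interpretable as a probability mass, is where the argument is most delicate. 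Once the cubic is cleanly reduced and Cardano is applied, the symmetric argument on $S_b$ produces $t'$ and $\mu'$, and the upper bound from~\cite{sugiyama2006active} closes the inequality.
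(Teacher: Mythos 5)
Your plan follows essentially the same route as the paper's Appendix A.1 proof: specialize $k=3$ (so $\frac{2k-4}{3k}=\frac{2}{9}$, $\frac{4\pi R_b^3}{3k}=\frac{4\pi R_b^3}{9}$, $\frac{1}{k}\approx 0.3334$), compute the spherical-cap volume with the $\frac{1}{\varphi}\cos\frac{\pi}{\varphi}$ term coming from the $\arcsin$ substitution, reduce to the depressed cubic $\pi t^3-\pi R_a^2 t+\mu=0$, apply Cardano, and divide a cap-type volume by $\frac{4}{3}\pi(R_a^3+R_b^3)$; the upper bound $1/k$ is asserted rather than derived in the paper as well, so your appeal to the perceptron analysis is no worse than the original.

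Two concrete discrepancies are worth naming. First, you omit the step the paper uses to decide \emph{which} sphere hosts the error region: from ${\rm Vol}(\mathcal{D})>0$ the paper deduces $\sqrt[3]{2}\,R_b<R_a$, hence ${\rm Vol}(B)<\frac{1}{3}({\rm Vol}(A)+{\rm Vol}(B))$ and therefore $q^*\in A$, which is what justifies treating class $B$ as contributing near-zero error and taking the minimum over the two symmetric cases. Without this, "any point in the residual cap of the sphere whose core-set is poorer contributes fully to the error" is an unsupported assertion. Second, your computation divides the cap volume itself, $\frac{\pi}{3}(2R_a+t)(R_a-t)^2$, by the total volume, which reproduces the numerator in the statement as literally printed; but the paper's proof takes the error region to be ${\rm Vol}(A)-{\rm Vol}(\mathcal{C})$, i.e.\ the sphere \emph{minus} the height-$d_a$ cap, and arrives at the numerator $4R_a^3-(2R_a+t)(R_a-t)^2$ --- which is also what the main-text Theorem~\ref{Geodesic_search_with_sphere} states. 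So the A.1 statement you were given is internally inconsistent with the paper's own derivation, and your route lands on the typo'd form rather than the form the proof actually establishes; you should decide which quantity you are bounding and carry it through consistently.
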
 


\begin{proof}
 Given the unseen acquisitions of $\{q_a,q_b,q^*\}$, where $q_a\in A$, $q_b\in B$, and $q^*\in A$ or $q^*\in B$ is uncertain. However, the position of $q^*$ largely decides $h$. Therefore,  the proof studies  the error bounds highly related to $q^*$ in terms of  two cases: $R_a\geq R_b$ and $R_a<R_b$.
 
 1) If $R_a\geq R_b$, $q^*\in  {A}$. Estimating the position of $q^*$ starts from the analysis on $q_a$.
  Given the volume function ${\rm Vol}(\cdot)$ over the 3-D geometry, we know: ${\rm Vol}({A})=\frac{4\pi}{3} R_a^3$ and ${\rm Vol}({B})=\frac{4\pi}{3} R_b^3$. Given $k=3$ over $S_{a}$ and $S_{b}$, we define the minimum distance of $q_a$ to the boundary of ${A}$ as $d_a$. Let $S_a$ be cut off by a cross section $h'$, where 
 $\mathcal{C}$ be the cut and  $\mathcal{D}$ be the  \emph{spherical cap}\footnote{https://en.wikipedia.org/wiki/Spherical\_cap} 
 of the half-sphere (see Figure~13) that satisfy
 \begin{equation}
{\rm Vol}(\mathcal{C})=\frac{2}{3}\pi R_a^3 -{\rm Vol}(\mathcal{D})=\frac{4\pi(R_a^3+R_b^3)}{9}, 
 \end{equation}
and the volume of $\mathcal{D}$ is 
\begin{equation}
 \begin{split}
{\rm Vol}(\mathcal{D})&
                                   =\pi ( \sqrt{R_a^2-(R_a-d_a)^2} )^2(R_a-d_a)+\int_0^{2\pi \sqrt{R_a^2-(R_a-d_a)^2}  }  \frac{{\rm arcsin} \frac{R_a-d_a}{R_a} }{2\pi}\pi R_a^2 \ d x \\
                               &=   \pi ( {R_a^2-(R_a-d_a)^2} )(R_a-d_a)+\frac{{\rm arcsin} \frac{R_a-d_a}{R_a} }{2\pi}\pi R_a^2(2\pi \sqrt{R_a^2-(R_a-d_a)^2})\\
                                &= \pi ({R_a^2-(R_a-d_a)^2} )(R_a-d_a)+\pi R_a^2 {\rm arcsin} \frac{R_a-d_a}{R_a} \sqrt{R_a^2-(R_a-d_a)^2}.\\
\end{split}
 \end{equation}
 Let $\frac{\pi}{\varphi}={\rm arcsin} \frac{R_a-d_a}{R_a}$, Eq.~(20) can be written as 
\begin{equation}
 \begin{split}
{\rm Vol}(\mathcal{D})=  \pi ({R_a^2-(R_a-d_a)^2} )(R_a-d_a)+\frac{\pi  R_a^3}{\varphi }cos \frac{\pi}{\varphi}.\\
\end{split}
 \end{equation}
 Introducing Eq.~(21) to Eq.~(19), we have 
 
 \begin{equation}
 \begin{split}
(\frac{2}{3}- \frac{1}{\varphi} cos \frac{\pi}{\varphi}     )\pi R_a^3- \pi ({R_a^2-(R_a-d_a)^2} )(R_a-d_a) =\frac{4\pi(R_a^3+R_b^3)}{9}.
\end{split}
\end{equation}
 
Let $t=R_a-d_a$,  Eq.~(22) can be rewritten as 
 \begin{equation}
 \begin{split}
 \pi t^3- \pi  R_a^2t +(\frac{2}{9}-\frac{1}{\varphi}cos \frac{\pi}{\varphi}  )\pi R_a^3- \frac{4\pi R_b^3}{9}=0.
\end{split}
\end{equation}

To simplify Eq.~(23), let $\mu= (\frac{2}{9}-\frac{1}{\varphi}cos \frac{\pi}{\varphi}  )\pi R_a^3- \frac{4\pi R_b^3}{9}$,  Eq.~(23) then can be written as
 \begin{equation}
 \begin{split}
 \pi t^3- \pi  R_a^2t +\mu=0.
\end{split}
\end{equation}

The positive solution  of  $t$ can be
 \begin{equation}
 \begin{split}
&t= \frac{R_a^2}{3}+\sqrt[3]{-\frac{\mu}{2 \pi}+\sqrt{\frac{\mu^2}{4 \pi^2}-\frac{ \pi^3  R_a^3}{27\pi^3}      }       }+\sqrt[3]{-\frac{\mu}{2 \pi}-\sqrt{\frac{\mu^2}{4 \pi^2}-\frac{ \pi^3  R_a^3}{27\pi^3}      }       }.
\end{split}
\end{equation}

 Based on Eq.~(19),  we know 
  \begin{equation}
 \begin{split}
{\rm Vol}(\mathcal{D})&=\frac{2}{3}\pi R_a^3-\frac{4\pi(R_a^3+R_b^3)}{9}\\
         &=\frac{2}{9}\pi R_a^3- \frac{4}{9}\pi R_b^3>0.      \\               
\end{split}
\end{equation}
Thus, $ \sqrt[3]{2}R_b< R_a$. We next prove $q^*\in {A}$.  Based on Eq.~(26), we know
  \begin{equation}
 \begin{split}
\pi R_b^3  <           \frac{1}{2}\pi R_a^3.
\end{split}
\end{equation}
Then,   the following inequalities hold: 1)$2\pi R_b^3  <            \pi R_a^3$, 2) $\frac{2}{3}\pi R_b^3  <     \frac{1}{3}       \pi R_a^3$, and 3) $\frac{2}{3}\pi R_b^3+ \frac{1}{3}\pi R_b^3  <    \frac{1}{3}       \pi R_a^3+\frac{1}{3}\pi R_b^3$. Finally, we have
  \begin{equation}
 \begin{split}
  \pi R_b^3  <          \frac{1}{3}(\pi R_a^3+\pi R_b^3).
\end{split}
\end{equation}
Therefore, ${\rm Vol}({B})<\frac{1}{3}({\rm Vol}( {A})+{\rm Vol}({B}))$. We thus know: 1) $q_a\in  {A}$ and it is with a minimum distance $d_a$ to the boundary of $ S_a$, 2) $q_b\in {B}$, and 3) $q^*\in {A}$. Therefore, class $ {B}$ can be deemed as having a very high probability to achieve a  nearly zero generalization error  and the position of $q^*$ largely decides the upper bound of the  generalization error of $h$. 
\par In $S_a$ that covers class $A$,  the nearly optimal  error region can be bounded as the  spherical cap  of $S_a$ with a volume constraint of 
${\rm Vol}({A})-{\rm Vol}(\mathcal{C})$. We thence have an inequality of
  \begin{equation}
 \begin{split}
\frac{{\rm Vol}( {A})-{\rm Vol}(\mathcal{C})}{{\rm Vol}( {A})+{\rm Vol}( {B})}<err(h,3)<\frac{1}{3}.
\end{split}
\end{equation} 
We next calculate the volume of the  spherical cap:
\begin{equation}
 \begin{split}
 {\rm Vol}(A)-{\rm Vol}(\mathcal{C})&=\int_{-R_a}^{R_a-d_a} \pi x^2 d\ y  \\
                                                                  & = \pi\int_{R_a-d_a}^{R_a}  R_a^2-y^2 d\ y  \\
                                                                  & = \frac{4}{3}\pi R_a^3- {\pi}d_a^2(R_a-\frac{d_a}{3}).      \\
\end{split}
\end{equation}
 Eq.~(29) then is rewritten as
  \begin{equation}
 \begin{split}
\frac{\frac{4}{3}\pi R_a^3- \frac{\pi}{3}(3R_a-d_a)d_a^2}{\frac{4}{3}\pi R_a^3+\frac{4}{3}\pi R_b^3}<err(h,3)<0.3334,
\end{split}
\end{equation} 
Then, we have the error bound of 
 \begin{equation}
 \begin{split}
 \frac{4R_a^3-(3R_a-d_a)d_a^2}{4R_a^3+4R_b^3}< err(h,3)<0.3334.    
\end{split}
\end{equation} 
Introducing $d_a=R_a-t$, Eq.~(32) is written as 
 \begin{equation}
 \begin{split}
 \frac{4R_a^3-(2R_a+t)(R_a-t)^2}{4R_a^3+4R_b^3}< err(h,3)<0.3334.    
\end{split}
\end{equation}

2) With another assumption of $R_a<R_b$, we follow the same proof skills  of  $R_a\geq R_b$ and know $ \frac{4R_b^3-(3R_b-d_b)d_b^2}{4R_b^3+4R_a^3}< err(h,3)<0.3334$, i.e.  where $d_b=R_b-t'$ and $t'= \frac{R_b^2}{3}+\sqrt[3]{-\frac{\mu'}{2 \pi}+\sqrt{\frac{\mu'^2}{4 \pi^2}-\frac{ \pi^3  R_b^3}{27\pi^3}      }       }+\sqrt[3]{-\frac{\mu'}{2 \pi}-\sqrt{\frac{\mu'^2}{4 \pi^2}-\frac{ \pi^3  R_b^3}{27\pi^3}      }       } $,  and  $\mu'= (\frac{2}{9}-\frac{1}{\varphi}cos \frac{\pi}{\varphi}  )\pi R_b^3- \frac{4\pi R_a^3}{9}$.

We thus conclude that ${\rm min} \Bigg \{ \frac{4R_a^3-(2R_a+t)(R_a-t)^2}{4R_a^3+4R_b^3},   \frac{4R_b^3-(2R_b+t')(R_b-t')^2}{4R_b^3+4R_a^3}\Bigg\}< err(h,3)<0.3334$.
\end{proof}


\subsection*{A.2 Specification of Assumption~1 }
 
In clustering stability, $\gamma$-tube structure that surrounds the cluster boundary, largely decides the performance of a learning algorithm. Definition of  $\gamma$-tube is as follows.

\begin{definition}
$\gamma$-tube  ${\rm Tube}_\gamma(f)$   is a set of points distributed in the boundary of the cluster.
\begin{equation}
\begin{split}
{\rm Tube}_\gamma(f): = \{ x\in {X} | \ell(x,B(f)) \leq \gamma \},
\end{split} 
\end{equation}
where  ${X}$ is a noise-free  cluster  with  $n$ samples,   $B(f):=\{x\in {X}, f  \ \mbox{ is discontinuous  at   \  x } \}$,  $f$ is a clustering function, and $\ell(\cdot,\cdot)$ denotes the distance function.
\end{definition}

Following this conclusion, representation data can achieve the optimal generalization error if they are spread over the tube structure. Let $\gamma=d_a$, the probability of achieving a nearly zero generalization   error  can be expressed as the   volume ration of $\gamma$-tube and $S_a$:
 \begin{equation}
\begin{split}
{\rm Pr}[err(h,k)=0]_{\rm Sphere}&=\frac{{\rm Vol}({\rm Tube_\gamma}) }{{\rm Vol}{(S_a)} }\\
                              &=\frac{\frac{4}{3}\pi R_a^3-\frac{4}{3}(R_a-d_a)^3}{\frac{4}{3}\pi R_a^3} \\
                               &=1-\frac{t_k^3}{R_a^3},\\
\end{split} 
\end{equation}
where $t_k$ keeps consistent with  Eq.~(40). With the initial sampling from the tube structure of  class $A$, the subsequent acquisitions of AL would be updated from the tube structure of class $B$. If the initial sampling comes from the tube structure of $B$, the next acquisition must be updated from the tube structure of $A$.  With the updated acquisitions spread over the tube structures of both classes, $h$ is easy to achieve a nearly zero error.

\subsection*{A.3  Specification of Assumption~2 }
 
Following the specification of Assumption~1, volume of the tube is redefined as ${\rm Vol(Tube_\gamma)}=\frac{4}{3}\pi R_{a_1}R_{a_2}R_{a_3}$. Then, we know
 \begin{equation}
\begin{split}
{\rm Pr}[err(h,k)=0]_{\rm Ellipsoid}&=\frac{{\rm Vol}({\rm Tube_\gamma}) }{{\rm Vol}{(E_a)} }\\
                              &=\frac{\frac{4}{3}\pi R_{a_1}R_{a_2}R_{a_3}-\frac{4}{3}(R_{a_1}-d_a)(R_{a_2}-d_a)(R_{a_3}-d_a) }{\frac{4}{3}\pi R_{a_1}R_{a_2}R_{a_3}} \\
                               &=1-\frac{\lambda_{k_1}\lambda_{k_2}\lambda_{k_3}}{R_{a_1}R_{a_2}R_{a_3}},\\
\end{split} 
\end{equation}
where $\lambda_{k_i}= \frac{R_{a_i}^2}{3}+\sqrt[3]{-\frac{\sigma_{k_i}}{2 \pi}+\sqrt{\frac{\sigma_{k_i}^2}{4 \pi^2}-\frac{ \pi^3  R_{a_i}^3}{27\pi^3}      }       }+\sqrt[3]{-\frac{\sigma_{k_i}}{2 \pi}-\sqrt{\frac{\sigma_{k_i}^2}{4 \pi^2}-\frac{ \pi^3  R_{a_i}^3}{27\pi^3}      }       }$, and $\sigma_{k_i}= (\frac{2k-4}{3k}-\frac{\pi R_{a_i}  }{2\varphi}  )\pi R_{a_1}R_{a_2}R_{a_3}- \frac{4\pi R_{b_1}R_{b_2}R_{b_3}}{3k}$, $i=1,2,3$.

\subsection*{A.4  Proof of Theorem 1}
We next present the generalization errors  against an agnostic sampling budget $k$ following the above proof technique.
\begin{proof}
 The proof studies two cases: $R_a\geq R_b$ and $R_a<R_b$. 1) If $R_a\geq R_b$,  we estimate the optimal position of $q_a$ that satisfies $q_a \in A$. Given the volume function ${\rm Vol}(\cdot)$ over the 3-D geometry, we know: ${\rm Vol}({A})=\frac{4\pi}{3} R_a^3$ and ${\rm Vol}({B})=\frac{4\pi}{3} R_b^3$. 
Assume $q_a$ be the nearest representative data to the boundary of $S_a$,  $q_b$ be the nearest representative data to the boundary of $S_b$, and $q^*$ be the nearest  representative data to $h$ either in  $S_a$ or $S_b$. 
Given   the minimum distance of $q_a$ to the boundary of $\mathcal{A}$ as $d_a$. Let $S_{a}$ be cut off by a cross section $h'$, where 
 $\mathcal{C}$ be the cut and  $\mathcal{D}$ be the  spherical cap  of the half-sphere that satisfy
 \begin{equation}
{\rm Vol}(\mathcal{C})=\frac{2}{3}\pi R_a^3 -{\rm Vol}(\mathcal{D})=\frac{4\pi(R_a^3+R_b^3)}{3k}, 
 \end{equation}
and the volume of $\mathcal{D}$ is 
\begin{equation}
 \begin{split}
{\rm Vol}(\mathcal{D})&
                                   =\pi ( \sqrt{R_a^2-(R_a-d_a)^2} )^2(R_a-d_a)+\int_0^{2\pi \sqrt{R_a^2-(R_a-d_a)^2}  }  \frac{{\rm arcsin} \frac{R_a-d_a}{R_a} }{2\pi}\pi R_a^2 \ d x \\
                               &=   \pi ( {R_a^2-(R_a-d_a)^2} )(R_a-d_a)+\frac{{\rm arcsin} \frac{R_a-d_a}{R_a} }{2\pi}\pi R_a^2(2\pi \sqrt{R_a^2-(R_a-d_a)^2})\\
                                &= \pi ({R_a^2-(R_a-d_a)^2} )(R_a-d_a)+\pi R_a^2  {\rm arcsin} \frac{R_a-d_a}{R_a} \sqrt{R_a^2-(R_a-d_a)^2}.\\
\end{split}
 \end{equation}
 Let $\frac{\pi}{\varphi}={\rm arcsin} \frac{R_a-d_a}{R_a}$, Eq.~(36) can be written as 
\begin{equation}
 \begin{split}
{\rm Vol}(\mathcal{D})=  \pi ({R_a^2-(R_a-d_a)^2} )(R_a-d_a)+\frac{\pi  R_a^3}{\varphi }cos \frac{\pi}{\varphi}.\\
\end{split}
 \end{equation}
 Introducing Eq.~(36) to Eq.~(34), we have 
 \begin{equation}
 \begin{split}
(\frac{2}{3}-\frac{1}{\varphi}cos \frac{\pi}{\varphi}     )\pi R_a^3- \pi ({R_a^2-(R_a-d_a)^2} )(R_a-d_a) =\frac{4\pi(R_a^3+R_b^3)}{3k}.
\end{split}
\end{equation}
 
Let $t_k=R_a-d_a$,  we know
 \begin{equation}
 \begin{split}
 \pi t^3- \pi  R_a^2t +(\frac{2k-4}{3k}-\frac{1}{\varphi}cos \frac{\pi}{\varphi}  )\pi R_a^3- \frac{4\pi R_b^3}{3k}=0.
\end{split}
\end{equation}

To simplify Eq.~(38), let $\mu_k= (\frac{2k-4}{3k}-\frac{1}{\varphi}cos \frac{\pi}{\varphi}  )\pi R_a^3- \frac{4\pi R_b^3}{3k}$,  Eq.~(38) then can be written as
 \begin{equation}
 \begin{split}
 \pi t^3- \pi  R_a^2t +\mu_k=0.
\end{split}
\end{equation}

The positive solution  of  $t_k$ can be
 \begin{equation}
 \begin{split}
&t_k= \frac{R_a^2}{3}+\sqrt[3]{-\frac{\mu_k}{2 \pi}+\sqrt{\frac{\mu_k^2}{4 \pi^2}-\frac{ \pi^3  R_a^3}{27\pi^3}      }       }+\sqrt[3]{-\frac{\mu_k}{2 \pi}-\sqrt{\frac{\mu_k^2}{4 \pi^2}-\frac{ \pi^3  R_a^3}{27\pi^3}      }       }.
\end{split}
\end{equation}

 Based on Eq.~(35),  we know 
  \begin{equation}
 \begin{split}
{\rm Vol}(\mathcal{D})&=\frac{2}{3}\pi R_a^3-\frac{4\pi(R_a^3+R_b^3)}{3k}\\
         &=\frac{2k-4}{3k}\pi R_a^3- \frac{4}{3k}\pi R_b^3>0.      \\               
\end{split}
\end{equation}
Thus, $ \sqrt[3]{\frac{2}{k-2}}R_b< R_a$. We next prove $q^*\in {A}$.  According to  Eq.~(41), we know
  \begin{equation}
 \begin{split}
\pi R_b^3  <       {\frac{k-2}{2}}\pi R_a^3.
\end{split}
\end{equation}

Then,   the following inequalities hold: 1)$\frac{2}{k-2} \pi R_b^3 <            \pi R_a^3$, 2) $\frac{ 2   }{(k-2)k}\pi R_b^3  <       \frac{1}{k}       \pi R_a^3$, and 3) $\frac{ 2   }{(k-2)k}\pi R_b^3+ \frac{k^2-2k-2}{(k-2)k}\pi R_b^3 <       \frac{1}{k}       \pi R_a^3+\frac{k^2-2k-2}{(k-2)k}\pi R_b^3$. Finally, we have:
  \begin{equation}
 \begin{split}
&  \pi R_b^3  \\
&  <          \frac{1}{k}       \pi R_a^3+\frac{k^2-2k-2}{(k-2)k}\pi R_b^3\\
 &   =\frac{1}{ k}\pi (R_a^3+R_b^2)-\frac{2}{(k-2)k}\pi R_a^3\\
 &< \frac{1}{ k}\pi (R_a^3+R_b^3).          \\
\end{split}
\end{equation}

Therefore, ${\rm Vol}({B})<\frac{1}{k}({\rm Vol}( {A})+{\rm Vol}( {B}))$. We thus know: 1) $q_a\in  {A}$ and it is with a minimum distance $d_a$ to the boundary of $S_a$, 2) $q_b\in {B}$, and 3) $q^*\in  {A}$. Therefore, class   $ {B}$ can be deemed as having a very high probability to achieve a zero generalization error  and the position of $q^*$ largely decides the upper bound of the  generalization error of $h$.  
\par In $S_a$ that covers class $A$,  the nearly optimal error region can be bounded as ${\rm Vol}({A})-{\rm Vol}({C})$. We then have the inequality of
  \begin{equation}
 \begin{split}
 \frac{{\rm Vol}( {A})-{\rm Vol}(\mathcal{C})}{{\rm Vol}( {A})+{\rm Vol}( {B})}<err(h,k)<\frac{1}{k}.
\end{split}
\end{equation} 
 Based on the volume equation of the spherical cap in Eq.~(30), we  have
  \begin{equation}
 \begin{split}
\frac{\frac{4}{3}\pi R_a^3- \frac{\pi}{3}(3R_a-d_a)d_a^2}{\frac{4}{3}\pi R_a^3+\frac{4}{3}\pi R_b^3}<err(h,k)<\frac{1}{k}.
\end{split}
\end{equation} 
Then, we have the error bound of 
 \begin{equation}
 \begin{split}
 \frac{4R_a^3-(3R_a-d_a)d_a^2}{4R_a^3+4R_b^3}< err(h,k)<\frac{1}{k}.    
\end{split}
\end{equation} 
Introducing $d_a=R_a-t_k$, Eq.~(46) is written as 
 \begin{equation}
 \begin{split}
 \frac{4R_a^3-(2R_a+t)(R_a-t_k)^2}{4R_a^3+4R_b^3}< err(h,k)< \frac{1}{k}.    
\end{split}
\end{equation}

2) With another assumption of $R_a<R_b$, we follow the same proof skills  of  $R_a\geq R_b$ and know $ \frac{4R_b^3-(3R_b-d_b)d_b^2}{4R_b^3+4R_a^3}< err(h,k)< \frac{1}{k}$,  where $d_b=R_b-t_k'$ and $t_k'= \frac{R_b^2}{3}+\sqrt[3]{-\frac{\mu_k'}{2 \pi}+\sqrt{\frac{\mu_k'^2}{4 \pi^2}-\frac{ \pi^3  R_b^3}{27\pi^3}      }       }+\sqrt[3]{-\frac{\mu_k'}{2 \pi}-\sqrt{\frac{\mu_k'^2}{4 \pi^2}-\frac{ \pi^3  R_b^3}{27\pi^3}      }       } $,  and  $\mu_k'=  (\frac{2k-4}{3k}-\frac{1}{\varphi}cos \frac{\pi}{\varphi}  )\pi R_b^3- \frac{4\pi R_a^3}{3k}$.

We thus conclude that ${\rm min} \Bigg \{ \frac{4R_a^3-(2R_a+t_k)(R_a-t_k)^2}{4R_a^3+4R_b^3},   \frac{4R_b^3(2R_b+t_k')(R_b-t_k')^2}{4R_b^3+4R_a^3}\Bigg\}< err(h,k)<\frac{1}{k}$.
\end{proof}

\subsection*{A.5 Proof of Theorem 2}
\begin{proof}
Given   class  $A$ and $B$   are tightly covered by   ellipsoid   $E_a$ and $E_b$ in a three-dimensional geometry. Let  $R_{a_1}$ be the polar radius of   $E_a$, 
$\{R_{a_2}, R_{a_3}\}$ be the equatorial radii of $E_a$,  $R_{b_1}$ be polar radius of   $E_b$, 
and $\{R_{b_2}, R_{b_3}\}$ be the equatorial radii of $E_b$. Based on Eq.~(10), we know $R_{a_i}< R_a,  R_{b_i}< R_b, \forall i$, where $R_a$ and $R_b$ are the radii of the spheres over the  class $A$ and $B$, respectively.  We follow the same proof technique of Theorem 1 to present the generalization errors of AL  with ellipsoid.

The proof studies two cases: $R_{a_1}\geq R_b$ and $R_{a_1}<R_{b_1}$. 1) If $R_{a_1}\geq R_{b_1}$, $q^*\in  {A}$. Given the volume function ${\rm Vol}(\cdot)$ over the 3-D geometry, we know: ${\rm Vol}( {A})=\frac{4\pi}{3} R_{a_1}R_{a_2}R_{a_3}$ and ${\rm Vol}( {B})=\frac{4\pi}{3} R_{b_1}R_{b_2}R_{b_3}$. Given   the minimum distance of $q_a$ to the boundary of ${A}$ as $d_a$. Let $E_{a}$ by cut off by a cross section $h'$, where 
 $\mathcal{C}$ be the cut and  $\mathcal{D}$ be the ellipsoid cap  of the half-ellipsoid that satisfy
 \begin{equation}
{\rm Vol}(\mathcal{C})=\frac{2}{3}\pi R_a^1R_a^2R_a^3 -{\rm Vol}(\mathcal{D})=\frac{4\pi(R_{a_1}R_{a_2}R_{a_3}+R_{b_1}R_{b_2}R_{b_3})}{3k}, 
 \end{equation}
and the volume of $\mathcal{D}$ is approximated as
\begin{equation}
 \begin{split}
{\rm Vol}(\mathcal{D})&
                                   \approx \pi ( \sqrt{R_{a_1}^2-(R_{a_1}-d_a)^2} )^2(R_{a_1}-d_a)+\int_0^{\pi  R_{a_2} R_{a_3}  }  \frac{arcsin \frac{R_{a_1}-d_a}{R_{a_1}} }{2\pi}\pi R_{a_1}^2 \ d x \\
                               &=   \pi ( {R_{a_1}^2-(R_{a_1}-d_a)^2} )(R_{a_1}-d_a)+\frac{arcsin \frac{R_{a_1}-d_a}{R_a} }{2\pi}\pi R_{a_1}^2(\pi  R_{a_2} R_{a_3} )\\
                                &= \pi ({R_{a_1}^2-(R_{a_1}-d_a)^2} )(R_{a_1}-d_a)+\frac{1}{2}\pi R_{a_1}^2R_{a_2}R_{a_3}  arcsin \frac{R_{a_1}-d_a}{R_{a_1}}.\\
\end{split}
 \end{equation}
 Let $\frac{\pi}{\varphi}={\rm arcsin} \frac{R_{a_1}-d_a}{R_{a_1}}$, Eq.~(49) can be written as 
\begin{equation}
 \begin{split}
{\rm Vol}(\mathcal{D})=  \pi ({R_{a_1}^2-(R_{a_1}-d_a)^2} )(R_{a_1}-d_a)+ \frac{\pi^2}{2\varphi}  R_{a_1}^2R_{a_2}R_{a_3}   .\\
\end{split}
 \end{equation}
 Introducing Eq.~(50) to Eq.~(48), we have 
  \begin{equation}
 \begin{split}
(\frac{2}{3}-\frac{\pi R_{a_1}  }{2\varphi} )\pi R_{a_1}R_{a_2}R_{a_3}- \pi ({R_{a_1}^2-(R_{a_1}-d_a)^2} )(R_{a_1}-d_a) =\frac{4\pi(R_{a_1}R_{a_2}R_{a_3}+R_{b_1}R_{b_2}R_{b_3})}{3k}.
\end{split}
\end{equation}
 
Let $\lambda_k=R_{a_1}-d_a$,  we know
 \begin{equation}
 \begin{split}
 \pi \lambda_k^3- \pi  R_a^2\lambda_k +(\frac{2k-4}{3k}-\frac{\pi R_{a_1}  }{2\varphi}  )\pi R_{a_1}R_{a_2}R_{a_3}- \frac{4\pi R_{b_1}R_{b_2}R_{b_3}}{3k}=0.
\end{split}
\end{equation}

To simplify Eq.~(52), let $\sigma_k= (\frac{2k-4}{3k}-\frac{\pi R_{a_1}  }{2\varphi}  )\pi R_{a_1}R_{a_2}R_{a_3}- \frac{4\pi R_{b_1}R_{b_2}R_{b_3}}{3k}$,  Eq.~(52) then can be written as
 \begin{equation}
 \begin{split}
 \pi \lambda_k^3- \pi  R_{a_1}^2\lambda_k +\sigma_k=0.
\end{split}
\end{equation}

The positive solution  of  $\lambda_k$ can be
 \begin{equation}
 \begin{split}
&\lambda_k= \frac{R_{a_1}^2}{3}+\sqrt[3]{-\frac{\sigma_k}{2 \pi}+\sqrt{\frac{\sigma_k^2}{4 \pi^2}-\frac{ \pi^3  R_{a_1}^3}{27\pi^3}      }       }+\sqrt[3]{-\frac{\sigma_k}{2 \pi}-\sqrt{\frac{\sigma_k^2}{4 \pi^2}-\frac{ \pi^3  R_{a_1}^3}{27\pi^3}      }       }.
\end{split}
\end{equation}

The remaining proof process follows Eq.~(40) to Eq.~(46) of Theorem 1. 
We thus conclude that
\begin{equation}
 \begin{split}
{\rm min} \Bigg \{ \frac{4\prod_i R_{a_i}-(2R_{a_1}+\lambda_k)(R_{a_1}-\lambda_k)^2}{4\prod_i R_{a_i}+4\prod_i R_{b_i}},   \frac{4\prod_i R_{b_i}-(2R_{b_1}+\lambda_k')(R_{b_1}-\lambda_k')^2}{4\prod_i R_{b_i}+4\prod_i R_{a_i}}\Bigg\}< err(h,k)<\frac{1}{k},
\end{split}
\end{equation}
where $i=1,2,3$, $\lambda_k'= \frac{R_{b_1}^2}{3}+\sqrt[3]{-\frac{\sigma_k}{2 \pi}+\sqrt{\frac{\sigma_k'^2}{4 \pi^2}-\frac{ \pi^3  R_{b_1}^3}{27\pi^3}      }       }+\sqrt[3]{-\frac{\sigma_k}{2 \pi}-\sqrt{\frac{\sigma_k^2}{4 \pi^2}-\frac{ \pi^3  R_{a_1}^3}{27\pi^3}      }       }$, and $\sigma_k'= (\frac{2k-4}{3k}-\frac{\pi R_{b_1}  }{2\varphi}  )\pi R_{b_1}R_{b_2}R_{b_3}- \frac{4\pi R_{a_1}R_{a_2}R_{a_3}}{3k}$. In a simple way, $R_{a_1}R_{a_2}R_{a_3}$ and $R_{b_1}R_{b_2}R_{b_3}$ can be written as $\prod_i R_{a_i}$ and $\prod_i R_{b_i}$, i=$1,2,3$, respectively.

\end{proof}

\subsection*{A.6  Proof of Proposition 1}
\begin{proof}
Let ${\rm Cube_a}$ tightly covers $S_a$ with a side length of $2R_a$, and ${\rm Cube}_a'$ tightly covers the cut $\mathcal{C}$,   following theorem 1, we know
\begin{equation}
\begin{split}
            err(h,k)> \frac{{\rm Vol}( {A})-{\rm Vol}(\mathcal{C})}{{\rm Vol}( {A})}> \frac{{\rm Cube_a-Cube_a'}}{{\rm Cube_a}}.
\end{split} 
\end{equation}
Then, we know
\begin{equation}
\begin{split}
            err(h,k)&> \frac{\pi R_a^3-\pi R_a^2d_a}{\pi R_a^3}\\
                      &=1-\frac{d_a}{R_a}.\\
\end{split} 
\end{equation}
 Meanwhile, let  ${\rm Cube_e}$ tightly covers $E_a$ with a side length of $2R_{a_1}$, ${\rm Cube_e}'$ tightly covers $\mathcal{C}$,   following theorem 1, we know
\begin{equation}
\begin{split}
            err(h,k)> \frac{{\rm Vol}( {A})-{\rm Vol}(\mathcal{C})}{{\rm Vol}( {A})}> \frac{{\rm Cube_e-Cube_e'}}{{\rm Cube_e}}.
\end{split} 
\end{equation}
Then, we know
\begin{equation}
\begin{split}
            err(h,k)&> \frac{\pi R_{a_1}R_{a_2}R_{a_3} -\pi  d_aR_{a_2}R_{a_3} }{\pi R_{a_1}R_{a_2}R_{a_3}}\\
                      &=1-\frac{d_a}{R_{a_1}}.\\
\end{split} 
\end{equation}
Since ${R_{a_1}}<R_a$, we know $1-\frac{d_a}{R_a}>1-\frac{d_a}{R_{a_1}}$. It is thus the lower bound of AL  with ellipsoid is tighter than AL  with sphere. Then, Proposition 1 holds.
\end{proof}

\subsection*{A.7 Proof of Proposition 2}
\begin{proof}
Following the proofs of Theorem 3:
\begin{equation}
\begin{split}
            {\rm Pr}[err(h,k)=0]_{{\rm Sphere}}&=1-\frac{t_k^3}{R_a^3}\\
                                          &=   1-\frac{(R_a-d_a)^3}{R_a^3}  \\
                                           &=1- \Bigg(1-\frac{d_a}{R_a}\Bigg)^3.\\
\end{split} 
\end{equation}
Following the proofs Theorem 4:
 \begin{equation}
\begin{split}
            {\rm Pr}[err(h,k)=0]_{\rm Ellipsoid}&=1-\frac{\lambda_{k_1}\lambda_{k_2}\lambda_{k_3}}{R_{a_1}^3}\\
                                          &=   1-\frac{(R_{a_1}-d_a)(R_{a_2}-d_a)(R_{a_3}-d_a)}{R_{a_1}^3}  \\
                                           &>1- \Bigg(1-\frac{d_a}{R_{a_1}}\Bigg)^3.\\
\end{split} 
\end{equation}

Based on Proposition 1,  $1-\frac{d_a}{R_a}>1-\frac{d_a}{R_{a_1}}$, therefore  ${\rm Pr}[err(h,k)=0]_{{\rm Sphere}}<   {\rm Pr}[err(h,k)=0]_{\rm Ellipsoid}$. Then, Proposition 2 is as stated.
\end{proof}

\subsection*{A.8 Proof of Theorems~3 and 4}
 Proof of Theorems~3.
\begin{proof}

Given $S_a$ over class $A$ is defined with ${\bf x_1^2+x_2^2+x_3^2}=R_a^2$. Let ${\bf x_1^2+x_2^2}=r_a^2$ be its 2-D generalization of $S_a$, 
assume that $x_2$ be a variable parameter in this 2-D generalization formula, the     ``volume'' (2-D volume is the area of the geometry object) of it   can be expressed as 
   \begin{equation}
\begin{split}
 {\rm  Vol}_{\bm 2}(S_a)=\int_{-r_a}^{r_a} 2\sqrt{r_a^2-{\bf x_2^2}} d {\bf x_2}.
 \end{split}
\end{equation}
Let $\vartheta$ be an angle variable that satisfies  $x_2=r_a  sin(\vartheta)$, we know $d{\bf x_2}=r_a cos(\vartheta)d \vartheta$. Then, Eq.~(65) is rewritten  as
\begin{equation}
\begin{split}
  {\rm  Vol}_{\bm 2}(S_a)&=\int_{-\pi/2}^{\pi/2} 2r_a^2cos^2(\vartheta)  d\vartheta\\
                          &= \int_{0}^{\pi/2} 4r_a^2cos^2(\vartheta)  d\vartheta.                 \\
 \end{split}
\end{equation}

For a $3$-D geometry, for the variable ${\bf x_3}$, it is over a cross-section which is a ${\bm 2}$-dimensional ball (circle), where the radius of the ball can be expressed as $r_acos(\vartheta)$, s.t. $\vartheta \in [0,\pi]$. Particularly, let  ${\rm  Vol}_{\bm 3}(S_a)$ be the volume of $S_a$ with  $3$ dimensions, the volume of this 3-dimensional sphere  then can be written  as
\begin{equation}
\begin{split}
  {\rm  Vol}_{\bm 3}(S_a)&=\int_{0}^{\pi/2} 2     {\rm Vol}_{\bm 2}(r_acos(\vartheta)) r_a(cos(\vartheta))  d\vartheta.\\
 \end{split}
\end{equation}

With Eq.~(67), volume of a ${\bm d}$-dimensional geometry  can be expressed as the integral over the $({\bf d}\text{-}1)$-dimensional cross-section of $S_a$
\begin{equation}
\begin{split}
  {\rm  Vol}_{\bm m}(S_a)&=\int_{0}^{\pi/2} 2     {\rm Vol}_{\bm m\text{-}1}(r_acos(\vartheta)) r_a(cos(\vartheta))  d\vartheta,\\
 \end{split}
\end{equation}
where  ${\rm Vol}_{\bm {m\text{-}1}}$ denotes the volume of ${\bm{ (m\text{-}1)}}$-dimensional  generalization geometry of $S_a$. 

 Based on Eq.~(68), we know ${\rm  Vol}_{\bm 3}$ can be written as 
\begin{equation}
\begin{split}
  {\rm  Vol}_{\bm 3}(S_a)&=\int_{0}^{\pi/2} 2     {\rm  Vol}_{\bm{2}}(R_acos(\vartheta)) r_a(cos(\vartheta'))  d\vartheta
 \end{split}
\end{equation}
Introducing Eq.~(66) into Eq.~(69), we have
\begin{equation}
\begin{split}
  {\rm  Vol}_{\bm 3}(S_a)&=\int_{0}^{\pi/2} 2     {\rm  Vol}_{\bm{2}}(R_acos(\vartheta')) r_a(cos(\vartheta'))  d\vartheta'\\
                &=\int_{0}^{\pi/2}  \int_{0}^{\pi/2}    
                8(r_acos(\vartheta')^2cos^2(\vartheta')
                 r_a(cos(\vartheta))  d\vartheta'    d\vartheta\\      &=\frac{4}{3}\pi R_a^2 \ \ \ \ \rm{s.t.} R_a=r_a.            \\
 \end{split}
\end{equation}

 Therefore, the generalization analysis results  of the 3-D geometry still can hold in the high dimensional geometry. Then,  
\end{proof}

 Proof of Theorems~4.
\begin{proof}
The integral of Eq.~(67) also can be adopted into the volume of ${\rm Vol}_{\bm 3}(E_a)$ by transforming the area i.e. ${\rm Vol}_{\bm 2}(S_a)$ into  ${\rm Vol}_{\bm 2}(E_a)$. Then, Eq.~(68) follows this transform.  
\end{proof}
 
\begin{IEEEbiography}
[{\includegraphics[width=1in,height=1.25in,clip,keepaspectratio]{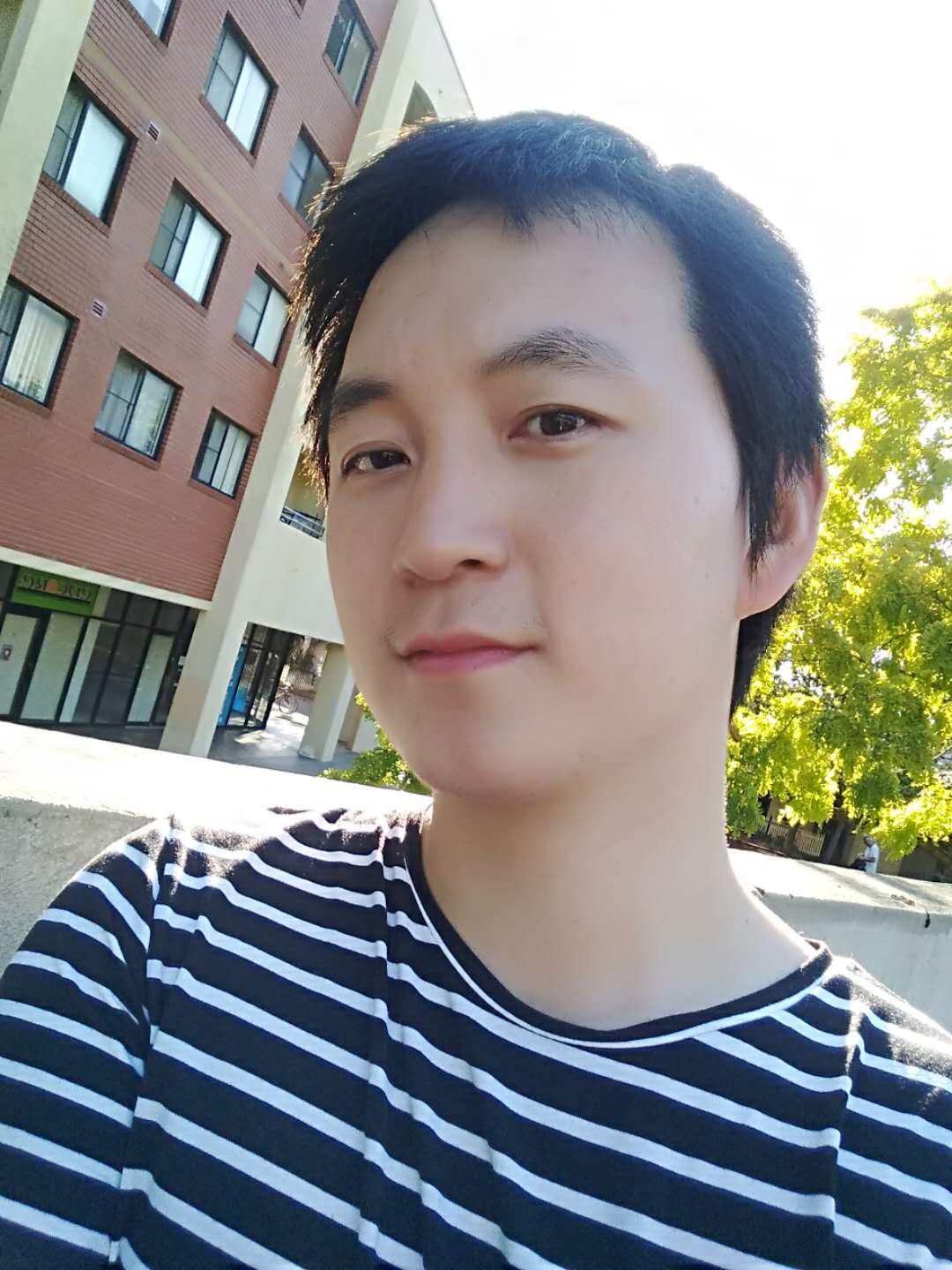}}]{Xiaofeng Cao} completed his PhD study at Australian Artificial Intelligence Institute (AAII), University of Technology Sydney. He is working as a Research Assistant at AAII.  His  research interests include PAC learning theory, agnostic learning algorithm,  generalization analysis, and hyperbolic geometry. 
\end{IEEEbiography}
 
\begin{IEEEbiography}
[{\includegraphics[width=1in,height=1.25in,clip,keepaspectratio]{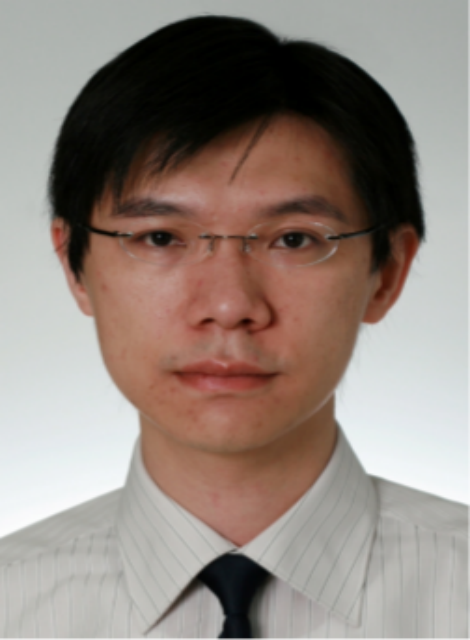}}]{Ivor W. Tsang} is Professor of Artificial Intelligence, at University of Technology Sydney. He is also the Research Director of the Australian Artificial Intelligence Institute.  In 2019, his paper titled ``Towards ultrahigh dimensional feature selection for big data" received the International Consortium of Chinese Mathematicians Best Paper Award. In 2020, Prof Tsang was recognized as the AI 2000 AAAI/IJCAI Most Influential Scholar in Australia for his outstanding contributions to the field of Artificial Intelligence between 2009 and 2019. His works on transfer learning granted him the Best Student Paper Award at International Conference on Computer Vision and Pattern Recognition 2010 and the 2014 IEEE Transactions on Multimedia Prize Paper Award. In addition, he had received the prestigious IEEE Transactions on Neural Networks Outstanding 2004 Paper Award in 2007.  
\par Prof. Tsang serves as a Senior Area Chair for Neural Information Processing Systems and Area Chair for International Conference on Machine Learning, and the Editorial Board for  Journal Machine Learning Research, Machine Learning, 
Journal of Artificial Intelligence Research, and IEEE Transactions on Pattern Analysis and Machine Intelligence. 
\end{IEEEbiography}

\end{document}